\newcommand{\bl}[1]{\textcolor{blue}{#1}}
\newcommand{\red}[1]{\textcolor{red}{#1}}
\definecolor{mypurple}{rgb}{.4,.0,.5}
\def\w{{\bf w}}
\def\y{{\bf y}}
\def\x{{\bf x}}
\def\x{{\mathbf x}}
\def\w{{\bf w}}
\def\u{{\bf u}}
\def\x{{\bf x}}
\def\y{{\bf y}}
\def\z{{\bf z}}
\def\q{{\bf q}}
\def\m{{\bf m}}
\def\b{{\bf b}}
\def\c{{\bf c}}
\def\d{{\bf d}}
\def\f{{\bf f}}
\def\h{{\bf h}}
\def\tr{\mbox{Tr}}
\def\tr{{\rm tr}\,}
\def\cH{{\mathcal H}}
\def\cS{{\mathcal S}}
\def\cL{{\mathcal L}}
\def\be{\begin{equation}}
\def\ee{\end{equation}}
\def\ba{\left[\begin{array}}
\def\ea{\end{array}\right]}
\def\w{{\bf w}}
\def\u{{\bf u}}
\def\x{{\bf x}}
\def\y{{\bf y}}
\def\z{{\bf z}}
\def\q{{\bf q}}
\def\b{{\bf b}}
\def\c{{\bf c}}
\def\d{{\bf d}}
\def\f{{\bf f}}
\def\p{{\bf p}}
\def\1{{\bf 1}}
\def\g{{\bf g}}
\def\0{{\bf 0}}
\def\erf{\mbox{erf}}
\def\erfc{\mbox{erfc}}
\def\tanh{\mbox{tanh}}
\def\asin{\mbox{asin}}
\def\calX{{\cal X}}
\def\calY{{\cal Y}}
\def\mR{{\mathbb R}}
\def\mN{{\mathbb N}}
\def\mE{{\mathbb E}}
\def\mS{{\mathbb S}}
\def\mP{{\mathbb P}}
\def\lp{\left (}
\def\rp{\right )}
\def\w{{\bf w}}
\def\y{{\bf y}}
\def\x{{\bf x}}
\def\x{{\mathbf x}}
\def\w{{\bf w}}
\def\u{{\bf u}}
\def\x{{\bf x}}
\def\y{{\bf y}}
\def\z{{\bf z}}
\def\q{{\bf q}}
\def\b{{\bf b}}
\def\c{{\bf c}}
\def\d{{\bf d}}
\def\f{{\bf f}}
\def\h{{\bf h}}
\def\tr{\mbox{Tr}}
\def\tr{{\rm tr}\,}
\def\cH{{\cal H}}
\def\be{\begin{equation}}
\def\ee{\end{equation}}
\def\ba{\left[\begin{array}}
\def\ea{\end{array}\right]}
\def\w{{\bf w}}
\def\u{{\bf u}}
\def\x{{\bf x}}
\def\y{{\bf y}}
\def\z{{\bf z}}
\def\q{{\bf q}}
\def\b{{\bf b}}
\def\c{{\bf c}}
\def\d{{\bf d}}
\def\f{{\bf f}}
\def\p{{\bf p}}
\def\({\left (}
\def\){\right )}
\def\1{{\bf 1}}
\def\m{{\bf m}}
\def\q{{\bf q}}
\def\g{{\bf g}}
\def\0{{\bf 0}}
\def\cX{{\mathcal X}}
\def\cY{{\mathcal Y}}
\definecolor{darkgreen}{rgb}{0, 0.4,0}
\definecolor{purplebrown}{rgb}{0.5,0.1,0.6}
\definecolor{ultclupcol}{rgb}{0.1,0.5,0.5}
\definecolor{mytrycolor}{rgb}{0.5,0.7,0.2}
\definecolor{ultclupcola}{rgb}{.5,0,.5}
\definecolor{shadebrown}{rgb}{0.1,0.1,0.9}
\definecolor{lightblue}{rgb}{0.2,0,1}
\newtcbox{\xmybox}{on line,
arc=7pt,
before upper={\rule[-3pt]{0pt}{10pt}},boxrule=0pt,
boxsep=0pt,left=6pt,right=6pt,top=0pt,bottom=0pt,enhanced, coltext=blue, colback=white!10!yellow}
\newtcbox{\xmyboxa}{on line,
arc=7pt,
before upper={\rule[-3pt]{0pt}{10pt}},boxrule=0pt,
boxsep=0pt,left=6pt,right=6pt,top=0pt,bottom=0pt,enhanced, colback=white!10!yellow}
\newtcbox{\xmyboxb}{on line,
arc=7pt,
before upper={\rule[-3pt]{0pt}{10pt}},boxrule=1pt,colframe=darkgreen!100!blue,
boxsep=0pt,left=6pt,right=6pt,top=0pt,bottom=0pt,enhanced, colback=white!10!yellow}
\newtcbox{\xmyboxc}{on line,
arc=7pt,
before upper={\rule[-3pt]{0pt}{10pt}},boxrule=.7pt,colframe=blue!100!blue,
boxsep=0pt,left=6pt,right=6pt,top=0pt,bottom=0pt,enhanced, coltext=blue, colback=white!10!yellow}
\newtcbox{\xmytboxa}{on line,
arc=7pt,
before upper={\rule[-3pt]{0pt}{10pt}},boxrule=.0pt,colframe=pink!50!yellow,
boxsep=0pt,left=6pt,right=6pt,top=0pt,bottom=0pt,enhanced, coltext=white, colback=blue!40!red}
\newtcbox{\xmytboxb}{on line,
arc=7pt,
before upper={\rule[-3pt]{0pt}{10pt}},boxrule=.0pt,colframe=pink!50!yellow,
boxsep=0pt,left=6pt,right=6pt,top=0pt,bottom=0pt,enhanced, coltext=white, colback=white!40!green}
\newcommand\subsubsubsection{\@startsection{paragraph}{4}{\z@}{-2.5ex\@plus -1ex \@minus -.25ex}{1.25ex \@plus .25ex}{\normalfont\normalsize\bfseries}}
\newcommand\subsubsubsubsection{\@startsection{subparagraph}{5}{\z@}{-2.5ex\@plus -1ex \@minus -.25ex}{1.25ex \@plus .25ex}{\normalfont\normalsize\bfseries}}
\newtheorem{theorem}{Theorem}
\newtheorem{corollary}{Corollary}
\newtheorem{lemma}{Lemma}
\begin{document}

\begin{singlespace}

\title{Exact capacity of the \emph{wide} hidden layer treelike neural networks with generic activations
}
\author{
\textsc{Mihailo Stojnic
\footnote{e-mail: {\tt flatoyer@gmail.com}} }}
\date{}
\maketitle

\centerline{{\bf Abstract}} \vspace*{0.1in}

 Recent progress in studying \emph{treelike committee machines} (TCM) neural networks (NN) in \cite{Stojnictcmspnncaprdt23,Stojnictcmspnncapliftedrdt23} showed that the Random Duality Theory (RDT) and its a \emph{partially lifted}(pl RDT) variant are powerful tools that can be used for very precise networks capacity analysis. The initial considerations from \cite{Stojnictcmspnncaprdt23,Stojnictcmspnncapliftedrdt23}, related to the famous  \emph{sign} activations, were then extended  to more general activations in \cite{Stojnictcmspnncapdiffactrdt23}, where particularly elegant results were obtained for \emph{any} even number of the \emph{quadratically} and \emph{ReLU} activated hidden layer neurons, $d$. While the results of  \cite{Stojnictcmspnncapdiffactrdt23} are in principle applicable to any type of activations a significant amount of numerical work is often needed to make them practically usable. Here, we consider \emph{wide} hidden layer networks and uncover that certain aspects of such difficulties miraculously disappear. In particular, we employ recently developed \emph{fully lifted} (fl) RDT to characterize the \emph{wide} ($d\rightarrow \infty$) TCM nets capacity. We obtain explicit, closed form, capacity characterizations for a very generic class of the hidden layer activations. While the utilized approach significantly lowers the amount of the needed numerical evaluations, the ultimate fl RDT usefulness and success still require a solid portion of the residual numerical work. To get the concrete capacity values, we take four very famous activations examples: \emph{\textbf{ReLU}}, \textbf{\emph{quadratic}}, \textbf{\emph{erf}}, and \textbf{\emph{tanh}}. After successfully conducting all the residual numerical work for all of them, we uncover that the whole lifting mechanism exhibits a remarkably rapid convergence with the relative improvements no better than $\sim 0.1\%$ happening already on the 3-rd level of lifting. As a convenient bonus, we also uncover that the capacity characterizations obtained on the first and second level of lifting precisely match those obtained through the statistical physics replica theory methods in \cite{ZavPeh21} for the generic and in \cite{BalMalZech19} for the ReLU activations.

\vspace*{0.25in} \noindent {\bf Index Terms: Wide TCM neural networks; Capacity; Fully lifted random duality theory; ReLU, quadratic, erf, tanh}.

\end{singlespace}

\section{Introduction}
\label{sec:intro}

Development of machine learning (ML) and neural networks (NN) concepts experienced a rapid progress over the last 15-20 years. Larger than ever need for efficient handling and interpretation of huge data sets stimulated the invention of many fundamental algorithmic NN breakthroughs. Such an algorithmic progress necessarily  dictated advancement of the accompanying analytical/theoretical justification methodologies. Along the paths of both algorithmic and theoretical advancements, many new concepts have been developed and many old ones have been revisited and brought to practical usability. We are here interested in, possibly, the most important of them all, namely the so-called, network's \emph{memory capacity} (see, e.g., \cite{Schlafli,Cover65,Winder,Winder61,Wendel62,Cameron60,Joseph60,Gar88,Ven86,BalVen87}). As is well known, studying the network capacity has two key components: \textbf{\emph{(i)}} the \emph{theoretical} one which attempts to provide the engineering practitioners with the mathematically precise description of the ultimate underlying network architecture usefulness; and \textbf{\emph{(ii)}} the \emph{practical/algorithmic} one which attempts to provide the users with the concrete (hopefully efficiently implementable) computational methodologies to indeed utilize the network architectures to their ultimate potential. Here we continue the trend established in the recent literature, focus on the first one, and provide a strong theoretical progress on several important capacity related questions. To be able to properly present the technical contributions and to adequately contextualize them within the relevant prior work, we find it convenient to first introduce the needed mathematical formalisms that best describe the underlying NN models.

\subsection{Architecture of the \emph{wide} hidden layer generically activated NNs}
\label{sec:model}

We start with a generic architecture description of the multi-input single-output feed-forward neural networks with $L-2$ hidden layers and $d_i$ ($i\in\{1,2,\dots,L\}$) nodes (neurons) in the $i$-th layer. For the notational convenience, we add two artificial layers, indexed by  $i=1$ and $i=L$, which correspond to the network's input and output, respectively. Although they are artificial, we  refer to them as networks layers to ensure the consistency of the overall indexation. The network operates through the  specification of the activation functions vectors, $\f^{(i)}(\cdot)=[\f_1^{(i)}(\cdot),\f_2^{(i)}(\cdot),\dots,\f_{d_{i+1}}^{(i)}(\cdot)]^T$, where each activation function $\f_{j}^{(i)}(\cdot):\mR^{d_i}\rightarrow \mR$ describes how the $j$-th neuron in layer $i$ operates. One effectively has that the outputs of the nodes in layer $i$ are taken as the inputs of the nodes in layer $i+1$ and then transformed into the new outputs (of the nodes in layer $i+1$) via $\f_{j}^{(i)}(\cdot)$ and matrix of weights, $W^{(i)}\in\mR^{d_{i}\times d_{i+1}}$. Setting $\d=[d_1,d_2,\dots,d_L]$ (with $d_1=n$ and $d_L=1$) and denoting by $\b^{(i)}\in\mR^{d_{i+1}}$ the so-called activation thresholds vectors and by $\x^{(i)}\in\mR^{d_i}$ and $\x^{(i+1)}\in\mR^{d_{i+1}}$ the inputs and outputs of the neurons in layer $i$, one
 has the following:
\begin{center}
     	\tcbset{beamer,nobeforeafter,lower separated=false, fonttitle=\bfseries, coltext=black,
		interior style={top color=yellow!20!white, bottom color=yellow!60!white},title style={left color=black, right color=red!50!blue!60!white},
		before=,after=\hfill,fonttitle=\bfseries,equal height group=AT}
     	\tcbset{beamer,nobeforeafter,lower separated=false, fonttitle=\bfseries, coltext=black,
		interior style={top color=yellow!20!white, bottom color=cyan!60!white},title style={left color=black, right color=red!50!blue!60!white},
		before=,after=\hfill,fonttitle=\bfseries,equal height group=AT, width=.9\linewidth}
\begin{tcolorbox}[title=Mathematical formalism of NN with architecture $A(\d\text{; }\f^{(i)})$:]
\vspace{-.03in}$\mbox{\textbf{input:}} \triangleq \x^{(1)}\quad \longrightarrow$ \hfill
\tcbox[coltext=black,colback=white!65!red!30!yellow,interior style={top color=cyan!20!white, bottom color=yellow!60!white},nobeforeafter,box align=base]{$ \x^{(i+1)}=\f^{(i)}(W^{(i)}\x^{(i)}-\b^{(i)}) $ }\hfill $\longrightarrow \quad \mbox{\textbf{output:}} \triangleq \x^{(L+1)}$.
\vspace{-.2in}\begin{equation}\label{eq:model0}
\vspace{-.2in}\end{equation}
\vspace{-.4in}\end{tcolorbox}
\end{center}
Clearly, the network's architecture, $A(\d;\f^{(i)})$,  is fully specified by the vectors $\d$ and $\f^{(i)}$ (we may on occasion write $A(\d;\f)$ instead of $A(\d;\f^{(i)})$ when $\f^{(i)}$ are identical).

As is the case for the single neurons, one of the most fundamental features of any neural net is their ability to properly store/memorize a large amount of data. To see how the above formalism works in that regard, one can assume, for example, the existence of $m$ data pairs $(\x^{(0,k)},\y^{(0,k)})$, $k\in\{1,2,\dots,m\}$, with $\x^{(0,k)}\in \mR^{n}$ being the $n$-dimensional data vectors and $\y^{(0,k)}\in\mR$ being their corresponding labels. Finding weight matrices $W^{(i)}$  such that
\begin{equation}\label{eq:model3}
\x^{(1)}=\x^{(0,k)}\quad \Longrightarrow \quad \x^{(L+1)}=\y^{(0,k)} \qquad \forall k,
\end{equation}
is then sufficient to relate given data vectors, $\x^{(0,k)}$, to their associated labels, $\y^{(0,k)}$. The \emph{memory capacity, $C(A(\d,\f^{(i)}))$} of the given architecture $A(\d,\f^{(i)})$ is then defined as the largest sample size, $m$, such that (\ref{eq:model3}) holds for any collection of data pairs $(\x^{(0,k)},\y^{(0,k)})$, $k\in\{1,2,\dots,m\}$ with certain prescribed properties. Given the relevance of the capacity in understanding the limits of NNs functioning, determining its precise theoretical value (together with the development of the corresponding computationally efficient algorithmic procedures that achieve it), is of utmost importance. We below provide a collection of results that enable full capacity characterization for many well known architectures $A(\d;\f^{(i)})$.

We state below several technical and structural assumptions that facilitate the presentation. As many of them are aligned with the ones discussed in \cite{Stojnictcmspnncaprdt23,Stojnictcmspnncapliftedrdt23,Stojnictcmspnncapdiffactrdt23}, we avoid unnecessarily repeating them and instead opt for  briefly recalling on the most important ones. We, however, do place a particular emphasis on those that substantially differ and are of particular relevance for the considerations of main interest in this paper.

\subsection{Assumptions related to architecture and data}
\label{sec:assumpt}

As the assumptions that we rely on are rather common and prevalent in the existing literature, we avoid discussing them in deep details and instead focus on precisely stating them.

\noindent \textbf{\emph{Architecture assumptions:}} We consider \textbf{\emph{generic}} zero-threshold activation functions in the hidden layer with the following properties: \emph{\textbf{(i)}} We take $L=3$, $\b^{(i)}=0$, $W^{(1)}=I_{n\times n}$, and $W^{(3)}=\w^T$, where $\w\in\mR^{d_2\times 1}$ (in other words, $W^{(3)}$ is a $d_2$-dimensional row vector). \emph{\textbf{(ii)}}  We set $d\triangleq d_2$ and $\delta\triangleq \delta_1=\frac{d_1}{d_2}=\frac{n}{d}$. While we ultimately consider $d\rightarrow\infty$ scenario, many of the presented mathematical concepts hold for any even $d$. Along the same lines, whenever $d$ is not emphasized as infinite, we assume that it is any given even natural number.
\textbf{\emph{(iii)}} \emph{Identity} neuronal functions, $\f^{(1)}(\x^{(1)})=\x^{(1)}$, are considered in the first (artificial) layer. In the hidden layer, we take \emph{generic} zero-threshold activations $\f_j^{(2)}(\cdot): \mR^{d_2}\rightarrow R$ with $\f_j^{(2)}(\cdot)=\f_k^{(2)}(\cdot)$ for any $j\neq k$. As typical for the TCMs, at the output, the zero-threshold sign activations, $\f^{(3)}(W^{(3)}\x^{(3)}-\b^{(3)})=\mbox{sign} \left ( W^{(3)}\x^{(3)}\right )$, are assumed. We denote this architecture by $A(\d;[\f^{(1)},\f^{(2)},\f^{(3)}])=A(\d;[I,\f^{(2)},\mbox{sign}])
\triangleq A(\d;\f^{(2)})$. \textbf{\emph{(iv)}}
 The matrix of the hidden layer weights, $W^{(2)}$, can be full or with a particular sparse structure. Both options have been considered previously throughout the literature. For example, a particular sparse structuring with the support of $W^{(2)}$'s $j$-th row, $\mbox{supp}\lp W_{j,:}^{(2)}\rp$, satisfying
$\mbox{supp}\lp W_{j,:}^{(2)}\rp=\cS^{(j)}$, with $\cS^{(j)}\triangleq\{(j-1)\delta+1,(j-1)\delta+2,\dots,j\delta\}$, makes the above architecture correspond to the \emph{treelike committee machines} (TCM) which are of our main interest in this paper (alternatively, full $W^{(2)}$ makes the architecture correspond to the \emph{fully connected committee machines} (FCM)).

\noindent \textbf{\emph{Data assumptions:}} \emph{\textbf{(i)}} We assume the typical \emph{binary} labeling $\y_i^{(0,k)}\in\{-1,1\}$ (in addition to being the most standard type of labeling, it is also nicely complemented by the \emph{sign} neuronal choice at the network's output). \emph{\textbf{(ii)}} Data sets that are inseparable are not allowed. This, for example, means that indistinguishable or contradictory pairs, such as $(\x^{(0,k)},\y^{(0,k)})$ and $(\x^{(0,k)},-\y^{(0,k)})$, can not appear. \emph{\textbf{(iii)}} We focus on statistical datasets and particularly focus on $\x^{(0,k)}$ being comprised of iid standard normals. This follows the trend established in the classical single perceptron references (see, e.g., \cite{DTbern,Gar88,StojnicGardGen13,Cover65,Winder,Winder61,Wendel62}) and is expected to allow for a fairly universal statistical treatment. To provide universal capacity upper bounds, it is, however, perfectly sufficient to consider any type of data set (including even nonstatistical ones).


\subsection{Contextualization within relevant prior work}
\label{sec:priorcont}

Given that the problems of our interest are well known and have been studied for longer than a half of century, the underlying related literature is rather vast. As surveying all of it here is infeasible (and way better suited for general review papers), we below focus on the results, which, in our view, are the most relevant and closest to our own.

The first memory capacity considerations started with the spherical perceptrons in the early sixties of the last century. A close connection to several fundamental integral geometry problems was observed and the early results were directly related to some of the classical geometrical/probabilistic works (see, e.g., \cite{Schlafli,Cover65,Wendel62,Joseph60}). The most famous of them establishes that $C(A(1;\mbox{sign}))\rightarrow 2n$ as $n\rightarrow\infty$, which effectively means that, in a large dimensional statistical context, the spherical \emph{sign} perceptron  capacity basically \emph{doubles the dimension} of the data ambient space, $n$. After being initially proven as a remarkable combinatorial geometry fact in \cite{Schlafli,Cover65,Winder,Winder61,Wendel62,Cameron60,Joseph60}, decades later, it was reproved in various forms in a host of different fields ranging from machine learning and pattern recognition to probability and information theory (see, e.g., \cite{BalVen87,Ven86,DT,StojnicISIT2010binary,DonTan09Univ,DTbern,Gar88,StojnicGardGen13,StojnicGardSphErr13}).

\underline{\emph{Sign perceptrons hidden layer activations:}} Extending the single neuron capacity results to the corresponding network ones is not easy. Particularly scarce are results related to TCM NNs. On the other hand, a bit more is known about the FCM ones, but a direct connection between the two is not very apparent. For example, the FCM capacities trivially upper-bound the corresponding TCM ones, but a way more appropriate appears to be viewing the TCM capacities as roughly the FCM ones divided by $d$. An overwhelming majority of the known capacity results are of the \textbf{\emph{scaling}} type and indicate an unavoidable relation to the total number of the network weights, $w=\sum_{i=1}^{L-1} d_id_{i+1}$. For example, the famous VC-dimension provides the upper-bounding scaling $O(w\log(w))$. For the NNs with 1-hidden layer, one has  $w=d_1d_{2}+d_2=(n+1)d$ for the FCM and $w=d_1+d_{2}=n+d$ for the TCM, which, for large $d_i$'s and huge $n$, gives the ``\emph{division by $d$}'' FCM -- TCM capacity relation. When it comes to the corresponding lower bounding, \cite{Baum88} argued that the capacity of a shallow 3-layer net scales as $O(nd)$. A bit stronger version was obtained recently in \cite{Vershynin20}, where, for the networks with more than three layers, the capacity was shown to scale at least as $O(w)$.

\underline{\emph{Scaling versus non-scaling:}}  Obtaining the capacity results that are precise and of the \textbf{\emph{non-scaling}} type is a much harder challenge. Given the simplicity and elegance of the single spherical perceptron capacity, this initially might seem as a bit surprising. However, after recognizing that several decades of a strong effort did not produce much of the analytical progress, the level of difficulty becomes clearer. In fact, after \cite{MitchDurb89} provided simple multi-perceptron extension of the combinatorial considerations of \cite{Cover65,Winder,Winder61,Wendel62,Cameron60,Joseph60}, the progresses completely stalled  until the very recent appearance of \cite{Stojnictcmspnncapliftedrdt23,Stojnictcmspnncaprdt23,Stojnictcmspnncapdiffactrdt23}.  Relying on the Random duality theory (RDT), \cite{Stojnictcmspnncaprdt23}  developed a generic framework for the analysis of TCM networks and obtained strong capacity upper bounds for any given (odd) number of the hidden layer neurons. \cite{Stojnictcmspnncapliftedrdt23} went then a bit further, utilized a partially lifted (pl) RDT variant and substantially lowered the bounds proven in \cite{Stojnictcmspnncaprdt23}.

\underline{\emph{Different hidden layer activations:}} The above discussion highlighted the analytical hardness as the first of the two key obstacles one typically faces when trying to transition from the single to multi neuron architectures.  The second one is of the algorithmic type and relates to the \emph{sign} perceptrons being noncontinuous functions. Namely, it is usually not very easy to design computationally provably efficient network training strategies for such objects. This amplifies the need for potentially less simple but easier to use activation factions. Since the discreteness is usually perceived as the main source of the  sign perceptrons analytical and algorithmic hardness, consideration of continuous activations positions itself as a promising alternative. Many of such activations have already found their place in various NN architectures, include the ReLU, quadratic, tanh, erf and so on. As things are, at least, algorithmically a bit more favorable when such activations are in place, a little bit more is known regarding their capacities as well. For example, \cite{Yama93} first suggested for deep nets, and  \cite{GBHuang03} later on proved for 4-layer nets, that the capacity is at least $O(w)$ for sigmoids. \cite{ZBHRV17,HardrtMa16} then showed similar results for ReLU while keeping an additional number of nodes restriction which was later on removed in \cite{YunSuJad19} for both tanh and ReLU. After \cite{Stojnictcmspnncapliftedrdt23,Stojnictcmspnncaprdt23} first introduced a generic framework for precise  \textbf{\emph{non-scaling}} capacity analysis of the \emph{sign} hidden layer activations, \cite{Stojnictcmspnncapdiffactrdt23} extended the framework so that it can handle various different activations. It then particularly focused on three types of activations, linear, quadratic, and ReLU. For the linear, \cite{Stojnictcmspnncapdiffactrdt23} determined the exact value of the capacity and showed that, for \emph{any} width (the number of the hidden layer neurons),  it matches the corresponding one of the single spherical perceptron. On the other hand, for the quadratic and ReLU, it obtained the plain and the pl RDT capacity upper bounds. All considerations from \cite{Stojnictcmspnncapliftedrdt23,Stojnictcmspnncaprdt23,Stojnictcmspnncapdiffactrdt23} were done for the networks with any given (even) number of hidden layer neurons, $d$.

\underline{\emph{Statistical physics -- Replica methods:}} Notorious difficulty that moving from the
\emph{scaling} (say, of the $O(\cdot)$ type) to the \emph{precise} capacity descriptions  imposes was already recognized in the early eighties of the last century. As at that time there were no available powerful mathematical techniques that could handle such a move, statistical physics replica methods positioned themselves as an excellent (and basically only known) alternative. Despite their analytical non-rigorousness, they produce expectedly  \emph{precise} final results. The foundational replica concepts within the analysis of the NN capacities were laid out in the pioneering works \cite{GarDer88,Gar88}, where various single perceptron forms were discussed. Utilizing those concepts, a few
years later, \cite{EKTVZ92,BHS92} studied the same TCM architecture that we study here  (as well as the related FCM one). Considering the sign activations, and the so-called replica symmetry formalism, they obtained  the capacity predictions for any number of the neurons in the hidden layer, $d$. They also established the corresponding large $d$ scaling behavior. Each of these results was proven as mathematically rigorous capacity upper bound, the first one in \cite{Stojnictcmspnncaprdt23} and the second one in \cite{Stojnictcmspnncapliftedrdt23}. \cite{EKTVZ92,BHS92}, however, went a step further and showed that their replica symmetry large $d$ predictions violate the mathematically rigorous ones of \cite{MitchDurb89}. To remedy such a contradiction, \cite{EKTVZ92,BHS92} then proceeded by studying the first level of replica symmetry breaking (rsb) and showing that it lowers the capacity. For both the committee and the so-called parity machines (PM), relevant large $d$ scaling rsb considerations were presented in \cite{MonZech95} as well (for earlier PM related replica considerations see also, e.g., \cite{BarKan91,BHK90}). On the other hand, for the FCM architecture, \cite{Urban97,XiongKwonOh97} obtained  a bit later the large $d$ scaling that matches the upper-bounding one of \cite{MitchDurb89}. Of particular relevance, however, to our work are two very recent lines of work \cite{BalMalZech19,ZavPeh21} where the \emph{wide} hidden layer TCM architectures were considered. In \cite{BalMalZech19} the 1rsb capacity predictions for the ReLU activations were obtained. A bit later, \cite{ZavPeh21} moved things further and obtained analogous 1rsb predictions for several other activations, including the linear, ReLU, quadratic, erf, and tanh among others.

\underline{\emph{Practical achievability:}} We also mention another line of work that attracted a strong interest over the last several years. It relates to the design and analysis of the efficient network training algorithms that could potentially approach the capacity. After it was empirically observed that the simple gradient based methods perform reasonably well in this context while requiring only the so-called mild over-parametrization (moderately larger number of free parameters, $w$, compared to the data set size, $m$), a lot of effort was put in providing theoretical justifications of such a phenomenon. More on a solid progress made in these directions in recent years can be found in, e.g., \cite{DuZhaiPoc18,GeWangZhao19,ADHLW19,JiTel19,LiLiang18,OymSol19,RuoyuSun19,SongYang19,ZCZG18}. While these results are mostly oriented towards the FCMs, they are also extendable to the TCMs as well.

\subsubsection{Contributions}
\label{sec:contrib}

The \emph{precise} analytical characterization of the so-called $n$-\emph{scaled} memory capacity of the TCM NNs  with generic neuronal activations, $\f^{(2)}$, in the \emph{wide} hidden layer is the main object of our study. In other words, we are interested in determining
\begin{equation}\label{eq:model4}
\alpha_c(\infty;\f^{(2)})\triangleq \lim_{d\rightarrow\infty} \alpha_c(d;\f^{(2)})\triangleq \lim_{d\rightarrow\infty} \lim_{n\rightarrow\infty} \frac{C(A([n,d,1];\f^{(2)}))}{n},
\end{equation}
where we often for brevity instead of $\alpha_c(\infty;\f^{(2)})$ write just $\alpha_c(\infty)$. The generic framework for the analysis of sign activations, established in\cite{Stojnictcmspnncaprdt23,Stojnictcmspnncapliftedrdt23} (relying on the RDT and pl RDt principles), was extended to various different activations in
\cite{Stojnictcmspnncapdiffactrdt23}. While the framework works for any given even number of hidden layer neurons, it also heavily relies on the underlying numerical evaluations. We, here consider \emph{wide} hidden layers ($d\rightarrow\infty$), and uncover that a significant portions of numerical difficulties miraculously disappears.

\underline{\emph{A summary of the key results:}}  \emph{\textbf{(i)}}  Relying on \cite{Stojnicflrdt23}, we establish \emph{fully lifted} (fl) RDT, based framework for the capacity analysis of \emph{wide} hidden layer TCM NNs with generic activations. \emph{\textbf{(ii)}} For several particular activations, ReLU, quadratic, erf, and tanh, we then obtain explicit $d\rightarrow\infty$ closed form capacity characterizations. \emph{\textbf{(iii)}} While we uncover that, compared to \cite{Stojnictcmspnncapdiffactrdt23}, a significant amount of required numerical work disappears, to have the obtained capacity characterizations become practically operational, a substantial amount of the residual numerical work is still needed. For all the considered activations, we successfully conduct the needed numerical evaluations and obtain the concrete capacity values as well.  \emph{\textbf{(iv)}} We observe a rather rapid convergence of the lifting mechanism with the relative improvement no better than $\sim 0.1\%$ achieved already on the third level of full lifting. Moreover, we uncover that the first and second level of lifting characterizations match the ones obtained through the replica symmetry and 1rsb analyses in \cite{ZavPeh21} for a spectrum of different activations and in \cite{BalMalZech19} for the ReLU activation. Some of the concrete estimates are also shown in Table \ref{tab:tab0}. The results from Table \ref{tab:tab0} are additionally complemented with their visual representations in  Figure \ref{fig:fig0}. Both, the strong effect/benefit of the fl lifted RDT as well as its a rapid convergence are rather obvious.

\begin{table}[h]
\caption{\emph{Wide} treelike net capacity -- lifting mechanism progress for different activations;   $d\rightarrow\infty$; $n\rightarrow\infty$}\vspace{.1in}
\centering
\def\arraystretch{1.2}
{\small
\begin{tabular}{c||c|c|c|c}\hline\hline
  \hspace{.2in}   \textbf{Memory capacity} \hspace{.2in}   &  \multicolumn{4}{c}{\textbf{Activation}}                   \\
    \cline{2-5}
 \hspace{.0in} ($r$ -- lifting level)  &  ReLU & Quad & $\erf$ & $\tanh$  \\  \hline\hline
\hspace{.0in}$\alpha_c^{(1,f)}(\infty)$  ($1$-sfl RDT) \hspace{.0in}   & $\quad \mathbf{2.9339}\quad $ & $\quad \mathbf{4}\quad $ &  $\quad \mathbf{ 2.4514}\quad $  &  $\quad \mathbf{ 2.3556}\quad $    \\ \hline\hline
\hspace{.0in}$\alpha_c^{(2,f)}(\infty)$  ($2$-sfl RDT)    & $\mathbf{2.6643}$  & $\mathbf{3.3750}$   &  $\mathbf{2.3750}$ & $\mathbf{2.3063}$   \\ \hline\hline
\hspace{.0in}$\alpha_c^{(3,f)}(\infty)$  ($3$-sfl RDT)    & $\mathbf{2.6534}$  & $\mathbf{3.3669}$ &  $\mathbf{2.3744}$ & $\mathbf{2.3058}$   \\ \hline\hline
  \end{tabular}
  }
\label{tab:tab0}
\end{table}

\begin{figure}[h]
\centering
\centerline{\includegraphics[width=1\linewidth]{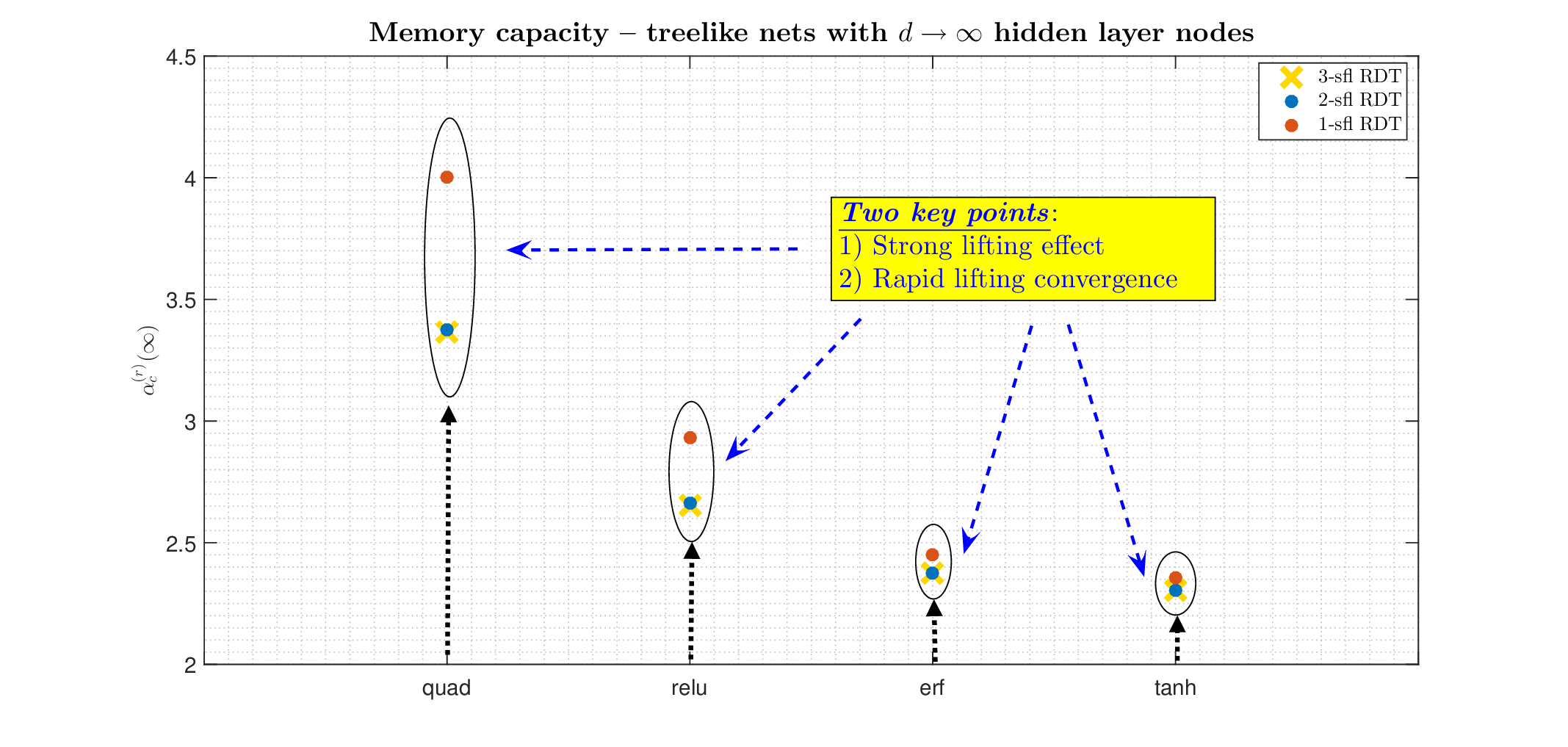}}
\caption{Memory capacity -- treelike nets with $d\rightarrow\infty$ hidden layer neurons; different activations}
\label{fig:fig0}
\end{figure}

\section{Mathematical formalism of network functioning}
\label{sec:amathdataproc}

To ensure the easiness of writing and overall presentation,  we set $W\triangleq W^{(2)}$, recall on $W^{(1)}=I$ and $W^{(3)}=\w^T$, and write for any $k\in\{1,2,\dots,m\}$
\begin{equation}\label{eq:ta1}
\x^{(1)}=\x^{(0,k)}  \quad \Longrightarrow \quad  \x^{(2)}=\f^{(1)}(W^{(1)}\x^{(1)})=\f^{(1)}(\x^{(1)})=\x^{(1)}=\x^{(0,k)},
\end{equation}
and
\begin{equation}\label{eq:ta2}
\x^{(2)}=\x^{(0,k)}  \quad \Longrightarrow \quad  \x^{(3)}=\f^{(2)}(W^{(2)}\x^{(2)})=\f^{(2)}(W\x^{(0,k)}),
\end{equation}
and
\begin{equation}\label{eq:ta3}
 \x^{(3)}=\f^{(2)}(W\x^{(0,k)}) \quad \Longrightarrow \quad  \x^{(4)}=\f^{(3)}(W^{(3)}\x^{(3)})=\f^{(3)}(\w^T\x^{(3)})
 =\mbox{sign}(\w^T  \f^{(2)}(W\x^{(0,k)})).
\end{equation}
After connecting beginning in (\ref{eq:ta1}) and end in (\ref{eq:ta3}), one obtains the following explicit relation between the network's input and output
\begin{equation}\label{eq:ta4}
\x^{(1)}=\x^{(0,k)} \quad \Longrightarrow \quad  \x^{(4)} =\mbox{sign}(\w^T  \f^{(2)}(W\x^{(0,k)})).
\end{equation}
The necessary and sufficient condition for network to operate properly, is then the following
\begin{equation}\label{eq:ta5}
\y^{(0,k)}=\mbox{sign}(\w^T  \f^{(2)}(W\x^{(0,k)})).
\end{equation}
Moreover, after setting
\begin{equation}\label{eq:ta6}
\y\triangleq \begin{bmatrix}
               \y^{(0,1)} & \y^{(0,2)} & \dots & \y^{(0,m)}
            \end{bmatrix}^T   \qquad \mbox{and} \qquad X\triangleq \begin{bmatrix}
               \x^{(0,1)} & \x^{(0,2)} & \dots & \x^{(0,m)}
            \end{bmatrix}^T,
\end{equation}
it is not that difficult to see that (\ref{eq:ta5}) can be rewritten in generic matrix form as
\begin{equation}\label{eq:ta7}
\left (\exists W\in\mR^{d\times n}| \|\y^T-\mbox{sign}(\w^T  \f^{(2)}(WX^T))\|_2=0 \right )  \quad \Longleftrightarrow \quad \left ( \left ( X,\y \right ) \mbox{is memorized} \right ),
\end{equation}
 where the $k$-th data pair, $\left ( \x^{(0,k)},\y^{(0,k)} \right )$, are the $k$-th row of $m\times n$ matrix $X$ and the $k$-th element of $m\times 1$ column vector $\y$. One then has the following (alternative to (\ref{eq:ta7})):
\begin{center}
 	\tcbset{beamer,sidebyside,lower separated=false, fonttitle=\bfseries, coltext=black,
		interior style={top color=yellow!20!white, bottom color=cyan!60!white},title style={left color=black, right color=red!50!blue!60!white},
		width=(\linewidth-4pt)/4,before=,after=\hfill,fonttitle=\bfseries,equal height group=AT}
 	\begin{tcolorbox}[title=Algebraic memorization characterization of the $\f^{(2)}$-activated hidden layer TCMs:,sidebyside,width=1\linewidth]
\vspace{-.15in}\begin{eqnarray}\label{eq:ta8}
\hspace{-.3in} 0=\xi\triangleq \min_{W,Q} & & \hspace{-.1in}\|\y-\mbox{sign}(\f^{(2)}(Q) \w)\|_2 \nonumber \\
\hspace{-.5in} \mbox{subject to} & & \hspace{-.1in}XW^T=Q
\end{eqnarray}
 \tcblower
 \hspace{-.2in}$\Longleftrightarrow$ \hspace{.1in} Data set $\left (X,\y \right )$ is properly memorized.
 		\vspace{-.0in}
 	\end{tcolorbox}
\end{center}
As it will soon be clear, the above is the key mathematical problem on the path towards characterizing the network memorization capabilities. We find it useful for what follows to slightly reformulate the above optimization. To that end we first observe that it can be rewritten as
\begin{eqnarray}\label{eq:ta9}
\xi=\min_{Z,Q} & & \|\y-\mbox{sign}(\f^{(2)}(Q) \w)\|_2 \nonumber \\
 \mbox{subject to} & & XZ=Q,
\end{eqnarray}
where cosmetic change, $Z=W^T$, is only for facilitating overall writing. We continue the trend of \cite{Stojnictcmspnncapliftedrdt23,Stojnictcmspnncaprdt23,Stojnictcmspnncapdiffactrdt23} and consider the TCM architecture, with a sparse $Z$ that ensures the \emph{treelike} architecture. In other words, we consider $Z$ with the only nonzero elements in the $j$-th column are in the rows from the following set $\cS^{(j)}\triangleq\{(j-1)\delta+1,(j-1)\delta+2,\dots,j\delta\}$. Keeping in mind this $Z$ specialization and the insensitiveness of (\ref{eq:ta9}) with respect to the
 $Z$ or $Q$ scalings, one can further write
\begin{eqnarray}\label{eq:ta9a}
\xi=\min_{Z,Q} & & \|\1-\mbox{sign}(\f^{(2)}(Q) \w)\|_2 \nonumber \\
 \mbox{subject to} & & XZ=Q\nonumber \\
  & & \|Z_{:,j}\|_2=1 \nonumber \\
  & & \mbox{supp}(Z_{:,j})=\cS^{(j)}, 1\leq j\leq d,
\end{eqnarray}
with $Z_{:,j}$ being the $j$-th column of $Z$ and $\|Z_{:,j}\|_2$ its Euclidean norm. A bit of additional cosmetic rewriting of (\ref{eq:ta9a}) gives
\begin{eqnarray}\label{eq:ta9aa0}
\xi=\min_{\z^{(j)},Q} & & \|\1-\mbox{sign}(\f^{(2)}(Q) \w)\|_2 \nonumber \\
 \mbox{subject to} & & X^{(j)}\z^{(j)}=Q_{:,j}, 1\leq j\leq d, \nonumber \\
  & & \|\z^{(j)}\|_2=1 \nonumber \\
  & & \z^{(j)}\in\mR^{\delta}, Q\in\mR^{m\times d},
\end{eqnarray}
where $X^{(j)}=X_{:,\cS^{(j)}}\in\mR^{m\times \delta}$. As emphasized earlier, we consider statistical data sets with elements of $X$ being iid standard normals, which, due to rotational symmetry and without loss of generality, allows to take all the elements of $\y$ equal to 1. For short, in what follows, we therefore take $\y=\1$ (where $\1$ is the all ones column vector of appropriate dimensions). Keeping this in mind, the following lemma, taken from \cite{Stojnictcmspnncapdiffactrdt23}, provides a precise resulting optimization representation of the network memorization property. It is in fact structurally a mirrored analogue to Lemma 1 from \cite{Stojnictcmspnncaprdt23}.
\begin{lemma}(\cite{Stojnictcmspnncapdiffactrdt23} Algebraic optimization representation)
Assume a 1-hidden layer TCM with architecture $A([n,d,1];\f^{(2)})$. Any given data set $\left (\x^{(0,k)},1\right )_{k=1:m}$ can not be properly memorized by the network if
\begin{equation}\label{eq:ta10}
  f_{rp}(X)>0,
\end{equation}
where
\begin{equation}\label{eq:ta11}
f_{rp}(X)\triangleq \frac{1}{\sqrt{n}}\min_{\|\z^{(j)}\|_2=1,Q} \max_{\Lambda\in\mR^{m\times d}} \|\1-\mbox{\emph{sign}}(\f^{(2)}(Q) \w)\|_2 +\sum_{j=1}^{d}(\Lambda_{:,j})^TX^{(j)}\z^{(j)} -\tr(\Lambda^TQ),
\end{equation}
and $X\triangleq \begin{bmatrix}
               \x^{(0,1)} & \x^{(0,2)} & \dots & \x^{(0,m)}
            \end{bmatrix}^T$.
  \label{lemma:lemma1}
\end{lemma}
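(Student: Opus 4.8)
The plan is to show that $f_{rp}(X)$ is nothing but a Lagrangian (exact-penalty) rewriting of the constrained program (\ref{eq:ta9aa0}) that defines $\xi$, and in fact that $\sqrt{n}\,f_{rp}(X)=\xi$. Once this identity is in hand the lemma is immediate: $\xi$ is the minimum value of a Euclidean norm, hence $\xi\geq 0$, and by the memorization characterization (\ref{eq:ta9aa0}) the data set is properly memorized exactly when $\xi=0$; therefore $f_{rp}(X)>0$ forces $\xi>0$, i.e.\ non-memorization. The whole content is thus concentrated in verifying the penalty identity, and no minimax/duality theorem (Sion, strong duality) is needed, because the inner maximization is evaluated pointwise, for each fixed $(\z^{(j)},Q)$, before the outer minimization is taken.

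First I would isolate the $\Lambda$-dependent part of the objective in (\ref{eq:ta11}). Using $\tr(\Lambda^TQ)=\sum_{j=1}^{d}(\Lambda_{:,j})^TQ_{:,j}$, the two multiplier terms combine into the single linear-in-$\Lambda$ functional $\sum_{j=1}^{d}(\Lambda_{:,j})^T\big(X^{(j)}\z^{(j)}-Q_{:,j}\big)$, while the remaining term $\|\1-\mbox{sign}(\f^{(2)}(Q)\w)\|_2$ does not involve $\Lambda$. For any fixed $(\z^{(j)},Q)$ I would then compute $\max_{\Lambda\in\mR^{m\times d}}$ of this functional directly: if $X^{(j)}\z^{(j)}=Q_{:,j}$ for every $j$, the functional vanishes identically and the maximum is $0$; if $X^{(j_0)}\z^{(j_0)}\neq Q_{:,j_0}$ for some $j_0$, then choosing $\Lambda_{:,j_0}=t\,(X^{(j_0)}\z^{(j_0)}-Q_{:,j_0})$ and letting $t\to+\infty$ drives the functional to $+\infty$. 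Hence the inner maximum is exactly the indicator of the feasible set $\{X^{(j)}\z^{(j)}=Q_{:,j},\ 1\le j\le d\}$, taking value $0$ on it and $+\infty$ off it.

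Substituting this back, the outer minimization over $\{\|\z^{(j)}\|_2=1\}$ and $Q$ automatically discards all infeasible points (where the inner value is $+\infty$) and, on the feasible set, leaves precisely the objective $\|\1-\mbox{sign}(\f^{(2)}(Q)\w)\|_2$. This reproduces exactly the constrained program (\ref{eq:ta9aa0}), so $\sqrt{n}\,f_{rp}(X)=\xi$, and the claimed implication follows as above. (The same computation in fact yields the full equivalence $f_{rp}(X)>0\Leftrightarrow$ non-memorization, of which the lemma states only the direction needed for upper bounds.)

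The step I expect to require the most care --- though it is conceptually simple --- is the unboundedness argument for infeasible points, i.e.\ confirming that the linear functional in $\Lambda$ is genuinely unbounded above whenever a single constraint is violated; this is exactly what makes the reformulation an exact equality rather than a mere relaxation/lower bound. I would also double-check the cosmetic bookkeeping: the $1/\sqrt{n}$ normalization (a positive rescaling that does not affect the sign of $f_{rp}(X)$), the reduction $\y=\1$ already justified by the rotational symmetry argument preceding the lemma, and the fact that $\w$ is treated as a fixed (not optimized) output-layer vector throughout, consistent with (\ref{eq:ta8})--(\ref{eq:ta9aa0}).
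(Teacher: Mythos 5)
Your argument is correct. The paper itself does not spell out a proof---it simply declares the lemma an ``immediate consequence'' of Lemma 1 in \cite{Stojnictcmspnncaprdt23}---so your self-contained derivation is a legitimate and more explicit route to the same statement. The key step, evaluating $\max_{\Lambda}\sum_{j=1}^{d}(\Lambda_{:,j})^T\lp X^{(j)}\z^{(j)}-Q_{:,j}\rp$ pointwise to obtain the $\{0,+\infty\}$ indicator of the feasible set, is exactly the mechanism that turns (\ref{eq:ta11}) into the constrained program (\ref{eq:ta9aa0}) and yields $\sqrt{n}\,f_{rp}(X)=\xi$; since $\xi\geq 0$ always and $\xi=0$ when the data set is memorized (there then exists a feasible $W$ attaining objective $0$ by (\ref{eq:ta8})), the contrapositive gives the stated implication with no minimax theorem needed. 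One small caution about your parenthetical remark: the \emph{full} equivalence $f_{rp}(X)>0\Leftrightarrow$ non-memorization additionally requires that $\xi=0$ force memorization, and because $\mbox{sign}(\cdot)$ is discontinuous the infimum could in principle equal $0$ without being attained; that converse direction rests on the characterization (\ref{eq:ta8}) being an attained minimum rather than on your penalty computation alone. The lemma as stated only needs the safe direction, which your proof establishes cleanly.
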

\begin{proof}
Immediate consequence of Lemma 1 in \cite{Stojnictcmspnncaprdt23}.
\end{proof}

Throughout the rest of the paper we consider mathematically the most challenging, so-called \emph{linear}, regime with
\begin{equation}\label{eq:ta14}
  \alpha\triangleq \lim_{n\rightarrow\infty}\frac{m}{n}.
\end{equation}

\subsection{Connecting network functioning and (partially reciprocal) \emph{free energy}}
\label{secrfpsfe}

\emph{Free energies} are well known and almost unavoidable objects in many statistical physics considerations. Here, we view them as purely mathematical objects. We below give a bit of a preview related to the importance of these objects in studying neural networks capacities. To introduce their mathematical representation relevant to the problems of our interest here, we start by defining the following, so-called, \emph{bilinearly summed Hamiltonian}
\begin{equation}
\cH_{sq}(G^{(j_w)})= \sum_{j_w=1}^{d} \lp\y^{(j_w)}\rp^TG^{(j_w)}\x^{(j_w)},\label{eq:ham1}
\end{equation}
and its, so to say, \emph{partially reciprocal} partition function
\begin{equation}
Z_{sq}(\beta,G^{(j_w)})=\sum_{\x^{(j_w)}\in\cX^{(j_w)}} \lp \sum_{\y^{(j_w)}\in\cY^{(j_w)}}e^{\beta\cH_{sq}(G^{(j_w)})}\rp^{-1}.  \label{eq:partfun}
\end{equation}
It is important to note right here at the beginning that for $d=1$ one gets the usual \emph{bilinear} Hamiltonian. Indexing and summing over $j_w$ will effectively correspond to the extension of the width of the hidden layer -- the key component of the considered network architecture. For starters, we take $\cX^{(j_w)}$ and $\cY^{(j_w)}$ in (\ref{eq:partfun}) as general sets. Later on, throughout the presentation, we make the necessary specializations. One also notes that the inner summation factors in in a reciprocal fashion making the overall partition function appear as seemingly different from the typically seen counterparts in the statistical physics literature. The thermodynamic limit of the average of such ``\emph{partially reciprocal}'' free energy is then
\begin{eqnarray}
f_{sq}(\beta) & = & \lim_{n\rightarrow\infty}\frac{\mE_{G^{(j_w)}}\log{(Z_{sq}(\beta,G^{(j_w)})})}{\beta \sqrt{n}} \nonumber \\
& = &\lim_{n\rightarrow\infty} \frac{\mE_{G^{(j_w)}}\log\lp \sum_{\x^{(j_w)}\in\cX^{(j_w)}} \lp \sum_{\y^{(j_w)}\in\cY^{(j_w)}}e^{\beta\cH_{sq}(G^{(j_w)})}\rp^{-1}\rp}{\beta \sqrt{n}} \nonumber \\
& = &\lim_{n\rightarrow\infty} \frac{\mE_{G^{(j_w)}}\log\lp \sum_{\x^{(j_w)}\in\cX^{(j_w)}} \lp \sum_{\y^{(j_w)}\in\cY^{(j_w)}}e^{\beta \sum_{j_w=1}^{d} \lp\y^{(j_w)}\rp^TG^{(j_w)}\x^{(j_w)})}\rp^{-1}\rp}{\beta \sqrt{n}}.\label{eq:logpartfunsqrt}
\end{eqnarray}
The so-called ``zero-temperature'' ($T\rightarrow 0$ or  $\beta=\frac{1}{T}\rightarrow\infty$) regime gives the ground state special case
\begin{eqnarray}
f_{sq}(\infty)   \triangleq    \lim_{\beta\rightarrow\infty}f_{sq}(\beta) & = &
\lim_{\beta,n\rightarrow\infty}\frac{\mE_{G^{(j_w)}}\log{(Z_{sq}(\beta,G^{(j_w)})})}{\beta \sqrt{n}} \nonumber \\
& = &
 \lim_{n\rightarrow\infty}\frac{\mE_{G^{(j_w)}} \max_{\x^{(j_w)}\in\cX^{(j_w)}}  -  \max_{\y^{(j_w)}\in\cY^{(j_w)}} \sum_{j_w=1}^{d}  \lp\y^{(j_w)}\rp^TG^{(j_w)}\x^{(j_w)}}{\sqrt{n}} \nonumber \\
& = & - \lim_{n\rightarrow\infty}\frac{\mE_{G^{(j_w)}} \min_{\x^{(j_w)}\in\cX^{(j_w)}}  \max_{\y^{(j_w)}\in\cY^{(j_w)}} \sum_{j_w=1}^{d} \lp\y^{(j_w)}\rp^TG^{(j_w)}\x^{(j_w)}}{\sqrt{n}}. \nonumber \\
  \label{eq:limlogpartfunsqrta0}
\end{eqnarray}
One can then also trivially rewrite (\ref{eq:limlogpartfunsqrta0}) as
\begin{eqnarray}
-f_{sq}(\infty)
& = &  \lim_{n\rightarrow\infty}\frac{\mE_{G^{(j_w)}} \min_{\x^{(j_w)}\in\cX^{(j_w)}}  \max_{\y^{(j_w)}\in\cY^{(j_w)}} \sum_{j_w=1}^{d} \lp\y^{(j_w)}\rp^TG^{(j_w)}\x^{(j_w)}}{\sqrt{n}}.
  \label{eq:limlogpartfunsqrt}
\end{eqnarray}
Connecting $j$ to $j_w$, $X^{(j)}$ to $G^{(j_w)}$,  $\Lambda_{:,j}$ to $\y^{(j_w)}$, and $\z^{(j)}$ to $\x^{(j_w)}$ hints that the network functioning may indeed be related to the above introduced partially reciprocal free energy. However, quite a few of additional preliminaries need to be addressed before one can definitely make such a connection. One particular thing from (\ref{eq:limlogpartfunsqrt}) should be kept in mind for later on though. Namely, while the connection between $f_{rp}(X)$ and $f_{sq}(\infty)$ seems apparent, direct studying of $f_{sq}(\infty)$ might not be very easy. Somewhat paradoxically, we may instead find it as more beneficial to first study $f_{sq}(\beta)$ for a general $\beta$  and then to eventually specialize the obtained results to the above mentioned ground state, $\beta\rightarrow\infty$, regime. As the analysis will be rather heavy, we may also, in the interest of easing the exposition, on occasion neglect some terms which are of no importance in the ground state considerations.

The above hints at the potential role that the free energies can play in studying the networks capacities. Still, to be able to fully exploit such a potential connection, one would need to develop a mechanism to study the free energies themselves. Such a mechanism is precisely what we discuss next.

\section{Network memorization through the prism of sfl RDT}
\label{sec:randlincons}

To ensure a smooth and  proper connection between the network functioning and the sfl RDT, we find it convenient to first revisit some of the sfl RDT basics.

\subsection{Basics of sfl RDT}
\label{sec:basicssflrdt}

To make writing easier and initial considerations a bit smoother, we, for time being, take $d=1$, which allows to ignore all $j_w$ indexing. One of the key observations that enables pretty much everything that follows is then the recognition that a slightly changed variant of the (partially reciprocal) free energy from (\ref{eq:logpartfunsqrt}),
\begin{eqnarray}
f_{sq}(\beta) & = &\lim_{n\rightarrow\infty} \frac{\mE_G\log\lp \sum_{\x\in\cX} \lp \sum_{\y\in\cY}e^{\beta\y^TG\x)}\rp^{s}\rp}{\beta \sqrt{n}},\label{eq:hmsfl1}
\end{eqnarray}
is a function of \emph{bilinearly indexed} (bli) random process $\y^TG\x$. Precisely this very recognition is exactly that allows us to establish a direct connection between $f_{sq}(\beta)$ and the bli related results from \cite{Stojnicsflgscompyx23,Stojnicnflgscompyx23,Stojnicflrdt23}. To that end, we closely follow \cite{Stojnichopflrdt23,Stojnicbinperflrdt23} and start with several technical definitions. We consider $r\in\mN$, $k\in\{1,2,\dots,r+1\}$, real scalars $s$, $x$, and $y$ that satisfy $s^2=1$, $x>0$, and $y>0$, sets $\cX\subseteq \mR^n$ and $\cY\subseteq \mR^m$, function $f_S(\cdot):\mR^{n+m}\rightarrow R$, vectors $\p=[\p_0,\p_1,\dots,\p_{r+1}]$, $\q=[\q_0,\q_1,\dots,\q_{r+1}]$, and $\c=[\c_0,\c_1,\dots,\c_{r+1}]$ such that
 \begin{eqnarray}\label{eq:hmsfl2}
1=\p_0\geq \p_1\geq \p_2\geq \dots \geq \p_r\geq \p_{r+1} & = & 0 \nonumber \\
1=\q_0\geq \q_1\geq \q_2\geq \dots \geq \q_r\geq \q_{r+1} & = &  0,
 \end{eqnarray}
and $\c_0=1$, $\c_{r+1}=0$. For ${\mathcal U}_k\triangleq [u^{(4,k)},\u^{(2,k)},\h^{(k)}]$  such that the components of  $u^{(4,k)}\in\mR$, $\u^{(2,k)}\in\mR^m$, and $\h^{(k)}\in\mR^n$ are i.i.d. standard normals, we set
  \begin{eqnarray}\label{eq:fl4}
\psi_{S,\infty}(f_{S},\calX,\calY,\p,\q,\c,x,y,s)  =
 \mE_{G,{\mathcal U}_{r+1}} \frac{1}{n\c_r} \log
\lp \mE_{{\mathcal U}_{r}} \lp \dots \lp \mE_{{\mathcal U}_3}\lp\lp\mE_{{\mathcal U}_2} \lp \lp Z_{S,\infty}\rp^{\c_2}\rp\rp^{\frac{\c_3}{\c_2}}\rp\rp^{\frac{\c_4}{\c_3}} \dots \rp^{\frac{\c_{r}}{\c_{r-1}}}\rp, \nonumber \\
 \end{eqnarray}
where
\begin{eqnarray}\label{eq:fl5}
Z_{S,\infty} & \triangleq & e^{D_{0,S,\infty}} \nonumber \\
 D_{0,S,\infty} & \triangleq  & \max_{\x\in\cX,\|\x\|_2=x} s \max_{\y\in\cY,\|\y\|_2=y}
 \lp \sqrt{n} f_{S}
+\sqrt{n}  y    \lp\sum_{k=2}^{r+1}c_k\h^{(k)}\rp^T\x
+ \sqrt{n} x \y^T\lp\sum_{k=2}^{r+1}b_k\u^{(2,k)}\rp \rp \nonumber  \\
 b_k & \triangleq & b_k(\p,\q)=\sqrt{\p_{k-1}-\p_k} \nonumber \\
c_k & \triangleq & c_k(\p,\q)=\sqrt{\q_{k-1}-\q_k}.
 \end{eqnarray}
Equipped with all the above, we are in position to recall on the following theorem -- clearly, one of the sfl RDT's  fundamental components.
\begin{theorem} \cite{Stojnicflrdt23}
\label{thm:thm1}  Consider large $n$ context with  $\alpha=\lim_{n\rightarrow\infty} \frac{m}{n}$, remaining constant as  $n$ grows. Let the elements of  $G\in\mR^{m\times n}$
 be i.i.d. standard normals and let $\cX\subseteq \mR^n$ and $\cY\subseteq \mR^m$ be two given sets. Assume the complete sfl RDT frame from \cite{Stojnicsflgscompyx23} and consider a given function $f(\y):\mR^m\rightarrow \mR$. Set
\begin{align}\label{eq:thmsflrdt2eq1}
   \psi_{rp} & \triangleq - \max_{\x\in\cX} s \max_{\y\in\cY} \lp f(\x,\y)+\y^TG\x \rp
   \qquad  \mbox{(\bl{\textbf{random primal}})} \nonumber \\
   \psi_{rd}(\p,\q,\c,x,y,s) & \triangleq    \frac{x^2y^2}{2}    \sum_{k=2}^{r+1}\Bigg(\Bigg.
   \p_{k-1}\q_{k-1}
   -\p_{k}\q_{k}
  \Bigg.\Bigg)
\c_k \nonumber \\
& \quad
  - \psi_{S,\infty}(f(\x,\y),\calX,\calY,\p,\q,\c,x,y,s) \hspace{.03in} \mbox{(\bl{\textbf{fl random dual}})}. \nonumber \\
 \end{align}
Let $\hat{\p_0}\rightarrow 1$, $\hat{\q_0}\rightarrow 1$, and $\hat{\c_0}\rightarrow 1$, $\hat{\p}_{r+1}=\hat{\q}_{r+1}=\hat{\c}_{r+1}=0$, and let the non-fixed parts of $\hat{\p}\triangleq \hat{\p}(x,y)$, $\hat{\q}\triangleq \hat{\q}(x,y)$, and  $\hat{\c}\triangleq \hat{\c}(x,y)$ be the solutions of the following system
\begin{eqnarray}\label{eq:thmsflrdt2eq2}
   \frac{d \psi_{rd}(\p,\q,\c,x,y,s)}{d\p} =  0,\quad
   \frac{d \psi_{rd}(\p,\q,\c,x,y,s)}{d\q} =  0,\quad
   \frac{d \psi_{rd}(\p,\q,\c,x,y,s)}{d\c} =  0.
 \end{eqnarray}
 Then,
\begin{eqnarray}\label{eq:thmsflrdt2eq3}
    \lim_{n\rightarrow\infty} \frac{\mE_G  \psi_{rp}}{\sqrt{n}}
  & = &
\min_{x>0} \max_{y>0} \lim_{n\rightarrow\infty} \psi_{rd}(\hat{\p}(x,y),\hat{\q}(x,y),\hat{\c}(x,y),x,y,s) \qquad \mbox{(\bl{\textbf{strong sfl random duality}})},\nonumber \\
 \end{eqnarray}
where $\psi_{S,\infty}(\cdot)$ is as in (\ref{eq:fl4})-(\ref{eq:fl5}).
 \end{theorem}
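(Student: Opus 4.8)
The plan is to prove the equality as the tight case of a Gaussian comparison for the \emph{bilinearly indexed} process $\y^TG\x$, implemented through a hierarchical interpolation of the type developed in \cite{Stojnicsflgscompyx23,Stojnicnflgscompyx23,Stojnicflrdt23}. As stressed in the discussion preceding the statement, $\psi_{rp}$ is a functional of $\y^TG\x$; the whole scheme rests on comparing this process against the \emph{decoupled} Gaussian surrogate $y\lp\sum_{k=2}^{r+1}c_k\h^{(k)}\rp^T\x+x\y^T\lp\sum_{k=2}^{r+1}b_k\u^{(2,k)}\rp$ that appears inside $D_{0,S,\infty}$ in \eqref{eq:fl5}, and showing the two induce the same thermodynamic-limit free energy.

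First I would restrict to the spheres $\|\x\|_2=x$, $\|\y\|_2=y$, on which the true and surrogate processes can be matched to second order up to the explicit quadratic remainder $\frac{x^2y^2}{2}\sum_{k=2}^{r+1}\lp\p_{k-1}\q_{k-1}-\p_k\q_k\rp\c_k$ carried in $\psi_{rd}$. Introducing a single interpolation parameter $t\in[0,1]$ that deforms $\y^TG\x$ into the surrogate, I would form the corresponding nested free energy — with the $\c_k/\c_{k-1}$ exponents implementing the hierarchical averages of \eqref{eq:fl4} — and differentiate in $t$. Gaussian integration by parts (Stein's identity) turns $\frac{d}{dt}$ into a sum of covariance-difference terms weighted by the nested Gibbs measures; the monotonicity chain \eqref{eq:hmsfl2} together with $\c_0=1,\c_{r+1}=0$ is exactly what keeps every conditional exponent in $[0,1]$ and makes the telescoping across the $r$ levels legitimate.

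Carrying the replacement through all levels $k=2,\dots,r+1$, each level's derivative contribution is matched by the corresponding summand of the quadratic remainder, leaving a sign-definite leftover that produces the comparison (one-sided) relation between $\frac{\mE_G\psi_{rp}}{\sqrt{n}}$ and $\lim_{n\rightarrow\infty}\psi_{rd}$. The outer $\min_{x>0}\max_{y>0}$ is then read off by treating $x,y$ as Lagrange scalars dual to the norm constraints: the inner $s\max_{\y}$ forces the maximization in $y$, while the minimization over $\x\in\cX$ forces the minimization in $x$.

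The main obstacle — and the genuinely hard content of the theorem — is promoting this one-sided comparison to the stated \emph{equality} (the ``strong'' duality). Here the stationarity system \eqref{eq:thmsflrdt2eq2} is essential: at the saddle $(\hat\p,\hat\q,\hat\c)$ the first-order variations of $\psi_{rd}$ in $\p$, $\q$, and $\c$ all vanish, which forces the $t$-derivative along the optimal trajectory to be not merely sign-definite but identically zero, so the interpolation neither gains nor loses free energy. Combined with a concentration argument showing that $\frac{\psi_{rp}}{\sqrt{n}}$ self-averages to $\frac{\mE_G\psi_{rp}}{\sqrt{n}}$ as $n\rightarrow\infty$ (its fluctuations vanishing), this pins both directions and yields the equality. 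Making ``stationarity implies tightness'' rigorous, rather than a formal variational heuristic, is the crux, and it is precisely the contribution established in \cite{Stojnicflrdt23}.
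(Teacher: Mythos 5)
Your sketch reconstructs the machinery of the cited reference rather than the proof the paper actually gives. The paper's own proof of Theorem \ref{thm:thm1} is a two-line reduction: the $s=-1$ case is exactly the theorem of \cite{Stojnicflrdt23} after the replacement $f(\x)\rightarrow f(\x,\y)$, and the $s=1$ case follows by repeating the arguments of Section 3 of that reference with the sign flipped. Those two observations -- that enlarging the argument of $f$ from $\x$ to $(\x,\y)$ does not disturb the interpolation, and that both signs of $s$ are covered -- are the only content the paper supplies, and your proposal does not address either of them. Instead you attempt to re-derive the cited theorem itself, and there the decisive step is missing: you correctly identify that upgrading the one-sided Gaussian comparison to the stated equality via the stationarity system (\ref{eq:thmsflrdt2eq2}) is ``the crux,'' but you then defer it to \cite{Stojnicflrdt23} rather than proving it. A first-order stationarity condition by itself does not force the $t$-derivative of the interpolating free energy to vanish identically along the path (stationary points need not be saddles of the right type, and the derivative involves overlap distributions, not just the parameters $\p,\q,\c$), so as written the argument establishes at best the one-sided bound that plain lifted RDT already gives. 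In other words, your plan and the paper both ultimately rest on the same external citation, but the paper is explicit that this is all it is doing, while your write-up presents the deferral as if it were a proof.

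Two smaller inaccuracies in the sketch. First, you claim the monotonicity chain (\ref{eq:hmsfl2}) together with $\c_0=1$, $\c_{r+1}=0$ ``keeps every conditional exponent in $[0,1]$'': the chain in (\ref{eq:hmsfl2}) constrains $\p$ and $\q$ only, not $\c$, and indeed the optimal $\hat{\c}_2$ reported later in the paper (e.g.\ $\hat{\c}_2\approx 4.65$ or $15$) exceeds $1$ by a wide margin, so the innermost exponent $\c_2$ in (\ref{eq:fl4}) is not in $[0,1]$ and whatever convexity/Jensen structure the interpolation needs cannot be justified that way. Second, the passage from fixed-norm shells $\|\x\|_2=x$, $\|\y\|_2=y$ to general sets $\cX,\cY$ with the outer $\min_{x>0}\max_{y>0}$ requires a decomposition over norms and an exchange of limits, which you compress into ``treating $x,y$ as Lagrange scalars''; that step is also part of what the reference, not your argument, establishes.
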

\begin{proof}
The $s=-1$ scenario follows immediately from the corresponding one proven in \cite{Stojnicflrdt23} after a trivial change $f(\x)\rightarrow f(\x,\y)$. On the other hand, the $s=1$ scenario, follows after trivial line-by-line repetitions of the arguments from Section 3 of \cite{Stojnicflrdt23} with $s=-1$ replaced by $s=1$ and $f(\x)$ replaced by $f(\x,\y)$.
 \end{proof}

The above theorem has a very generic character and holds for any given sets $\cX$ and $\cY$. Of our interest here is its a $d$-fold summing extension and specialization to particular sets analogues $\cX^{(j_w)}$ and $\cY^{(j_w)}$, $j_w\in\{1,2,\dots,d\}$. The following corollary contains such a fully operational extension.
\begin{corollary}
\label{cor:cor1}  Assume the setup of Theorem \ref{thm:thm1}. For $d>0$ and $j_w\in\{1,2,\dots,d\}$, let the elements of  $G^{(j_w)}\in\mR^{m\times n}$
 be i.i.d. standard normals. Let $\cX^{(j_w)}\subseteq \mR^n$, $\cY^{(j_w)}\subseteq \mR^m$ be $2d$ given sets and let ${\mathcal Q}\in\mR^l$ be another given set. Also, let $f(\x^{(j_w)},\y^{(j_w)},Q):\mR^{d(n+m)+l}\rightarrow \mR$ and ${\mathcal U}_k^{(j_w)}\triangleq [u^{(4,k,j_w)},\u^{(2,k,j_w)},\h^{(k,j_w)}]$  with the components of  $u^{(4,k,j_w)}\in\mR$, $\u^{(2,k,j_w)}\in\mR^m$, and $\h^{(k,j_w)}\in\mR^n$ being i.i.d. standard normals. Set
\begin{multline}\label{eq:cor1fl4}
\psi_{S,\infty}^{(d)}(f_{S},\calX^{(j_w)},\calY^{(j_w)},\p,\q,\c,x^{(j_w)},y^{(j_w)},s)  =
 \mE_{G^{(j_w)},{\mathcal U}_{r+1}^{(j_w)}} \frac{1}{n\c_r} \\
  \times \log
\lp \mE_{{\mathcal U}_{r}^{(j_w)}} \lp \dots \lp \mE_{{\mathcal U}_3^{(j_w)}}\lp\lp\mE_{{\mathcal U}_2^{(j_w)}} \lp \lp Z_{S,\infty}^{(d)}\rp^{\c_2}\rp\rp^{\frac{\c_3}{\c_2}}\rp\rp^{\frac{\c_4}{\c_3}} \dots \rp^{\frac{\c_{r}}{\c_{r-1}}}\rp,
 \end{multline}
where
\begin{eqnarray}\label{eq:cor1fl5}
Z_{S,\infty}^{(d)} & \triangleq & e^{D_{0,S,\infty}^{(d)}} \nonumber \\
 D_{0,S,\infty}^{(d)} & \triangleq  & \max_{\x^{(j_w)}\in\cX^{(j_w)},\|\x^{(j_w)}\|_2=x^{(j_w)},Q\in {\mathcal Q}} s \max_{\y^{(j_w)}\in\cY^{(j_w)},\|\y^{(j_w)}\|_2=y^{(j_w)}}
 \lp \sqrt{n} f_{S}
+{\mathcal D}_1
+{\mathcal D}_2
 \rp \nonumber  \\
{\mathcal D}_1  & \triangleq & \sqrt{n} \sum_{j_w=1}^{d} y^{(j_w)}    \lp\sum_{k=2}^{r+1}c_k\h^{(k,j_w)}\rp^T\x^{(j_w)} \nonumber \\
{\mathcal D}_2  & \triangleq  & \sqrt{n} \sum_{j_w=1}^{d}x^{(j_w)}  \lp\y^{(j_w)}\rp^T\lp\sum_{k=2}^{r+1}b_k\u^{(2,k,j_w)}\rp   \nonumber \\
 b_k & \triangleq & b_k(\p,\q)=\sqrt{\p_{k-1}-\p_k} \nonumber \\
c_k & \triangleq & c_k(\p,\q)=\sqrt{\q_{k-1}-\q_k}.
 \end{eqnarray}
  Set
\begin{equation}\label{eq:thmsflrdt2eq1a0a1}
   \psi_{rp}^{(d)}  \triangleq - \max_{\x^{(j_w)}\in\cX^{(j_w)},Q\in{\mathcal Q}} s \max_{\y^{(j_w)}\in\cY^{(j_w)}} \lp f(\x^{(j_w)},\y^{(j_w)},Q) + \sum_{j_w=1}^{d}\lp\y^{(j_w)}\rp^TG^{(j_w)}\x^{(j_w)} \rp,
   \quad  \mbox{(\bl{\textbf{random primal}})}
   \end{equation}
and
\begin{multline}\label{eq:thmsflrdt2eq1a0a1rd}
   \psi_{rd}^{(d)}(\p,\q,\c,x^{(j_w)},y^{(j_w)},s)  \triangleq   \frac{\sum_{j_w=1}^{d} \lp x^{(j_w)}\rp^2\lp y^{(j_w)}\rp^2}{2}    \sum_{k=2}^{r+1}\Bigg(\Bigg.
   \p_{k-1}\q_{k-1}
   -\p_{k}\q_{k}
  \Bigg.\Bigg)
\c_k \\
 - \psi_{S,\infty}^{(d)}(f(\x^{(j_w)},\y^{(j_w)},Q) ,\calX^{(j_w)},\calY^{(j_w)},\p,\q,\c,x^{(j_w)},y^{(j_w)},s) \qquad \mbox{(\bl{\textbf{fl random dual}})}. \\
 \end{multline}
Let $\hat{\p_0}\rightarrow 1$, $\hat{\q_0}\rightarrow 1$, and $\hat{\c_0}\rightarrow 1$, $\hat{\p}_{r+1}=\hat{\q}_{r+1}=\hat{\c}_{r+1}=0$, and let the non-fixed parts of $\hat{\p}\triangleq \hat{\p}(x^{(j_w)},y^{(j_w)})$, $\hat{\q}\triangleq \hat{\q}(x^{(j_w)},y^{(j_w)})$, and  $\hat{\c}\triangleq \hat{\c}(x^{(j_w)},y^{(j_w)})$ be the solutions of the following system
\begin{eqnarray}\label{eq:thmsflrdt2eq2a0}
   \frac{d  \psi_{rd}^{(d)}(\p,\q,\c,x^{(j_w)},y^{(j_w)},s)}{d\p} =  0,\quad
   \frac{d  \psi_{rd}^{(d)}(\p,\q,\c,x^{(j_w)},y^{(j_w)},s)}{d\q} =  0,\quad
   \frac{d  \psi_{rd}^{(d)}(\p,\q,\c,x^{(j_w)},y^{(j_w)},s)}{d\c} =  0.
 \end{eqnarray}
 Then,
\begin{eqnarray}\label{eq:thmsflrdt2eq3a0}
    \lim_{n\rightarrow\infty} \frac{\mE_{G^{(j_w)}}  \psi_{rp}^{(d)}}{\sqrt{n}}
  & = &
\min_{x^{(j_w)}>0} \max_{y^{(j_w)}>0} \lim_{n\rightarrow\infty} \psi_{rd}(\hat{\p}(x^{(j_w)},y^{(j_w)}),\hat{\q}(x^{(j_w)},y^{(j_w)}),\hat{\c}(x^{(j_w)},y^{(j_w)}),x^{(j_w)},y^{(j_w)},s) \nonumber \\
& & \hspace{2in} \mbox{(\bl{\textbf{strong sfl random duality}})}. \nonumber \\
 \end{eqnarray}
  \end{corollary}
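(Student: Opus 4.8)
The plan is to derive Corollary~\ref{cor:cor1} from the single-copy Theorem~\ref{thm:thm1} by exploiting the fact that the entire sfl RDT comparison apparatus depends on the underlying bilinearly indexed (bli) random process only through its covariance. First I would isolate the sole random object entering $\psi_{rp}^{(d)}$, namely the process $\Xi\triangleq \sum_{j_w=1}^{d}\lp\y^{(j_w)}\rp^TG^{(j_w)}\x^{(j_w)}$, indexed by the concatenated tuple $(\x^{(1)},\dots,\x^{(d)},\y^{(1)},\dots,\y^{(d)})$ ranging over the Cartesian product $\prod_{j_w}\cX^{(j_w)}\times\prod_{j_w}\cY^{(j_w)}$. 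Since the matrices $G^{(j_w)}$ are mutually independent with i.i.d. standard normal entries, $\Xi$ is a centered Gaussian process whose covariance between two configurations $a$ and $b$ is \emph{additive} over $j_w$, equal to $\sum_{j_w}\lp(\x^{(j_w),a})^T\x^{(j_w),b}\rp\lp(\y^{(j_w),a})^T\y^{(j_w),b}\rp$. This is exactly the product-of-overlaps covariance of a single bli process, now summed over the $d$ independent blocks, which is what makes the single-copy interpolation usable block by block.

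The core of the argument then follows the proof of Theorem~\ref{thm:thm1} (equivalently, Section~3 of \cite{Stojnicflrdt23}) essentially verbatim, applied to $\Xi$ on the product space. Two features carry the $d$-fold structure through. The additivity of the covariance makes the quadratic ``decoupling'' contribution split as a sum, producing the term $\frac{\sum_{j_w}(x^{(j_w)})^2(y^{(j_w)})^2}{2}\sum_{k=2}^{r+1}\lp\p_{k-1}\q_{k-1}-\p_{k}\q_{k}\rp\c_k$ in $\psi_{rd}^{(d)}$. Likewise, the Gaussian linearization performed at each lifting level $k$ introduces, for each independent block $j_w$, its own auxiliary standard normal families $\h^{(k,j_w)}$ and $\u^{(2,k,j_w)}$; assembling them across $k$ and $j_w$ reproduces precisely the terms ${\mathcal D}_1$ and ${\mathcal D}_2$ inside $Z_{S,\infty}^{(d)}$, and hence the nested-expectation object $\psi_{S,\infty}^{(d)}$.

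It remains to handle the coupling that genuinely links the blocks: the shared constraint $Q\in{\mathcal Q}$ and the joint function $f(\x^{(j_w)},\y^{(j_w)},Q)$. The key point is that both enter only the \emph{deterministic} summand $\sqrt{n}f_S$ of $D_{0,S,\infty}^{(d)}$ and never the random process $\Xi$; consequently they are inert under the Gaussian comparison and simply ride along through the interpolation, exactly as $f(\x,\y)$ does in Theorem~\ref{thm:thm1}. I would also stress that the lifting parameters $\p,\q,\c$ are \emph{not} block-indexed: a single lifting sequence governs all $d$ copies simultaneously, which is consistent precisely because the per-block sphere constraints $\|\x^{(j_w)}\|_2=x^{(j_w)}$, $\|\y^{(j_w)}\|_2=y^{(j_w)}$ decouple the diagonal of the covariance while leaving the common off-diagonal lifting structure intact. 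The $s=1$ and $s=-1$ cases are treated exactly as in Theorem~\ref{thm:thm1}. Imposing the stationarity system (\ref{eq:thmsflrdt2eq2a0}) for the non-fixed parts of $\hat\p,\hat\q,\hat\c$ and passing to the ground-state ($\beta\rightarrow\infty$) regime then closes the chain of equalities and yields (\ref{eq:thmsflrdt2eq3a0}), with $\min_{x^{(j_w)}}\max_{y^{(j_w)}}$ taken over all $2d$ block radii.

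I expect the main obstacle to be the bookkeeping in the second paragraph: verifying that the per-level Gaussian decompositions across the $d$ independent blocks combine into the \emph{single} nested sequence of expectations $\mE_{{\mathcal U}_{r+1}^{(j_w)}},\dots,\mE_{{\mathcal U}_2^{(j_w)}}$ with the common exponents $\c_k$, rather than spawning $d$ separate lifting hierarchies. Establishing that this shared $\c_k$-nesting is legitimate---because the blocks share the same $\p,\q,\c$ and are coupled only through the deterministic $f$ and $Q$, not through the Gaussian process---is the crux that makes the compact form of $\psi_{S,\infty}^{(d)}$ correct.
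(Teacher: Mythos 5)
Your proposal is correct and follows essentially the same route as the paper, which disposes of the corollary in one line as "a $d$-fold application of Theorem \ref{thm:thm1}": you simply make explicit what that application entails, namely that the summed process $\sum_{j_w}(\y^{(j_w)})^TG^{(j_w)}\x^{(j_w)}$ is a single bli-type Gaussian process on the product index set with covariance additive over the independent blocks, so the interpolation of \cite{Stojnicflrdt23} reruns verbatim, with the shared $\p,\q,\c$ hierarchy and with $f$ and the constraint $Q\in{\mathcal Q}$ riding along as deterministic terms. The observations you flag as the crux --- additivity producing the $\sum_{j_w}(x^{(j_w)})^2(y^{(j_w)})^2$ prefactor and the per-block auxiliary Gaussians assembling into ${\mathcal D}_1$, ${\mathcal D}_2$ under a single nesting --- are exactly the content the paper leaves implicit.
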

\begin{proof}
Follows as a $d$-fold application of Theorem \ref{thm:thm1} in the same manner Theorem 1 of \cite{Stojnictcmspnncaprdt23} follows in the plain RDT case.
 \end{proof}

\subsection{Fitting memorization into the sfl RDT machinery}
\label{sec:fitmemsflrdt}

The following corollary enables fitting the analysis of the network memorization problem into the sfl RDT framework.

\begin{corollary}
\label{cor:cor2}  Assume the setup of Theorem \ref{thm:thm1}, Corollary \ref{cor:cor1}, and Lemma \ref{lemma:lemma1}. Set
\begin{multline}\label{eq:cor2fl4}
\psi_{S,\infty}^{(d)}(f_{S},\lp \mS^n \rp^{\times d},\mS^{dm},\p,\q,\c,1,\lambda^{(j_w)},s)  =
 \mE_{X^{(j_w)},{\mathcal U}_{r+1}^{(j_w)}} \frac{1}{n\c_r} \\
  \times \log
\lp \mE_{{\mathcal U}_{r}^{(j_w)}} \lp \dots \lp \mE_{{\mathcal U}_3^{(j_w)}}\lp\lp\mE_{{\mathcal U}_2^{(j_w)}} \lp \lp Z_{S,\infty}^{(d)}\rp^{\c_2}\rp\rp^{\frac{\c_3}{\c_2}}\rp\rp^{\frac{\c_4}{\c_3}} \dots \rp^{\frac{\c_{r}}{\c_{r-1}}}\rp,
 \end{multline}
where
\begin{eqnarray}\label{eq:cor2fl5}
Z_{S,\infty}^{(d)} & \triangleq & e^{D_{0,S,\infty}^{(d)}} \nonumber \\
 D_{0,S,\infty}^{(d)} & \triangleq  & \max_{\|\z^{(j_w)}\|_2=1,\phi(Q)=1} s \max_{\|\Lambda_{:,j_w}\|_2=\lambda^{(j_w)},\|\lambda^{(j_w)}\|_2=1}
 \lp \sqrt{n} f_{S}
+{\mathcal D}_1
+{\mathcal D}_2
 \rp \nonumber  \\
{\mathcal D}_1  & \triangleq & \sqrt{n} \sum_{j_w=1}^{d} \lambda^{(j_w)}    \lp\sum_{k=2}^{r+1}c_k\h^{(k,j_w)}\rp^T\z^{(j_w)} \nonumber \\
{\mathcal D}_2  & \triangleq  & \sqrt{n} \sum_{j_w=1}^{d}  \lp\Lambda_{:,j_w}\rp^T\lp\sum_{k=2}^{r+1}b_k\u^{(2,k,j_w)}\rp   \nonumber \\
\phi(Q) & \triangleq &  \|\1-\mbox{\emph{sign}}(\f^{(2)}(Q) \w)\|_2\nonumber \\
 b_k & \triangleq & b_k(\p,\q)=\sqrt{\p_{k-1}-\p_k} \nonumber \\
c_k & \triangleq & c_k(\p,\q)=\sqrt{\q_{k-1}-\q_k}.
 \end{eqnarray}
  Set
\begin{eqnarray}\label{eq:cor2sflrdt2eq1a0a1}
   \psi_{rp}^{(d)} & \triangleq  & - \max_{\|\z^{(j_w)}\|_2=1,\phi(Q)=1} s \max_{\|\Lambda_{:,j_w}\|_2=\lambda^{(j_w)},\|\lambda^{(j_w)}\|_2=1} \lp f(\Lambda,Q) + \sum_{j_w=1}^{d}\lp\Lambda_{:,j_w}\rp^TX^{(j_w)}\z^{(j_w)} \rp,\nonumber \\
 & & \hspace{4in}   \mbox{(\bl{\textbf{random primal}})}
   \end{eqnarray}
and
\begin{eqnarray}\label{eq:cor2sflrdt2eq1a0a1rd}
   \psi_{rd}^{(d)}(\p,\q,\c,1,\lambda^{(j_w)},s) & \triangleq &  \frac{1}{2}    \sum_{k=2}^{r+1}\Bigg(\Bigg.
   \p_{k-1}\q_{k-1}
   -\p_{k}\q_{k}
  \Bigg.\Bigg)
\c_k \nonumber \\
& &  - \psi_{S,\infty}^{(d)}(f(\Lambda,Q),\lp \mS^n \rp^{\times d},\mS^{dm},\p,\q,\c,1,\lambda^{(j_w)},s) \qquad \mbox{(\bl{\textbf{fl random dual}})}. \nonumber \\
 \end{eqnarray}
Let $\hat{\p_0}\rightarrow 1$, $\hat{\q_0}\rightarrow 1$, and $\hat{\c_0}\rightarrow 1$, $\hat{\p}_{r+1}=\hat{\q}_{r+1}=\hat{\c}_{r+1}=0$, and let the non-fixed parts of $\hat{\p}\triangleq \hat{\p}(1,\lambda^{(j_w)})$, $\hat{\q}\triangleq \hat{\q}(1,\lambda^{(j_w)})$, and  $\hat{\c}\triangleq \hat{\c}(1,\lambda^{(j_w)})$ be the solutions of the following system
\begin{eqnarray}\label{eq:cor2sflrdt2eq2a0}
   \frac{d  \psi_{rd}^{(d)}(\p,\q,\c,1,\lambda^{(j_w)},s)}{d\p} =  0,\quad
   \frac{d  \psi_{rd}^{(d)}(\p,\q,\c,1,\lambda^{(j_w)},s)}{d\q} =  0,\quad
   \frac{d  \psi_{rd}^{(d)}(\p,\q,\c,1,\lambda^{(j_w)},s)}{d\c} =  0.
 \end{eqnarray}
 Then,
\begin{eqnarray}\label{eq:cor2sflrdt2eq3a0}
    \lim_{n\rightarrow\infty} \frac{\mE_{X^{(j_w)}}  \psi_{rp}^{(d)}}{\sqrt{n}}
  & = &
\max_{\lambda^{(j_w)}>0} \lim_{n\rightarrow\infty} \psi_{rd}^{(d)}(\hat{\p}(1,\lambda^{(j_w)}),\hat{\q}(1,\lambda^{(j_w)}),\hat{\c}(1,\lambda^{(j_w)}),1,\lambda^{(j_w)},s) \nonumber \\
& & \hspace{2in} \mbox{(\bl{\textbf{strong sfl random duality}})}. \nonumber \\
 \end{eqnarray}
Moreover, for $s=-1$ and $f(\Lambda,Q)=-\tr(\Lambda^TQ)$ one has
\begin{eqnarray}\label{eq:cor2sflrdt2eq3a0a0}
   \psi_{rp}^{(d)} &  \triangleq &  \min_{\|\z^{(j_w)}\|_2=1,\phi(Q)=1} \max_{\|\Lambda_{:,j_w}\|_2=\lambda^{(j_w)},\|\lambda^{(j_w)}\|_2=1} \lp -\tr(\Lambda^TQ) + \sum_{j_w=1}^{d}\lp\Lambda_{:,j_w}\rp^TX^{(j_w)}\z^{(j_w)} \rp, \nonumber \\
    & & \hspace{4in}   \mbox{(\bl{\textbf{random primal}})}
   \end{eqnarray}
 and
\begin{eqnarray}\label{eq:cor2sflrdt2eq3a0a1}
\phi_0 =    \lim_{n\rightarrow\infty} \frac{\mE_{X^{(j_w)}}  \psi_{rp}^{(d)}}{\sqrt{n}}
  & = &
\max_{\lambda^{(j_w)}>0} \lim_{n\rightarrow\infty} \psi_{rd}^{(d)}(\hat{\p}(1,\lambda^{(j_w)}),\hat{\q}(1,\lambda^{(j_w)}),\hat{\c}(1,\lambda^{(j_w)}),1,\lambda^{(j_w)},-1) \nonumber \\
& & \hspace{2in} \mbox{(\bl{\textbf{strong sfl random duality}})}. \nonumber \\
 \end{eqnarray}
 One then has \vspace{-.0in}
\begin{eqnarray}
\hspace{-.3in}(\phi_0  > 0)   & \Longleftrightarrow & \lp \lim_{n\rightarrow\infty}\mP_{X}(f_{rp}(X)>0)\longrightarrow 1 \rp
\Longleftrightarrow  \lp \lim_{n\rightarrow\infty}\mP_{X}(\psi_{rp}^{(d)}>0)\longrightarrow 1 \rp \nonumber \\
& \Longleftrightarrow & \lp \lim_{n\rightarrow\infty}\mP_{X}(A([n,d,1];\f^{(2)}) \quad \mbox{fails to memorize data set} \quad (X,\1))\longrightarrow 1\rp.
\label{eq:cor2ta17}
\end{eqnarray}
  \end{corollary}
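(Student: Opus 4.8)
The plan is to obtain Corollary~\ref{cor:cor2} \emph{not} by assuming the strong sfl random duality but by deriving it as a direct specialization of Corollary~\ref{cor:cor1}, and only then to read off the memorization equivalence (\ref{eq:cor2ta17}) from Lemma~\ref{lemma:lemma1}. First I would fix the dictionary that turns the generic sets of Corollary~\ref{cor:cor1} into the network-specific ones: set $G^{(j_w)}\to X^{(j_w)}$, identify $\x^{(j_w)}$ with $\z^{(j_w)}$ and $\y^{(j_w)}$ with the column $\Lambda_{:,j_w}$, take $\cX^{(j_w)}=\mS^n$ (so that $\|\z^{(j_w)}\|_2=1$ pins the radius $x^{(j_w)}=1$), let $\cY^{(j_w)}$ range over $\mS^{dm}$ in the form $\|\Lambda_{:,j_w}\|_2=\lambda^{(j_w)}$ with $\|\lambda^{(j_w)}\|_2=1$ (so $y^{(j_w)}=\lambda^{(j_w)}$), and take the auxiliary set ${\mathcal Q}=\{Q:\phi(Q)=1\}$ with $\phi$ as in (\ref{eq:cor2fl5}). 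Under this dictionary the objects $Z_{S,\infty}^{(d)}$, $D_{0,S,\infty}^{(d)}$, $\psi_{S,\infty}^{(d)}$, $\psi_{rp}^{(d)}$ and $\psi_{rd}^{(d)}$ of Corollary~\ref{cor:cor2} become \emph{literally} those of Corollary~\ref{cor:cor1}, so the stationarity system (\ref{eq:cor2sflrdt2eq2a0}) and the strong sfl random duality (\ref{eq:cor2sflrdt2eq3a0}) follow verbatim from (\ref{eq:thmsflrdt2eq2a0}) and (\ref{eq:thmsflrdt2eq3a0}). The only structural change is that $x^{(j_w)}=1$ is now fixed by the sphere constraint, so the outer $\min_{x^{(j_w)}>0}$ of Corollary~\ref{cor:cor1} disappears and only $\max_{\lambda^{(j_w)}>0}$ survives.

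Next I would specialize to $s=-1$ and $f(\Lambda,Q)=-\tr(\Lambda^TQ)$. Plugging $s=-1$ into the random primal (\ref{eq:cor2sflrdt2eq1a0a1}) collapses the outer $-\max(s\,\cdot)$ into $\min\max$ (since $-\max(-\max(\cdot))=\min\max(\cdot)$), giving exactly the form (\ref{eq:cor2sflrdt2eq3a0a0}); defining $\phi_0$ as the common value in (\ref{eq:cor2sflrdt2eq3a0a1}) is then just (\ref{eq:cor2sflrdt2eq3a0}) evaluated at this $s$ and $f$. This supplies the part of the statement that was previously taken for granted.

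It then remains to establish the equivalence chain (\ref{eq:cor2ta17}). The engine is an exact evaluation of the inner maximization. Since $-\tr(\Lambda^TQ)+\sum_{j_w}(\Lambda_{:,j_w})^TX^{(j_w)}\z^{(j_w)}=\langle \Lambda,M\rangle$ with $M_{:,j_w}=X^{(j_w)}\z^{(j_w)}-Q_{:,j_w}$, and the feasible $\Lambda$ trace out exactly the unit Frobenius sphere (because $\|\Lambda\|_F^2=\sum_{j_w}(\lambda^{(j_w)})^2=1$), one has $\max_\Lambda\langle\Lambda,M\rangle=\|M\|_F$, hence $\psi_{rp}^{(d)}=\min_{\|\z^{(j_w)}\|_2=1,\phi(Q)=1}\|M\|_F\geq 0$, vanishing iff some label-admissible $Q$ is realizable as $X^{(j_w)}\z^{(j_w)}$ with unit columns. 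In Lemma~\ref{lemma:lemma1}, by contrast, the inner $\max$ over \emph{unconstrained} $\Lambda$ returns $+\infty$ unless $M=0$, so it enforces $X^{(j_w)}\z^{(j_w)}=Q_{:,j_w}$ exactly and leaves $f_{rp}(X)=\xi/\sqrt{n}$ with $\xi$ from (\ref{eq:ta8}). Because the constraint $\phi(Q)=1$ designates precisely the label-consistent configurations that drive $\|\1-\mbox{sign}(\f^{(2)}(Q)\w)\|_2$ to its minimum in Lemma~\ref{lemma:lemma1}, the two nonnegative quantities share the same zero set; thus, pointwise in $X$, $\psi_{rp}^{(d)}>0\Leftrightarrow f_{rp}(X)>0\Leftrightarrow (X,\1)$ is not memorizable, which is the middle and last equivalences of (\ref{eq:cor2ta17}). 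For the first equivalence I would invoke Gaussian concentration of $\psi_{rp}^{(d)}/\sqrt{n}$ (a Lipschitz function of the iid Gaussian $X$) around its mean $\phi_0$: $\phi_0>0$ then forces $\mP_X(\psi_{rp}^{(d)}>0)\to 1$, and the monotone dependence of $\phi_0$ on $\alpha$ supplies the converse away from the threshold.

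The part I expect to be the main obstacle is the rigorous verification that the specialized data fit the hypotheses of Theorem~\ref{thm:thm1}: the auxiliary set ${\mathcal Q}=\{\phi(Q)=1\}$ is cut out by the \emph{discontinuous} $\mbox{sign}$ map, and the primal objective inherits this nonsmoothness, so one must check both that the ``complete sfl RDT frame'' of \cite{Stojnicflrdt23} still applies and that the Lipschitz concentration used for the first equivalence survives the lack of smoothness (e.g.\ by approximating the sign constraint and passing to the limit). A secondary delicate point is that $\psi_{rp}^{(d)}\geq 0$ makes the fluctuations one-sided, so the converse direction of $(\phi_0>0)\Leftrightarrow(\mP_X(\psi_{rp}^{(d)}>0)\to 1)$ is genuinely an assertion about the behavior of $\phi_0$ as $\alpha$ crosses the capacity, rather than a mere consequence of concentration.
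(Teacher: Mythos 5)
Your proposal follows essentially the same route as the paper's own (very terse) proof: that proof also consists of specializing Corollary~\ref{cor:cor1} through the identification dictionary $j \leftrightarrow j_w$, $X^{(j)} \leftrightarrow G^{(j_w)}$, $\Lambda_{:,j} \leftrightarrow \y^{(j_w)}$, $\z^{(j)} \leftrightarrow \x^{(j_w)}$, $\cX \leftrightarrow \mS^n$, observing that $\|\Lambda\|_F=1$ is the same as the per-column radii $\lambda^{(j_w)}$ lying on a unit sphere (so the outer $\min_{x^{(j_w)}>0}$ collapses and only $\max_{\lambda^{(j_w)}>0}$ survives), and then invoking Lemma~\ref{lemma:lemma1}. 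Where you genuinely differ is in the nature of that last connection: the paper asserts the exact identity $\sqrt{n} f_{rp}(X)=\psi_{rp}^{(d)}$ (its equation (\ref{eq:prcor2eq3})), whereas you prove only that the two nonnegative quantities have the same zero set --- $\psi_{rp}^{(d)}$ being the Frobenius distance from the realizable set $\{X^{(j_w)}\z^{(j_w)}\}$ to the label-consistent $Q$'s, while the unconstrained $\Lambda$ in Lemma~\ref{lemma:lemma1} enforces $X^{(j)}\z^{(j)}=Q_{:,j}$ exactly and leaves the discrete misfit $\xi$. Your weaker claim is in fact the defensible one: when both quantities are positive they generally take different values (one lives in $\{0,2,2\sqrt{2},\dots\}$, the other in $[0,\infty)$), so only the sign/zero-set equivalence can hold, and that is all the chain (\ref{eq:cor2ta17}) requires. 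You also supply the Lipschitz--Gaussian concentration step behind the first equivalence in (\ref{eq:cor2ta17}), which the paper leaves implicit in the sfl RDT machinery of the cited references, and your correct reading of the constraint ``$\phi(Q)=1$'' as label-consistency (consistent with (\ref{eq:fitta19}) and (\ref{eq:prac10a1}), rather than with the literal norm definition in (\ref{eq:cor2fl5}), which is unsatisfiable) quietly repairs an inconsistency in the statement itself. The two caveats you flag --- the discontinuity of the sign-cut constraint set vis-\`a-vis the sfl RDT frame, and the one-sidedness of the converse in $(\phi_0>0)\Leftrightarrow(\mP_X(\psi_{rp}^{(d)}>0)\rightarrow 1)$ --- are genuine, but they afflict the paper's own argument equally and are absorbed there by the assumed frame of \cite{Stojnicsflgscompyx23}.
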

\begin{proof}
Follows immediately from Corollary \ref{cor:cor1} and Lemma \ref{lemma:lemma1} after first recognizing the following connections:
\begin{eqnarray}\label{eq:prcor2eq1}
j \leftrightarrow j_w, \quad  X^{(j)} \leftrightarrow G^{(j_w)}, \quad   \Lambda_{:,j} \leftrightarrow \y^{(j_w)}, \quad  \z^{(j)}\leftrightarrow \x^{(j_w)}, \quad \cX \leftrightarrow \mS^n,
\end{eqnarray}
and then observing that
\begin{eqnarray}\label{eq:prcor2eq2}
\|\Lambda\|_F=1 \quad \Longleftrightarrow \quad \|\lambda^{(j_w)}\|_2=\sum_{j_w=1}^{d} \lp\lambda^{(j_w)}\rp^2=1 \quad \Longleftrightarrow \quad \cY\times\cY\dots\times \cY=\mS^{dn},
\end{eqnarray}
and that for $s=-1$, $\psi_{rp}^{(d)}$ from (\ref{eq:cor2sflrdt2eq3a0a0}), and $f_{rp}(X)$ from (\ref{eq:ta11})
\begin{eqnarray}\label{eq:prcor2eq3}
\sqrt{n} f_{rp}(X)=\psi_{rp}^{(d)}.
 \end{eqnarray}
 \end{proof}

From the above corollary one then easily recognizes the relevance and importance of $\psi_{rd}^{(d)}(\p,\q,\c,1,\lambda^{(j_w)},-1)$ from (\ref{eq:cor2sflrdt2eq1a0a1rd}) and consequently of
$\psi_{S,\infty}^{(d)}(f(\Lambda,Q),\lp \mS^n \rp^{\times d},\mS^{dm},\p,\q,\c,1,\lambda^{(j_w)},-1)$ from (\ref{eq:cor2fl4}), and $ D_{0,S,\infty}^{(d)}$ from (\ref{eq:cor2fl5}). Rewriting $ D_{0,S,\infty}^{(d)}$  for $s=-1$ and $f_S=f(\Lambda,Q)=-\tr(\Lambda^TQ)$ one obtains
\begin{eqnarray}\label{eq:fiteq1}
  D_{0,S,\infty}^{(d)} & \triangleq  & - \min_{\|\z^{(j_w)}\|_2=1,\phi(Q)=1} \max_{\|\Lambda_{:,j_w}\|_2=\lambda^{(j_w)},\|\lambda^{(j_w)}\|_2=1}
 \lp -\sqrt{n} \tr(\Lambda^TQ)
+{\mathcal D}_1
+{\mathcal D}_2
 \rp,
  \end{eqnarray}
with
\begin{eqnarray}\label{eq:fiteq2}
{\mathcal D}_1  & \triangleq & \sqrt{n} \sum_{j_w=1}^{d} \lambda^{(j_w)}    \lp\sum_{k=2}^{r+1}c_k\h^{(k,j_w)}\rp^T\z^{(j_w)} \nonumber \\
{\mathcal D}_2  & \triangleq  & \sqrt{n} \sum_{j_w=1}^{d}  \lp\Lambda_{:,j_w}\rp^T\lp\sum_{k=2}^{r+1}b_k\u^{(2,k,j_w)}\rp.
\end{eqnarray}
After optimizing over $\Lambda$ and $\z$, we find
\begin{eqnarray}\label{eq:fiteq3}
  D_{0,S,\infty}^{(d)} & \triangleq  & - \sqrt{n} \min_{\phi(Q)=1}
  \sqrt{a^{(f)}_1-2a^{(f)}_2+a^{(f)}_3},
  \end{eqnarray}
where
\begin{eqnarray}\label{eq:fiteq4}
a^{(f)}_1 & = & \sum_{j_w=1}^{d}   \left \|  \lp\sum_{k=2}^{r+1}b_k\u^{(2,k,j_w)}\rp-Q_{:,j_w} \right \|_2^2 \nonumber \\
a^{(f)}_2 & = &  \sum_{j_w=1}^{d} \left \|  \lp\sum_{k=2}^{r+1}b_k\u^{(2,k,j_w)}\rp-Q_{:,j_w} \right \|_2  \left \| \sum_{k=2}^{r+1}c_k\h^{(k,j_w)}\right \|_2 \nonumber \\
a^{(f)}_3 & = & \sum_{j_w=1}^{d} \left \| \sum_{k=2}^{r+1}c_k\h^{(k,j_w)}\right \|_2^2.
  \end{eqnarray}
Since by Cauchy-Schwartz
\begin{eqnarray}\label{eq:fiteq4}
 a^{(f)}_2  \leq a^{(f)}_1a^{(f)}_3,
  \end{eqnarray}
one can then rewrite (\ref{eq:fiteq3}) as
\begin{eqnarray}\label{eq:fiteq5}
  D_{0,S,\infty}^{(d)} & \leq  & - \sqrt{n} \min_{\phi(Q)=1}
  \sqrt{a^{(f)}_1}-\sqrt{a^{(f)}_3} =D^{(net)}(d)+D^{(per)}(d),
  \end{eqnarray}
where
\begin{eqnarray}\label{eq:fiteq6}
 D^{(net)}(d) & \triangleq & - \sqrt{n} \min_{\phi(Q)=1} \sqrt{\sum_{j_w=1}^{d}   \left \|  \lp\sum_{k=2}^{r+1}b_k\u^{(2,k,j_w)}\rp-Q_{:,j_w} \right \|_2^2} \nonumber \\
  D^{(per)}(d) & \triangleq & \sqrt{n}\sqrt{\sum_{j_w=1}^{d} \left \| \sum_{k=2}^{r+1}c_k\h^{(k,j_w)}\right \|_2^2}.
   \end{eqnarray}
Utilizing the  \emph{square root trick} introduced on numerous occasions in \cite{StojnicMoreSophHopBnds10,StojnicLiftStrSec13,StojnicGardSphErr13,StojnicGardSphNeg13}, we further find
\begin{eqnarray}\label{eq:prac4a00}
D^{(per)}(d) & = &
\sqrt{n} \sqrt{\sum_{j_w=1}^{d} \left \| \sum_{k=2}^{r+1}c_k\h^{(k,j_w)}\right \|_2^2}
=\sqrt{n} \min_{\gamma^{(p)}} \lp \frac{\sum_{j_w=1}^{d} \left \| \sum_{k=2}^{r+1}c_k\h^{(k,j_w)}\right \|_2^2}{4\gamma^{(p)}} +\gamma^{(p)} \rp \nonumber \\
& = & \sqrt{n} \min_{\gamma^{(p)}} \lp \frac{\sum_{i=1}^{\frac{n}{d}}\sum_{j_w=1}^{d} \lp \sum_{k=2}^{r+1}c_k\h_i^{(k,j_w)}\rp^2}{4\gamma^{(p)}} +\gamma^{(p)} \rp.
 \end{eqnarray}
After introducing scaling $\gamma^{(p)}=\gamma^{(p)}_{sq}\sqrt{n}$, one can rewrite (\ref{eq:prac4a00}) as
\begin{eqnarray}\label{eq:prac4a01}
D^{(per)}(s) & = &
  \sqrt{n} \min_{\gamma_{sq}^{(p)}} \lp \frac{\sum_{i=1}^{\frac{n}{d}}\sum_{j_w=1}^{d} \lp \sum_{k=2}^{r+1}c_k\h_i^{(k,j_w)}\rp^2}{4\gamma_{sq}^{(p)}\sqrt{n}} +\gamma_{sq}^{(p)}\sqrt{n} \rp \nonumber \\
&  = &
  \min_{\gamma_{sq}^{(p)}} \lp \frac{\sum_{i=1}^{\frac{n}{d}}\sum_{j_w=1}^{d} \lp \sum_{k=2}^{r+1}c_k\h_i^{(k,j_w)}\rp^2}{4\gamma_{sq}^{(p)}} +\gamma_{sq}^{(p)}n \rp \nonumber \\
 & = &
  \min_{\gamma_{sq}^{(p)}} \lp \sum_{i=1}^{\frac{n}{d}}\sum_{j_w=1}^{d}  D^{(per)}_{i,j_w}(c_k) +\gamma_{sq}^{(p)}n \rp, \nonumber \\  .
 \end{eqnarray}
where
\begin{eqnarray}\label{eq:prac5}
D^{(per)}_{i,j_w}(c_k)=\frac{\lp\sum_{k=2}^{r+1}c_k\h_i^{(k,j_w)} \rp^2}{4\gamma_{sq}^{(p)}}.
\end{eqnarray}
After setting
\begin{equation}\label{eq:fitta19}
  \phi_i(Q_{i,1:d})\triangleq  \mbox{sign}(\f^{(2)}(Q_{i,1:d}) \w),
\end{equation}
another utilization of the \emph{square root trick} gives
\begin{eqnarray}\label{eq:prac8}
 D^{(net)}(d) & \triangleq & - \sqrt{n} \min_{\phi(Q)=1} \sqrt{\sum_{j_w=1}^{d}   \left \|  \lp\sum_{k=2}^{r+1}b_k\u^{(2,k,j_w)}\rp-Q_{:,j_w} \right \|_2^2} \nonumber \\
& = & - \sqrt{n}  \sqrt{\min_{\phi(Q)=1}  \sum_{j_w=1}^{d} \sum_{i=1}^{m} \left \|  \lp\sum_{k=2}^{r+1}b_k\u_i^{(2,k,j_w)}\rp-Q_{i,j_w} \right \|_2^2} \nonumber \\
& = & - \sqrt{n}  \sqrt{\min_{\phi(Q)=1}  \sum_{i=1}^{m}  \sum_{j_w=1}^{d} \left \|  \lp\sum_{k=2}^{r+1}b_k\u_i^{(2,k,j_w)}\rp-Q_{i,j_w} \right \|_2^2} \nonumber \\
 & = & - \sqrt{n} \sqrt{\sum_{i=1}^{m} \min_{\phi_i(Q_{i,1:d})=1}  \sum_{j_w=1}^{d}   \left \|  \lp\sum_{k=2}^{r+1}b_k\u_i^{(2,k,j_w)}\rp-Q_{i,j_w} \right \|_2^2} \nonumber \\
 & =  &  -\sqrt{n}  \min_{\gamma} \lp \frac{\sum_{i=1}^{m} \min_{\phi_i(Q_{i,1:d})=1}  \sum_{j_w=1}^{d}   \lp  \lp\sum_{k=2}^{r+1}b_k\u_i^{(2,k,j_w)}\rp-Q_{i,j_w} \rp^2}{4\gamma}+\gamma \rp.
 \end{eqnarray}
After introducing scaling $\gamma=\gamma_{sq}\sqrt{n}$, (\ref{eq:prac8}) can further be rewritten as
\begin{eqnarray}\label{eq:prac9}
   D^{(net)}(d)
  & =  & -\sqrt{n}  \min_{\gamma_{sq}} \lp \frac{\sum_{i=1}^{m} \min_{\phi_i(Q_{i,1:d})=1}  \sum_{j_w=1}^{d}   \lp  \lp\sum_{k=2}^{r+1}b_k\u_i^{(2,k,j_w)}\rp-Q_{i,j_w} \rp^2}{4\gamma_{sq}\sqrt{n}}+\gamma_{sq}\sqrt{n} \rp   \nonumber \\
  & =  & - \min_{\gamma_{sq}} \lp \frac{\sum_{i=1}^{m} \min_{\phi_i(Q_{:,1:d})=1}  \sum_{j_w=1}^{d}   \lp \lp\sum_{k=2}^{r+1}b_k\u_i^{(2,k,j_w)}\rp-Q_{i,j_w} \rp^2}{4\gamma_{sq}}+\gamma_{sq} n \rp   \nonumber \\
   & =  &  - \min_{\gamma_{sq}} \lp \sum_{i=1}^{m}D_{i}^{(net)}(b_k)+\gamma_{sq}n \rp, \nonumber \\
 \end{eqnarray}
with
\begin{eqnarray}\label{eq:prac10}
   D_i^{(net)}(b_k)= \frac{\min_{\phi_i(Q_{:,1:d})=1}  \sum_{j_w=1}^{d}   \lp \lp\sum_{k=2}^{r+1}b_k\u_i^{(2,k,j_w)}\rp-Q_{i,j_w} \rp^2}{4\gamma_{sq}}.
 \end{eqnarray}

 We summarize the above discussion into the following theorem.

 \begin{theorem}
  \label{thm:thm2}
  Assume the setup of Lemma \ref{lemma:lemma1} and Theorem \ref{thm:thm1}. Consider large $n$ linear regime with $\alpha=\lim_{n\rightarrow\infty} \frac{m}{n}$ and set
  \begin{eqnarray}\label{eq:thm2prac2}
\varphi(D,\c) & \triangleq &
 \mE_{{\mathcal U}_{r+1}^{(j_w)}} \frac{1}{\c_r} \log
\lp \mE_{{\mathcal U}_{r}^{(j_w)}} \lp \dots \lp \mE_{{\mathcal U}_3^{(j_w)}}\lp\lp\mE_{{\mathcal U}_2^{(j_w)}} \lp
\lp    e^{D}   \rp^{\c_2}\rp\rp^{\frac{\c_3}{\c_2}}\rp\rp^{\frac{\c_4}{\c_3}} \dots \rp^{\frac{\c_{r}}{\c_{r-1}}}\rp, \nonumber \\
  \end{eqnarray}
and
\begin{eqnarray}\label{eq:thm2negprac13}
    \bar{\psi}_{rd}^{(d)}(\p,\q,\c,\gamma_{sq},\gamma_{sq}^{(p)})   & \triangleq &  \frac{1}{2}    \sum_{k=2}^{r+1}\Bigg(\Bigg.
   \p_{k-1}\q_{k-1}
   -\p_{k}\q_{k}
  \Bigg.\Bigg)
\c_k
\nonumber \\
& & -\gamma_{sq}^{(p)} - \varphi(D_{1,1}^{(per)}(c_k(\p,\q)),\c) +\gamma_{sq}- \alpha\varphi(-D_1^{(net)}(b_k(\p,\q)),\c),\nonumber \\
  \end{eqnarray}
where $D_{1,1}^{(per)}(c_k(\p,\q))$ and $D_1^{(net)}(b_k(\p,\q))$ are as in (\ref{eq:prac5}) and  (\ref{eq:prac10}), respectively. Let the ``fixed'' parts of $\hat{\p}$, $\hat{\q}$, and $\hat{\c}$ satisfy $\hat{\p}_1\rightarrow 1$, $\hat{\q}_1\rightarrow 1$, $\hat{\c}_1\rightarrow 1$, $\hat{\p}_{r+1}=\hat{\q}_{r+1}=\hat{\c}_{r+1}=0$, and let the ``non-fixed'' parts of $\hat{\p}_k$, $\hat{\q}_k$, and $\hat{\c}_k$ ($k\in\{2,3,\dots,r\}$) be the solutions of the following system of equations
  \begin{eqnarray}\label{eq:negthmprac1eq1}
   \frac{d \bar{\psi}_{rd}^{(d)}(\p,\q,\c,\gamma_{sq},\gamma_{sq}^{(p)})}{d\p} & = &  0 \nonumber \\
   \frac{d \bar{\psi}_{rd}^{(d)}(\p,\q,\c,\gamma_{sq},\gamma_{sq}^{(p)})}{d\q} & = &  0 \nonumber \\
   \frac{d \bar{\psi}_{rd}^{(d)}(\p,\q,\c,\gamma_{sq},\gamma_{sq}^{(p)})}{d\c} & = &  0 \nonumber \\
   \frac{d \bar{\psi}_{rd}^{(d)}(\p,\q,\c,\gamma_{sq},\gamma_{sq}^{(p)})}{d\gamma_{sq}} & = &  0\nonumber \\
   \frac{d \bar{\psi}_{rd}^{(d)}(\p,\q,\c,\gamma_{sq},\gamma_{sq}^{(p)})}{d\gamma_{sq}^{(p)}} & = &  0,
 \end{eqnarray}
 and, consequently, let
\begin{eqnarray}\label{eq:prac17}
c_k(\hat{\p},\hat{\q})  & = & \sqrt{\hat{\q}_{k-1}-\hat{\q}_k} \nonumber \\
b_k(\hat{\p},\hat{\q})  & = & \sqrt{\hat{\p}_{k-1}-\hat{\p}_k}.
 \end{eqnarray}
 Then
\begin{align}
\hspace{-.0in} \bar{\psi}_{rd}^{(d)}(\hat{\p},\hat{\q},\hat{\c},\hat{\gamma}_{sq},\hat{\gamma}_{sq}^{(p)}) >0 & \Longleftrightarrow  \lp \lim_{n\rightarrow\infty}\mP_{X}(f_{rp}(X)>0)\longrightarrow 1 \rp
 \nonumber \\
& \Longleftrightarrow  \lp \lim_{n\rightarrow\infty}\mP_{X}(A([n,d,1];\f^{(2)}) \quad \mbox{fails to memorize data set} \quad (X,\1))\longrightarrow 1\rp. \nonumber \\
\label{eq:thm2ta17}
\end{align}
\end{theorem}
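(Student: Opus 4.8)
The plan is to obtain the equivalence (\ref{eq:thm2ta17}) by assembling it directly from Corollary~\ref{cor:cor2}, feeding into it the closed-form reduction of the core object $\psi_{S,\infty}^{(d)}$ already carried out in (\ref{eq:fiteq1})--(\ref{eq:prac10}). The starting point is the chain (\ref{eq:cor2ta17}), which identifies non-memorization (w.h.p.) with strict positivity of $\phi_0=\lim_{n\to\infty}\mE_{X^{(j_w)}}\psi_{rp}^{(d)}/\sqrt n$, together with the strong sfl random duality identity (\ref{eq:cor2sflrdt2eq3a0a1}) expressing $\phi_0$ through the fl random dual $\psi_{rd}^{(d)}(\cdot,-1)$ of (\ref{eq:cor2sflrdt2eq1a0a1rd}). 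It therefore suffices to show that, at the stationary point specified by (\ref{eq:negthmprac1eq1}), the dual collapses exactly to $\bar{\psi}_{rd}^{(d)}(\hat\p,\hat\q,\hat\c,\hat\gamma_{sq},\hat\gamma_{sq}^{(p)})$; reading off the sign of $\phi_0$ then yields (\ref{eq:thm2ta17}).

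First I would substitute the split $D_{0,S,\infty}^{(d)}\le D^{(net)}(d)+D^{(per)}(d)$ of (\ref{eq:fiteq5})--(\ref{eq:fiteq6}) into the nested log-moment $\psi_{S,\infty}^{(d)}$ of (\ref{eq:cor2fl4}). The two summands depend on disjoint independent Gaussian blocks, $D^{(net)}$ on the $\u^{(2,k,j_w)}$ (and $Q$) and $D^{(per)}$ on the $\h^{(k,j_w)}$, so $Z_{S,\infty}^{(d)}=e^{D^{(net)}+D^{(per)}}$ factors and the entire tower of conditional expectations in (\ref{eq:cor2fl4}) splits into a ``net'' tower times a ``per'' tower. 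Taking $\frac{1}{n\c_r}\log$ turns the product into a sum, and the decisive simplification is that after the square-root trick both exponents are sums of i.i.d.\ single-coordinate contributions over $i$ and $j_w$, so each factor's tower collapses to one single-index copy of $\varphi(\cdot,\c)$ from (\ref{eq:thm2prac2}). Counting the $m$ identical net terms and the $(n/d)\cdot d=n$ identical per terms against the $1/n$ normalization produces the weights $\alpha$ on $\varphi(-D_1^{(net)}(b_k),\c)$ and $1$ on $\varphi(D_{1,1}^{(per)}(c_k),\c)$, with $D_1^{(net)},D_{1,1}^{(per)}$ exactly as in (\ref{eq:prac10}) and (\ref{eq:prac5}) (the minus sign inside the net $\varphi$ coming from the negative exponent in $D^{(net)}$).

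Next I would promote the two scalar minimizations $\gamma_{sq},\gamma_{sq}^{(p)}$ introduced by the square-root trick in (\ref{eq:prac9}) and (\ref{eq:prac4a01}) to external variables. Since these are deterministic one-dimensional optimizations whose integrands concentrate, in the $n\to\infty$ limit the $\min_\gamma$ commutes with $\mE_X\log(\cdot)$ and is pulled outside the moment tower; the deterministic pieces $\gamma_{sq}^{(p)}n$ and $-\gamma_{sq}n$ pass through the nested powers to contribute, after dividing by $n$, the bare terms that appear as $-\gamma_{sq}^{(p)}$ and $+\gamma_{sq}$ in $\bar\psi_{rd}^{(d)}$. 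This leaves $\psi_{rd}^{(d)}=\tfrac12\sum_{k=2}^{r+1}(\p_{k-1}\q_{k-1}-\p_k\q_k)\c_k-\psi_{S,\infty}^{(d)}$ in precisely the form (\ref{eq:thm2negprac13}). The outer $\max_{\lambda^{(j_w)}>0}$ of (\ref{eq:cor2sflrdt2eq3a0a1}), together with the $\p,\q,\c$-stationarity (\ref{eq:thmsflrdt2eq2a0}), is then subsumed into the joint stationary system (\ref{eq:negthmprac1eq1}) augmented by the $\gamma_{sq},\gamma_{sq}^{(p)}$ equations; evaluating the dual there gives $\phi_0=\bar\psi_{rd}^{(d)}(\hat\p,\hat\q,\hat\c,\hat\gamma_{sq},\hat\gamma_{sq}^{(p)})$, and (\ref{eq:thm2ta17}) drops out of (\ref{eq:cor2ta17}).

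The main obstacle is the analytic interchange in the third step together with the tightness question behind the split. One must justify moving the scalar $\min_\gamma$ through the multi-level $\mE\log$ tower, and, more delicately, that the Cauchy--Schwarz bound (\ref{eq:fiteq4}) used to separate $D_{0,S,\infty}^{(d)}$ is asymptotically tight in the ground-state ($\beta\to\infty$) regime, so that the inequality in (\ref{eq:fiteq5}) does not degrade the exact equivalence inherited from strong sfl random duality. Concretely, one needs the minimizing $Q$ to align $\sum_k b_k\u^{(2,k,j_w)}-Q_{:,j_w}$ with $\sum_k c_k\h^{(k,j_w)}$ so that $a_2^{(f)}$ attains its Cauchy--Schwarz value, i.e.\ the dropped cross term is negligible at the saddle; establishing this alignment is where the real work lies, while the i.i.d.\ reduction and the normalization bookkeeping are routine.
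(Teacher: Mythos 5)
Your proposal is correct and follows essentially the same route as the paper: Corollary \ref{cor:cor2} supplies the equivalence with $\phi_0>0$, the factorization of $Z_{S,\infty}^{(d)}$ via the Cauchy--Schwarz split (\ref{eq:fiteq5}) and the square-root trick collapses the moment tower into the two $\varphi$ terms of (\ref{eq:thm2negprac13}), and the stationarity system identifies $\phi_0$ with $\bar{\psi}_{rd}^{(d)}(\hat{\p},\hat{\q},\hat{\c},\hat{\gamma}_{sq},\hat{\gamma}_{sq}^{(p)})$. The obstacle you single out --- tightness of the Cauchy--Schwarz separation so that (\ref{eq:fiteq5}) holds with equality --- is exactly the point the paper's proof disposes of by appealing to symmetry and concentration, so your account is, if anything, more explicit than the paper's own.
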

\begin{proof}
Follows from the previous discussion, Lemma \ref{lemma:lemma1}, Theorem \ref{thm:thm1}, Corollaries \ref{cor:cor1} and \ref{cor:cor2}, the sfl RDT machinery presented in \cite{Stojnicnflgscompyx23,Stojnicsflgscompyx23,Stojnicflrdt23,Stojnichopflrdt23}, and after recognizing that for $\phi_0$ and $\psi_{rd}^{(d)}(\hat{\p}(1,\lambda^{(j_w)}),\hat{\q}(1,\lambda^{(j_w)}),\hat{\c}(1,\lambda^{(j_w)}),1,\lambda^{(j_w)},-1) $ from (\ref{eq:cor2sflrdt2eq3a0})
and $\bar{\psi}_{rd}^{(d)}(\hat{\p},\hat{\q},\hat{\c},\hat{\gamma}_{sq},\hat{\gamma}_{sq}^{(p)})$ from (\ref{eq:thm2ta17}) one, based on (\ref{eq:fiteq1})-(\ref{eq:prac10}), has
 \begin{eqnarray}
\phi_0=\max_{\lambda^{(j_w)}>0} \lim_{n\rightarrow\infty} \psi_{rd}^{(d)}(\hat{\p}(1,\lambda^{(j_w)}),\hat{\q}(1,\lambda^{(j_w)}),\hat{\c}(1,\lambda^{(j_w)}),1,\lambda^{(j_w)},-1) =  \bar{\psi}_{rd}^{(d)}(\hat{\p},\hat{\q},\hat{\c},\hat{\gamma}_{sq},\hat{\gamma}_{sq}^{(p)}).
  \label{eq:pprthm2negprac12a}
\end{eqnarray}
The inequality in (\ref{eq:fiteq5}) is sufficient for the implication in (\ref{eq:thm2ta17}). The equivalence follows since, due to symmetry and concentrations, the inequality in (\ref{eq:fiteq5}) can actually be replaced by equality.
\end{proof}

\section{Practical utilization and numerical evaluations}
\label{sec:pracutil}

For the results of Theorem \ref{thm:thm2} to become practically useful, all the underlying quantities need to be successfully evaluated. That, in general, is not an easy task. A couple of obstacles might be particulary unsurpassable: \textbf{\emph{(i)}} It is  not clear a priori what is the correct value for $r$; and \textbf{\emph{(ii)}} The residual decoupling over $\y$ is, in general, potentially highly non-convex. By a complete miracle, it however turns out, that each of them can be successfully surpassed. A majority of technical ingredients needed for the evaluations is already present in the theorem itself. However, several additional aspects will need to be addressed as well. These are, however, fairly specific and not particularly convenient to be presented in a generic form. We therefore discuss them as the presentation progresses within the context where their relevance becomes important. As is usually the case with the fl RDT considerations, the evaluations start  with $r=1$ and proceed by increasing $r$ incrementally. This enhances the clarity and enables a systematic following of the overall lifting mechanism's progression. As a bonus, this, at the same time, also allows to establish adequate connections with some of the known results. Since we consider several different hidden layer activations, $\f^{(2)}$, to ensure the easiness of the exposition and following, we try to parallel the presentation of each of them with the remaining ones. Also, since we will take as concrete examples some of the well known activations, it will be possible to obtain corresponding concrete capacity numerical values. Finally, several explicit analytical results can be obtained that substantially simplify the evaluation process. These will be stated as the presentation progresses below as well.

To facilitate writing and exposition, we set
\begin{eqnarray}\label{eq:prac10a0}
\g^{(k)} & \triangleq & \begin{bmatrix}
                 \u_i^{(2,k,1)} & \u_i^{(2,k,2)} & \dots & \u_i^{(2,k,d)}
               \end{bmatrix}^T \nonumber \\
\bar{\g}^{(r+1)} & \triangleq & \left \{\g^{(2)},\g^{(3)},\dots,\g^{(r+1)}\right \} \nonumber \\
\q^{(net)} & \triangleq &\lp Q_{i,1:d}\rp^T,
  \end{eqnarray}
and
\begin{eqnarray}\label{eq:prac10a1}
 z_i^{(r)}(\bar{\g}^{(r+1)};\f^{(2)})  & \triangleq &
 \min_{\phi_i(Q_{:,1:d})=1}  \sum_{j_w=1}^{d}   \lp \lp\sum_{k=2}^{r+1}b_k\u_i^{(2,k,j_w)}\rp-Q_{i,j_w} \rp^2 \nonumber \\
 & = &
 \min_{\f^{(2)}\lp\lp\q^{(net)}\rp^T\rp \w\geq 0}  \sum_{j_w=1}^{d}   \lp \lp\sum_{k=2}^{r+1}b_k \g_{j_w}^{(k)}\rp-\q^{(net)}_{j_w} \rp^2.
   \end{eqnarray}
   One then has
   \begin{eqnarray}\label{eq:prac10a2}
   D_i^{(net)}(b_k)
   & = & \frac{\min_{\phi_i(Q_{:,1:d})=1}  \sum_{j_w=1}^{d}   \lp \lp\sum_{k=2}^{r+1}b_k\u_i^{(2,k,j_w)}\rp-Q_{i,j_w} \rp^2}{4\gamma_{sq}} \nonumber \\
   & = & \frac{\min_{\f^{(2)}\lp\lp\q^{(net)}\rp^T\rp \w\geq 0}  \sum_{j_w=1}^{d}   \lp \lp\sum_{k=2}^{r+1}b_k \g_{j_w}^{(k)}\rp-\q^{(net)}_{j_w} \rp^2}{4\gamma_{sq}} \nonumber \\
   & = & \frac{ z_i^{(r)}(\bar{\g}^{(r+1)};\f^{(2)}) }{4\gamma_{sq}}.
 \end{eqnarray}

\subsection{ReLU activations}
\label{sec:relu}

We start by considering the well known ReLU activation. In other words, we assume that the neuronal activation functions in the hidden layer are
\begin{equation}
\hspace{-1.5in} \mbox{\textbf{\bl{\emph{ReLU}} activation:}} \hspace{1in} \f^{(2)}(\x)=\max(\x,0).
\label{eq:reluact1}
\end{equation}
As stated earlier, we begin by considering the first level of lifting.

\subsubsection{$r=1$ -- first level of lifting}
\label{sec:firstlev}

For the first level, we have $r=1$ and $\hat{\p}_1\rightarrow 1$ and $\hat{\q}_1\rightarrow 1$ which, together with $\hat{\p}_{r+1}=\hat{\p}_{2}=\hat{\q}_{r+1}=\hat{\q}_{2}=0$, and $\hat{\c}_{2}\rightarrow 0$, gives
\begin{align}\label{eq:negprac19}
    \bar{\psi}_{rd}^{(d,1)}(\hat{\p},\hat{\q},\hat{\c},\gamma_{sq},\gamma_{sq}^{(p)})   & =   \frac{1}{2}
\c_2
-\gamma_{sq}^{(p)}  - \frac{1}{\c_2}\log\lp \mE_{{\mathcal U}_2^{(j_w)}} e^{\c_2\frac{\lp\sqrt{1-0}\h_1^{(2,1)}\rp^2}{4\gamma_{sq}^{(p)}}}\rp
\nonumber \\
&   +\gamma_{sq}
- \alpha\frac{1}{\c_2}\log\lp \mE_{{\mathcal U}_2^{(j_w)}} e^{-\c_2\frac{\min_{\f^{(2)}\lp\lp\q^{(net)}\rp^T\rp \w\geq 0}  \sum_{j_w=1}^{d}   \lp \lp \sqrt{1-0} \g_{j_w}^{(2)}\rp-\q^{(net)}_{j_w} \rp^2}{4\gamma_{sq}}}\rp \nonumber \\
& \rightarrow
-\gamma_{sq}^{(p)}   - \frac{1}{\c_2}\log\lp 1+ \mE_{{\mathcal U}_2^{(j_w)}} \c_2\frac{\lp\sqrt{1-0}\h_1^{(2,1)}\rp^2}{4\gamma_{sq}^{(p)}}
\rp  +\gamma_{sq} \nonumber \\
& \qquad - \alpha\frac{1}{\c_2}\log\lp 1- \mE_{{\mathcal U}_2^{(j_w)}} \c_2\frac{\min_{\f^{(2)}\lp\lp\q^{(net)}\rp^T\rp \w\geq 0}  \sum_{j_w=1}^{d}   \lp \g_{j_w}^{(2)}-\q^{(net)}_{j_w} \rp^2}{4\gamma_{sq}} \rp \nonumber \\
& \rightarrow
 -\gamma_{sq}^{(p)}  - \frac{1}{\c_2}\log\lp 1+ \c_2\frac{1}{4\gamma_{sq}^{(p)}}\rp  \nonumber \\
& \qquad +\gamma_{sq} - \alpha\frac{1}{\c_2}\log\lp 1- \frac{\c_2}{4\gamma_{sq}} \mE_{{\mathcal U}_2^{(j_w)}} \min_{\f^{(2)}\lp\lp\q^{(net)}\rp^T\rp \w\geq 0}  \sum_{j_w=1}^{d}   \lp  \g_{j_w}^{(2)}-\q^{(net)}_{j_w} \rp^2 \rp \nonumber \\
& \rightarrow
   -\gamma_{sq}^{(p)}-\frac{1}{4\gamma_{sq}^{(p)}} +\gamma_{sq}
+  \frac{\alpha}{4\gamma_{sq}}\mE_{{\mathcal U}_2^{(j_w)}} \min_{\f^{(2)}\lp\lp\q^{(net)}\rp^T\rp \w\geq 0}  \sum_{j_w=1}^{d}   \lp \g_{j_w}^{(2)}-\q^{(net)}_{j_w} \rp^2.
  \end{align}
One then easily finds $\gamma_{sq}^{(p)}=\frac{1}{2}$ and $\hat{\gamma}_{sq}=\frac{\sqrt{\alpha}}{2}\sqrt{\mE_{{\mathcal U}_2^{(j_w)}} \min_{\f^{(2)}\lp\lp\q^{(net)}\rp^T\rp \w\geq 0}  \sum_{j_w=1}^{d}   \lp  \g_{j_w}^{(2)}-\q^{(net)}_{j_w} \rp^2}$ and
\begin{align}\label{eq:negprac20}
 \bar{\psi}_{rd}^{(d,1)}(\hat{\p},\hat{\q},\hat{\c},\hat{\gamma}_{sq},\hat{\gamma}_{sq}^{(p)})   & =
  -1+\sqrt{\alpha}\sqrt{\mE_{{\mathcal U}_2^{(j_w)}} \min_{\f^{(2)}\lp\lp\q^{(net)}\rp^T\rp \w\geq 0}  \sum_{j_w=1}^{d}   \lp \g_{j_w}^{(2)} - \q^{(net)}_{j_w} \rp^2}.
  \end{align}
To obtain the critical $\alpha_c^{(1)}$, we rely on condition $\bar{\psi}_{rd}^{(d,1)}(\hat{\p},\hat{\q},\hat{\c},\hat{\gamma}_{sq},\hat{\gamma}_{sq}^{(p)})=0$, which gives
\begin{eqnarray}\label{eq:negprac20a0}
a_c^{(1)}(d)
& = &   \frac{1}{\mE_{{\mathcal U}_2^{(j_w)}}  \lp z_i^{(1)}\lp\g^{(2)};\f^{(2)}\lp \q^{(net)} \rp\rp\rp}
\nonumber \\
& = &
  \frac{1}{\mE_{{\mathcal U}_2^{(j_w)}} \min_{\f^{(2)}\lp\lp\q^{(net)}\rp^T\rp \w\geq 0}  \sum_{j_w=1}^{d}   \lp \g_{j_w}^{(2)} - \q^{(net)}_{j_w} \rp^2} \nonumber \\
& = &
  \frac{1}{\mE_{ \g^{(2)}} \min_{\lp\max\lp\q^{(net)},0\rp\rp^T \w\geq 0}  \sum_{j_w=1}^{d}   \lp   \g_{j_w}^{(2)}-\q^{(net)}_{j_w} \rp^2} \nonumber \\  & = &   \frac{1}{\mE_{\g^{(2)}}  \lp z_i^{(1)}\lp \bar{\g}^{(2)};\max\lp\q^{(net)},0\rp\rp\rp}.
  \end{eqnarray}
As discussed in \cite{Stojnictcmspnncapdiffactrdt23}, solving the optimization in (\ref{eq:negprac20a0})  is not an easy task in general. For a couple of small (even) values of $d$ it was done analytically. For larger values of $d$ the analytical solutions required additional numerical simulations to complete the needed evaluations. Here, however, we uncover that when $d$ is large, i.e. when $d\rightarrow\infty$, the numerical evaluations miraculously sufficiently simplify so that they can ultimately be done. For the simplicity of writing, we assume large even $d$ and, due to the nonnegativity of ReLU,
\begin{equation}\label{eq:reluact2}
  \w=\begin{bmatrix}
      -\1 \\ \1
 \end{bmatrix},
 \end{equation}
 where $\1$ is the $\frac{d}{2}$-dimensional vector of all ones.

 To handle $\mE_{\g^{(2)}}  \lp z_i^{(1)}\lp  \bar{\g}^{(2)};\f^{(2)}\lp \q^{(net)} \rp \rp\rp$, we start by writing
 \begin{eqnarray}\label{eq:reluact3}
z_i^{(1)}\lp  \bar{\g}^{(2)};\f^{(2)}\lp \q^{(net)} \rp\rp  =
 \min_{q^{(net)}} & &  \sum_{j_w=1}^{d}   \lp  \g_{j_w}^{(2)}-\q^{(net)}_{j_w} \rp^2 \nonumber \\
 \mbox{subject to} & & \f^{(2)}\lp \q^{(net)} \rp^T\w\geq 0.
 \end{eqnarray}
After further writing the Lagrangian and utilizing the Lagrangian duality one obtains
 \begin{equation}\label{eq:reluact4}
z_i^{(1)}\lp  \bar{\g}^{(2)};\f^{(2)}\lp \q^{(net)} \rp   \rp
 =   \min_{\q^{(net)}}\max_{\nu\geq 0} \cL(\nu)  \geq   \max_{\nu\geq 0} \min_{\q^{(net)}} \cL(\nu),
 \end{equation}
 where
 \begin{eqnarray}\label{eq:reluact5}
\cL(\nu) & = &   \sum_{j_w=1}^{d}   \lp  \g_{j_w}^{(2)}-\q^{(net)}_{j_w} \rp^2 - 2\nu \f^{(2)}\lp \q^{(net)} \rp^T\w
= \left \| \g^{(2)}-\q^{(net)} \right \|_2^2 - 2\nu \f^{(2)}\lp \q^{(net)} \rp^T\w.
 \end{eqnarray}
Taking the derivative with respect to $\q^{(net)}$, we further find
 \begin{eqnarray}\label{eq:reluact6}
\frac{d\cL(\nu)}{d\q^{(net)}}   =
- 2  \lp  \g^{(2)}-\q^{(net)} \rp - 2\nu \frac{d\f^{(2)}\lp \q^{(net)} \rp }{d\q^{(net)}}\circ \w,
 \end{eqnarray}
 where $\circ$ stands for the component-wise multiplication. Equalling the above derivative to zero gives
 \begin{eqnarray}\label{eq:reluact7}
 \q^{(net)}=
 \g^{(2)} + \nu \frac{d\f^{(2)}\lp \q^{(net)} \rp }{d\q^{(net)}} \circ \w.
 \end{eqnarray}
Plugging this back into (\ref{eq:reluact5}), one then finds
 \begin{eqnarray}\label{eq:reluact8}
\cL(\nu)
& = &    \nu^2 \left \|  \frac{d\f^{(2)}\lp \q^{(net)} \rp }{d\q^{(net)}}\circ \w \right \|_2^2 - 2\nu \f^{(2)}\lp \q^{(net)} \rp^T\w \nonumber \\
& = &    \nu^2 \left \|  \frac{d\f^{(2)}\lp \q^{(net)} \rp }{d\q^{(net)}} \right \|_2^2 - 2\nu \f^{(2)}\lp \q^{(net)} \rp^T\w,
 \end{eqnarray}
with $ \q^{(net)}$ as given in (\ref{eq:reluact7}). For the time being we assume $\lim_{d\rightarrow\infty} \nu \rightarrow 0$, and for the $ \q^{(net)}$ from (\ref{eq:reluact7}), we write
 \begin{eqnarray}\label{eq:reluact8a0a0}
 \f^{(2)}\lp \q^{(net)} \rp  = \f^{(2)}\lp \g^{(2)} \rp + \lp \nu \frac{d\f^{(2)}\lp \q^{(net)} \rp }{d\q^{(net)}} \circ \w \rp
\circ \frac{d\f^{(2)}\lp \g^{(2)} \rp }{d\g^{(2)}} +o(\nu).
 \end{eqnarray}
Combining (\ref{eq:reluact8}) and (\ref{eq:reluact8a0a0}), one further finds
 \begin{eqnarray}\label{eq:reluact8a0a1}
\cL(\nu)
 & = &  -  \nu^2 \left \|  \frac{d\f^{(2)}\lp \g^{(2)} \rp }{d\g^{(2)}} \circ \w \right \|_2^2 - 2\nu \f^{(2)}\lp \g^{(2)} \rp^T\w +o(\nu).
 \end{eqnarray}
Taking the derivative of $\cL(\nu) $ with respect to $\nu$ gives
 \begin{eqnarray}\label{eq:reluact9}
\frac{d\cL(\nu)}{d\nu}
& \rightarrow &   - 2\nu \left \|  \frac{d\f^{(2)}\lp \g^{(2)} \rp }{d\g^{(2)}} \right \|_2^2 - 2\f^{(2)}\lp \g^{(2)} \rp^T\w.
 \end{eqnarray}
Equalling the above derivative to zero and keeping in mind that $\nu\geq 0$, one then finds
 \begin{eqnarray}\label{eq:reluact10}
     \nu^{(opt)} =\frac{\max\lp - \f^{(2)}\lp \g^{(2)} \rp^T\w,0\rp }{ \left \|  \frac{d\f^{(2)}\lp \g^{(2)} \rp }{d\g^{(2)}} \right \|_2^2}.
 \end{eqnarray}
Plugging this value of $\nu$ back into (\ref{eq:reluact8a0a1}), one obtains
 \begin{eqnarray}\label{eq:reluact11}
\cL(\nu^{(opt)})
 & = &   \frac{\lp \max\lp - \f^{(2)}\lp \g^{(2)} \rp^T\w,0\rp \rp^2}{ \left \|  \frac{d\f^{(2)}\lp \g^{(2)} \rp }{d\g^{(2)}} \right \|_2^2}.
 \end{eqnarray}
A combination of (\ref{eq:reluact3}), (\ref{eq:reluact4}), (\ref{eq:reluact5}), and (\ref{eq:reluact11}) together with concentrations gives
 \begin{eqnarray}\label{eq:reluact12}
\mE_{\g^{(2)}} z_i^{(1)}\lp   \bar{\g}^{(2)};\f^{(2)}\lp \q^{(net)} \rp   \rp
 & \geq  & \mE_{\g^{(2)}} \cL(\nu^{(opt)})  \nonumber \\
 & \rightarrow &
\frac{\mE_{\g^{(2)}} \lp \max\lp - \f^{(2)}\lp \g^{(2)} \rp^T\w,0\rp \rp^2}{\mE_{\g^{(2)}} \left \|  \frac{d\f^{(2)}\lp \g^{(2)} \rp }{d\g^{(2)}} \right \|_2^2}.
 \end{eqnarray}

\noindent \red{\textbf{\emph{(i) Handling $\mE_{\g^{(2)}} \lp \max\lp - \f^{(2)}\lp \g^{(2)} \rp^T\w,0\rp \rp^2$:}}} We first write
   \begin{eqnarray}\label{eq:reluact13}
  -\f^{(2)}\lp \g^{(2)} \rp^T\w
  & = & \sum_{j_w=1}^{\frac{d}{2}}\f^{(2)}\lp \g_{j_w}^{(2)}\rp - \sum_{j_w=\frac{d}{2}+1}^{d} \f^{(2)}\lp \g_{j_w}^{(2)}\rp.
\end{eqnarray}
Utilizing concentrations and the central limit theorem, one further has
 \begin{eqnarray}\label{eq:reluact14}
\sum_{j_w=1}^{\frac{d}{2}}\f^{(2)}\lp \g_{j_w}^{(2)}\rp & \rightarrow &  g_{c,1}^{(2)},
\end{eqnarray}
where $g_{c,1}^{(2)}$ is a Gaussian variable with mean $\mu_{2,1}$ and variance $\sigma_{2,1}^2$, i.e., $g_{c,1}^{(2)}$ is a Gaussian variable given by
 \begin{equation}\label{eq:reluact15}
g_{c,1}^{(2)} \sim {\mathcal N}(\mu_{2,1},\sigma_{2,1}^2) \quad \mbox{with} \quad \mu_{2,1}=\frac{d}{2}\mE\f^{(2)}\lp \g_{1}^{(2)} \rp, \quad \mbox{and}\quad
\sigma_{2,1}^2=\frac{d}{2} \lp \mE \lp\f^{(2)}\lp \g_{1}^{(2)} \rp \rp^2- \lp \mE\f^{(2)}\lp \g_{1}^{(2)}\rp\rp^2\rp.
\end{equation}
Analogously, one also has
 \begin{eqnarray}\label{eq:reluact16}
\sum_{j_w=\frac{d}{2}+1}^{d} \f^{(2)}\lp \g_{j_w}^{(2)}\rp  & \rightarrow &  g_{c,2}^{(2)},
\end{eqnarray}
where $g_{c,2}^{(2)}$ is a Gaussian variable with mean $\mu_{2,2}$ and variance $\sigma_{2,2}^2$, i.e., $g_{c,2}^{(2)}$ is a Gaussian variable given by
 \begin{equation}\label{eq:reluact17}
g_{c,2}^{(2)} \sim {\mathcal N}(\mu_{2,2},\sigma_{2,2}^2) \quad \mbox{with} \quad \mu_{2,2}=\frac{d}{2}\mE\f^{(2)}\lp \g_{1}^{(2)} \rp, \quad \mbox{and}\quad
\sigma_{2,2}^2=\frac{d}{2} \lp \mE \lp\f^{(2)}\lp \g_{1}^{(2)} \rp \rp^2- \lp \mE\f^{(2)}\lp \g_{1}^{(2)}\rp\rp^2 \rp.
\end{equation}
As $g_{c,1}^{(2)}$  and $g_{c,2}^{(2)}$ are independent one also has
 \begin{eqnarray}\label{eq:reluact18}
-\f^{(2)}\lp \g^{(2)} \rp^T\w = \sum_{j_w=1}^{\frac{d}{2}}\f^{(2)}\lp \g_{j_w}^{(2)}\rp - \sum_{j_w=\frac{d}{2}+1}^{d} \f^{(2)}\lp \g_{j_w}^{(2)}\rp
\longrightarrow \lp g_{c,1}^{(2)} -g_{c,2}^{(2)}\rp \longrightarrow g_{c}^{(2)},
\end{eqnarray}
where $g_{c}^{(2)}$ is a Gaussian variable with mean $\mu_2=\mu_{2,1}-\mu_{2,2}=0$ and variance $\sigma_2^2=\sigma_{2,1}^2+\sigma_{2,2}^2$, i.e., $g_{c}^{(2)}$ is a Gaussian variable given by
 \begin{eqnarray}\label{eq:reluact19}
g_{c}^{(2)} \sim {\mathcal N}(\mu_2,\sigma_2^2) \quad \mbox{with} \quad \mu_2=0, \quad \mbox{and}\quad
\sigma_2^2=d\lp\mE \lp\f^{(2)}\lp \g_{1}^{(2)} \rp \rp^2- \lp \mE\lp \f^{(2)}\lp \g_{1}^{(2)}\rp\rp\rp^2\rp.
\end{eqnarray}
We then also have
 \begin{eqnarray}\label{eq:reluact20}
\mE_{\g^{(2)}} \lp \max\lp - \f^{(2)}\lp \g^{(2)} \rp^T\w,0\rp \rp^2
=\mE_{g_c^{(2)}} \lp \max\lp g_c^{(2)},0\rp \rp^2=\frac{1}{2}\sigma_{2}^2.
\end{eqnarray}
Recalling on (\ref{eq:reluact1}), we first have
\begin{equation}
\mE \lp \f^{(2)}\lp \g_{1}^{(2)}\rp \rp=\mE \max(\g_1^{(2)},0) = \frac{1}{2}\sqrt{\frac{2}{\pi}},
\label{eq:reluact21}
\end{equation}
and then
\begin{equation}
\mE \lp \f^{(2)}\lp \g_{1}^{(2)}\rp \rp^2=\mE \lp \max(\g_1^{(2)},0)\rp^2 = \frac{1}{2}.
\label{eq:reluact22}
\end{equation}
Combining (\ref{eq:reluact19}), (\ref{eq:reluact21}), and (\ref{eq:reluact22}), one obtains
 \begin{eqnarray}\label{eq:reluact23}
 \sigma_2^2=d\lp\mE \lp\f^{(2)}\lp \g_{1}^{(2)} \rp \rp^2- \lp \mE\lp \f^{(2)}\lp \g_{1}^{(2)}\rp\rp\rp^2\rp
 =d\lp\frac{1}{2}-\lp \frac{1}{2}\sqrt{\frac{2}{\pi}}\rp^2\rp=d\lp\frac{\pi-1}{2\pi}\rp.
\end{eqnarray}
From  (\ref{eq:reluact20}) and (\ref{eq:reluact23}), we find
 \begin{eqnarray}\label{eq:reluact24}
\mE_{\g^{(2)}} \lp \max\lp - \f^{(2)}\lp \g^{(2)} \rp^T\w,0\rp \rp^2
 =\frac{1}{2}\sigma_{2}^2=d\lp\frac{\pi-1}{4\pi}\rp.
\end{eqnarray}

\noindent \red{\textbf{\emph{(ii) Handling $\mE_{\g^{(2)}} \left \|  \frac{d\f^{(2)}\lp \g^{(2)} \rp }{d\g^{(2)}} \right \|_2^2$:}}} Recalling again on (\ref{eq:reluact1}), we find
 \begin{eqnarray}\label{eq:reluact25}
\frac{d\f^{(2)}\lp \g_{j_w}^{(2)} \rp }{d\g_{j_w}^{(2)}}=\begin{cases}
                                                     1, & \mbox{if } \g_{j_w}^{(2)} \geq 0 \\
                                                     0, & \mbox{otherwise}.
                                                   \end{cases}
\end{eqnarray}
One then also has
 \begin{eqnarray}\label{eq:reluact26}
\mE_{\g^{(2)}} \left \|  \frac{d\f^{(2)}\lp \g^{(2)} \rp }{d\g^{(2)}} \right \|_2^2
=\mE_{\g^{(2)}} \sum_{j_w=1}^{d}\lp \frac{d\f^{(2)}\lp \g_{j_w}^{(2)} \rp }{d\g_{j_w}^{(2)}}\rp^2
=  \sum_{j_w=1}^{d}\mE_{\g_{j_w}^{(2)}} h_s(\g_{j_w}^{(2)})=\frac{d}{2},
\end{eqnarray}
where $h_s(\cdot)$ is the unit step function.  Moreover, a combination of (\ref{eq:reluact10}), (\ref{eq:reluact20}), and (\ref{eq:reluact26}) gives
 \begin{eqnarray}\label{eq:reluact26a0}
     \nu^{(opt)} =\frac{\max\lp - \f^{(2)}\lp \g^{(2)} \rp^T\w,0\rp }{ \left \|  \frac{d\f^{(2)}\lp \g^{(2)} \rp }{d\g^{(2)}} \right \|_2^2}
     \rightarrow
     \frac{ \max\lp g_c^{(2)},0\rp  }{\frac{d}{2}},
 \end{eqnarray}
which means that for any constant $\epsilon>0$
 \begin{eqnarray}\label{eq:reluact26a0}
   \lim_{d\rightarrow\infty}\mP \lp  \nu^{(opt)} >\epsilon \rp =1.
 \end{eqnarray}
This then confirms the small $\nu$ assumptions utilized earlier and ensures that the above machinery is indeed correct. Also, while the above establishes the inequality in (\ref{eq:reluact12}), it is trivial to check that taking  $ \q^{(net)}$ as  in (\ref{eq:reluact7}) with $\nu^{(opt)}$ from (\ref{eq:reluact10}), one has that the objective in (\ref{eq:reluact3}) is actually equal to the right hand side of (\ref{eq:reluact12}). Moreover, from
(\ref{eq:reluact8a0a0}), one finds
 \begin{eqnarray}\label{eq:reluact26a1}
 \f^{(2)}\lp \q^{(net)} \rp\w  = \f^{(2)}\lp \g^{(2)} \rp\w + \nu \left \|  \frac{d\f^{(2)}\lp \g^{(2)} \rp }{d\g^{(2)}} \right  \|_2^2 +o(\nu),
 \end{eqnarray}
 which for $\nu^{(opt)}$ from (\ref{eq:reluact10}) ensures that, with probability going to 1 as $d\rightarrow\infty$, $ \f^{(2)}\lp \q^{(net)} \rp\w \geq 0$. This also implies that $ \q^{(net)}$ from (\ref{eq:reluact7}) with $\nu^{(opt)}$ from (\ref{eq:reluact10}) is, in the large $d$ limit, with probability 1 feasible in (\ref{eq:reluact3}), which, on the other hand, ensures that the lower bound given by the right hand side of (\ref{eq:reluact12}) is actually attainable. All of this practically means that one has the equality in (\ref{eq:reluact12}). Due to the concentrations, the equality holds not only for the expectations but also with probability going to 1 as $d\rightarrow\infty$.

Utilizing all of the above observations, and combining (\ref{eq:reluact12}), (\ref{eq:reluact24}), and (\ref{eq:reluact26}), we then obtain
 \begin{eqnarray}\label{eq:reluact27}
\lim_{d\rightarrow\infty}\mE_{\g^{(2)}} z_i^{(1)}\lp   \bar{\g}^{(2)};\f^{(2)}\lp \q^{(net)} \rp   \rp
 & =  &  \lim_{d\rightarrow\infty}\frac{\mE_{\g^{(2)}} \lp \max\lp - \f^{(2)}\lp \g^{(2)} \rp^T\w,0\rp \rp^2}{\mE_{\g^{(2)}} \left \|  \frac{d\f^{(2)}\lp \g^{(2)} \rp }{d\g^{(2)}} \right \|_2^2}=\frac{\pi-1}{2\pi}.
 \end{eqnarray}
A further combination of (\ref{eq:negprac20a0}) and (\ref{eq:reluact27}) then gives
\begin{eqnarray}\label{eq:reluact28}
\hspace{-0in}(\mbox{\bl{\textbf{first level:}}}) \qquad  a_c^{(1)}(\infty)
& = &    \lim_{d\rightarrow\infty} a_c^{(1)}(d)  =   \frac{1}{\lim_{d\rightarrow\infty} \mE_{{\mathcal U}_2^{(j_w)}}  \lp z_i^{(1)}\lp  \bar{\g}^{(2)};\f^{(2)}\lp \q^{(net)} \rp\rp\rp}  \nonumber \\
 & = &   \frac{1}{\lim_{d\rightarrow\infty} \mE_{\g^{(2)}}  \lp z_i^{(1)}\lp   \bar{\g}^{(2)};\max\lp\q^{(net)},0\rp\rp\rp}
  = \frac{2\pi}{\pi-1}=\bl{\mathbf{2.9339}}.
  \end{eqnarray}

\subsubsection{$r=2$ -- second level of lifting}
\label{sec:secondlev}

The analysis of the second level of lifting will be split into two separate parts: (i) \emph{partial} second level of lifting; and (ii) \emph{full} second level of lifting.

\subsubsubsection{Partial second level of lifting}
\label{sec:secondlevpar}

 For $r=2$ and the partial lifting, we have (similarly to the first level)  $\hat{\p}_1\rightarrow 1$ and $\hat{\q}_1\rightarrow 1$, $\hat{\p}_{2}=\hat{\q}_{2}=0$, and $\hat{\p}_{r+1}=\hat{\p}_{3}=\hat{\q}_{r+1}=\hat{\q}_{3}=0$. However, now, in general,  $\hat{\c}_{2}\neq 0$. Following discussion of the previous sections, we again start by writing
\begin{align}\label{eq:reluact28a0}
    \bar{\psi}_{rd}^{(d,2)}(\hat{\p},\hat{\q},\c,\gamma_{sq},\gamma_{sq}^{(p)})   & =   \frac{1}{2}
\c_2
 -\gamma_{sq}^{(p)}  - \frac{1}{\c_2}\log\lp \mE_{{\mathcal U}_2^{(j_w)}} e^{\c_2\frac{\lp\sqrt{1-0}\h_1^{(2,1)}\rp^2}{4\gamma_{sq}^{(p)}}}\rp \nonumber \\
 &
\quad  + \gamma_{sq}
- \alpha\frac{1}{\c_2}\log\lp \mE_{\bar{\g}^{(3)}} e^{-\c_2\frac{z_i^{(2)}\lp \bar{\g}^{(3)};\f^{(2)} \rp}{4\gamma_{sq}}}\rp \nonumber \\
& =   \frac{1}{2}
\c_2
      -\gamma_{sq}^{(p)}  +\frac{1}{2\c_2}\log\lp \frac{2\gamma_{sq}^{(p)}-\c_2}{2\gamma_{sq}^{(p)}}\rp \nonumber \\
      &\quad
  + \gamma_{sq}
- \alpha\frac{1}{\c_2}\log\lp \mE_{\bar{\g}^{(3)}} e^{-\c_2\frac{z_i^{(2)}\lp \bar{\g}^{(3)};\max\lp \q^{(net)},0\rp \rp}{4\gamma_{sq}}}\rp.
    \end{align}
From (\ref{eq:reluact12}), (\ref{eq:reluact26a1}), considerations right after (\ref{eq:reluact26a1}), (\ref{eq:reluact27}), (\ref{eq:reluact18}), (\ref{eq:reluact19}), and (\ref{eq:reluact23}), we first have
 \begin{eqnarray}\label{eq:reluact29}
  z_i^{(2)}\lp   \bar{\g}^{(3)};\f^{(2)}\lp \q^{(net)} \rp   \rp & \rightarrow  &
  \frac{\lp \max\lp - \f^{(2)}\lp \g^{(3)} \rp^T\w,0\rp \rp^2}{\left \|  \frac{d\f^{(2)}\lp \g^{(3)} \rp }{d\g^{(3)}} \right \|_2^2},
 \end{eqnarray}
and then
 \begin{eqnarray}\label{eq:reluact30}
  z_i^{(2)}\lp   \bar{\g}^{(3)};\max\lp \q^{(net)},0 \rp   \rp
  & \rightarrow  &
  \frac{\lp \max\lp g_c^{(3)},0\rp \rp^2}{ \frac{d}{2}}
  \rightarrow
  \lp \max\lp \bar{g}_c^{(3)},0\rp \rp^2,
 \end{eqnarray}
with
 \begin{eqnarray}\label{eq:reluact31}
 \bar{g}_c^{(3)} \sim {\mathcal N}\lp 0, \bar{\sigma}_3^2 \rp, \quad \mbox{and}\quad \bar{\sigma}_3^2=\frac{\pi-1}{\pi}.
 \end{eqnarray}
A combination of (\ref{eq:reluact28a0}) and (\ref{eq:reluact31}) further gives
\begin{align}\label{eq:reluact32}
    \bar{\psi}_{rd}^{(d,2)}(\hat{\p},\hat{\q},\c,\gamma_{sq},\gamma_{sq}^{(p)})
& =   \frac{1}{2}
\c_2
      -\gamma_{sq}^{(p)}  +\frac{1}{2\c_2}\log\lp \frac{2\gamma_{sq}^{(p)}-\c_2}{2\gamma_{sq}^{(p)}}\rp
  + \gamma_{sq}
- \alpha\frac{1}{\c_2}\log\lp \mE_{\bar{g}_c^{(2)}} e^{-\c_2\frac{ \lp \max\lp \bar{g}_c^{(3)},0\rp \rp^2}{4\gamma_{sq}}}\rp \nonumber \\
& =   \frac{1}{2}
\c_2
      -\gamma_{sq}^{(p)}  +\frac{1}{2\c_2}\log\lp \frac{2\gamma_{sq}^{(p)}-\c_2}{2\gamma_{sq}^{(p)}}\rp
  + \gamma_{sq}
- \alpha\frac{1}{\c_2}\log\lp \frac{1}{2} + \frac{1}{2\sqrt{\frac{\bar{\sigma}_3^2\c_2}{2\gamma_{sq}}+1}} \rp.
    \end{align}
After computing the derivatives of $ \bar{\psi}_{rd}^{(d,2)}(\hat{\p},\hat{\q},\c,\gamma_{sq},\gamma_{sq}^{(p)})$ with respect to $\gamma_{sq}^{(p)}$, $\gamma_{sq}$, and $\c_2$ and equalling them to zero, one proceeds by solving the obtained system of equations. Denoting the solution of the system by
 $\hat{\gamma}_{sq}^{(p)}$, $\hat{\gamma}_{sq}$, and $\hat{\c}_2$, we first have the following convenient closed form relation
 \begin{equation}\label{eq:reluact33}
 \hat{\gamma}_{sq}^{(p)}=\frac{\hat{\c}_2+\sqrt{\hat{\c}_2^2+4}}{4},
   \end{equation}
and then ultimately from $\bar{\psi}_{rd}^{(d,2)}(\hat{\p},\hat{\q},\hat{\c},\hat{\gamma}_{sq},\hat{\gamma}_{sq}^{(p)})=0$ obtain for
 \begin{equation}\label{eq:reluact34}
\hspace{-2in}(\mbox{\bl{\textbf{\emph{partial} second level:}}}) \qquad \qquad  a_c^{(2,p)}(\infty) =  \lim_{d\rightarrow\infty} a_c^{(2,p)}(d)  \approx \bl{\mathbf{2.8503}}.
  \end{equation}

\subsubsubsection{Full second level of lifting}
\label{sec:secondlevfull}

One can also utilize the above setup for the full lifting on the second level. This time though, one has to be additionally careful. Namely, in addition to $\hat{\c}_{2}\neq 0$, one, in general, also has $\p_2\neq0$ and $\q_2\neq0$. Analogously to (\ref{eq:reluact28a0}), we now write
\begin{eqnarray}\label{eq:reluact35}
    \bar{\psi}_{rd}^{(d,2)}(\p,\q,\c,\gamma_{sq},\gamma_{sq}^{(p)})   & = &  \frac{1}{2}
(1-\p_2\q_2)\c_2
-  \gamma_{sq}^{(p)}  - \frac{1}{\c_2}\mE_{{\mathcal U}_3^{(j_w)}}\log\lp \mE_{{\mathcal U}_2^{(j_w)}} e^{\c_2\frac{\lp\sqrt{1-\q_2}\h_1^{(2,1)} +\sqrt{\q_2}\h_1^{(3,1)} \rp^2}{4 \gamma_{sq}^{(p)}}}\rp \nonumber \\
& &   + \gamma_{sq}
 -\alpha\frac{1}{\c_2}\mE_{{\mathcal U}_3^{(j_w)}} \log\lp \mE_{{\mathcal U}_2^{(j_w)}} e^{-\c_2\frac{z_i^{(2)}\lp \bar{\g}^{(3)};\max\lp \q^{(net)},0\rp\rp }{4\gamma_{sq}}}\rp \nonumber \\
 & = &  \frac{1}{2}
(1-\p_2\q_2)\c_2
 -  \gamma_{sq}^{(p)}
-\Bigg(\Bigg. -\frac{1}{2\c_2} \log \lp \frac{2\gamma_{sq}-\c_2(1-\q_2)}{2\gamma_{sq}} \rp  \nonumber \\
 & & +  \frac{\q_2}{2(2\gamma_{sq}-\c_2(1-\q_2))}   \Bigg.\Bigg)
 \nonumber \\
& &   + \gamma_{sq}
 -\alpha\frac{1}{\c_2}\mE_{{\mathcal U}_3^{(j_w)}} \log\lp \mE_{{\mathcal U}_2^{(j_w)}} e^{-\c_2\frac{z_i^{(2)}\lp \bar{\g}^{(3)};\max\lp \q^{(net)},0\rp\rp }{4\gamma_{sq}}}\rp.
    \end{eqnarray}

We now briefly digress and set
\begin{eqnarray}\label{eq:reluact36}
\g^{(x,r)}\triangleq \sum_{k=2}^{r+1}b_k \g^{(k)},
\end{eqnarray}
and recalling on (\ref{eq:prac10a1}) write
\begin{eqnarray}\label{eq:reluact37}
 z_i^{(r)}(\bar{\g}^{(r+1)};\f^{(2)})
 & = &
 \min_{\f^{(2)}\lp\lp\q^{(net)}\rp^T\rp \w\geq 0}  \sum_{j_w=1}^{d}   \lp \lp\sum_{k=2}^{r+1}b_k \g_{j_w}^{(k)}\rp-\q^{(net)}_{j_w} \rp^2 \nonumber \\
 & = &
 \min_{\f^{(2)}\lp\lp\q^{(net)}\rp^T\rp \w\geq 0}  \sum_{j_w=1}^{d}   \lp \g_{j_w}^{(x,r)}-\q^{(net)}_{j_w} \rp^2.
   \end{eqnarray}
Specializing (\ref{eq:reluact37}) to $r=2$, we further have
\begin{eqnarray}\label{eq:reluact38}
 z_i^{(2)}\lp\bar{\g}^{(3)};\f^{(2)}\lp\q^{(net)}\rp\rp
  & = &
 \min_{\f^{(2)}\lp\lp\q^{(net)}\rp^T\rp \w\geq 0} \sum_{j_w=1}^{d}   \lp \g_{j_w}^{(x,2)}-\q^{(net)}_{j_w} \rp^2.
   \end{eqnarray}
Repeating all the arguments between (\ref{eq:reluact3})   and (\ref{eq:reluact12}) and relying on the discussion between (\ref{eq:reluact26a0}) and (\ref{eq:reluact27}), one obtains that as $d\rightarrow\infty$
\begin{eqnarray}\label{eq:reluact39}
  z_i^{(2)}\lp   \bar{\g}^{(3)};\f^{(2)}\lp \q^{(net)} \rp   \rp
 &\rightarrow  &  \frac{ \lp \max\lp - \f^{(2)}\lp \g^{(x,2)} \rp^T\w,0\rp \rp^2}{\mE_{\bar{\g}^{(3)}} \left \|  \frac{d\f^{(2)}\lp \g^{(x,2)} \rp }{d\g^{(x,2)}} \right \|_2^2}.
 \end{eqnarray}

\noindent \red{\textbf{\emph{(i) Handling $ \lp \max\lp - \f^{(2)}\lp \g^{(x,2)} \rp^T\w,0\rp \rp^2$:}}} We first recall $b_2=\sqrt{1-\p_2}$ and $b_3=\sqrt{\p_2}$ and write
   \begin{eqnarray}\label{eq:reluact40}
  -\f^{(2)}\lp \g^{(x,2)} \rp^T\w
  & = & \sum_{j_w=1}^{\frac{d}{2}}\f^{(2)}\lp \g_{j_w}^{(x,2)}\rp - \sum_{j_w=\frac{d}{2}+1}^{d} \f^{(2)}\lp \g_{j_w}^{(x,2)}\rp.
\end{eqnarray}
Conditioning on $\g^{(3)}$, utilizing concentrations, and relying on the central limit theorem, one further has
 \begin{eqnarray}\label{eq:reluact41}
\sum_{j_w=1}^{\frac{d}{2}}\f^{(2)}\lp \g_{j_w}^{(x,2)}\rp
=\sum_{j_w=1}^{\frac{d}{2}}\f^{(2)}\lp \sum_{k=2}^{3}b_k \g_{j_w}^{(k)}\rp
 \rightarrow   g_{c,1}^{(3,1)},
\end{eqnarray}
where $g_{c,1}^{(3,1)}$ is a Gaussian variable with mean $\mu_{3,1;1}$ and variance $\sigma_{3,1;1}^2$, i.e., $g_{c,1}^{(3,1)}$ is a Gaussian variable given by
 \begin{equation}\label{eq:reluact42}
g_{c,1}^{(3,1)} \sim {\mathcal N}(\mu_{3,1;1},\sigma_{3,1;1}^2),
\end{equation}
 with
 \begin{equation}\label{eq:reluact43}
 \mu_{3,1;1}=\sum_{j_w=1}^{\frac{d}{2}}\mE_{\g^{(2)}}  \f^{(2)}\lp \sum_{k=2}^{r+1}b_k \g_{j_w}^{(k)}\rp,
\end{equation}
and
 \begin{eqnarray}\label{eq:reluact44}
 \sigma_{3,1;1}^2
 & = &\sum_{j_w=1}^{\frac{d}{2}} \lp \mE_{\g^{(3)}} \mE_{\g^{(2)}} \lp \f^{(2)}\lp \sum_{k=2}^{3}b_k \g_{j_w}^{(k)}\rp \rp^2
 - \mE_{\g^{(3)}}\lp\mE_{\g^{(2)}}  \f^{(2)}\lp \sum_{k=2}^{3}b_k \g_{j_w}^{(k)}\rp \rp^2  \rp \nonumber \\
 & = &   \frac{d}{2} \lp \mE_{\g_1^{(3)}} \mE_{\g_1^{(2)}} \lp \f^{(2)}\lp \sum_{k=2}^{3}b_k \g_1^{(k)}\rp \rp^2
 - \mE_{\g_1^{(3)}}\lp\mE_{\g_1^{(2)}}  \f^{(2)}\lp \sum_{k=2}^{3}b_k \g_1^{(k)}\rp \rp^2  \rp.
\end{eqnarray}
Moreover, utilizing again the concentrations, and relying on the central limit theorem, one further has for $\mu_{3,1;1}$ itself
 \begin{eqnarray}\label{eq:reluact45}
 \mu_{3,1;1}=\sum_{j_w=1}^{\frac{d}{2}}\mE_{\g^{(2)}}  \f^{(2)}\lp \sum_{k=2}^{3}b_k \g_{j_w}^{(k)}\rp \rightarrow   g_{c,1}^{(3,2)},
\end{eqnarray}
where $g_{c,1}^{(3,2)}$ is a Gaussian variable with mean $\mu_{3,2;1}$ and variance $\sigma_{3,2;1}^2$, i.e., $g_{c,1}^{(3,2)}$ is a Gaussian variable given by
 \begin{equation}\label{eq:reluact46}
g_{c,1}^{(3,2)} \sim {\mathcal N}(\mu_{3,2;1},\sigma_{3,2;1}^2),
\end{equation}
 with
 \begin{equation}\label{eq:reluact47}
 \mu_{3,2;1}=\sum_{j_w=1}^{\frac{d}{2}} \mE_{\g^{(3)}} \mE_{\g^{(2)}}  \f^{(2)}\lp \sum_{k=2}^{3}b_k \g_{j_w}^{(k)}\rp
 =\frac{d}{2} \mE_{\g_1^{(3)}} \mE_{\g_1^{(2)}}  \f^{(2)}\lp \sum_{k=2}^{3}b_k \g_1^{(k)}\rp,
\end{equation}
and
 \begin{eqnarray}\label{eq:reluact48}
 \sigma_{3,2;1}^2
 & = &\sum_{j_w=1}^{\frac{d}{2}} \lp  \mE_{\g^{(3)}}\lp\mE_{\g^{(2)}}  \f^{(2)}\lp \sum_{k=2}^{3}b_k \g_{j_w}^{(k)}\rp \rp^2
 - \lp \mE_{\g^{(3)}}\mE_{\g^{(2)}}  \f^{(2)}\lp \sum_{k=2}^{3}b_k \g_{j_w}^{(k)}\rp \rp^2 \rp \nonumber \\
 & = &   \frac{d}{2} \lp \mE_{\g_1^{(3)}}\lp \mE_{\g_1^{(2)}}  \f^{(2)}\lp \sum_{k=2}^{3}b_k \g_1^{(k)}\rp \rp^2
 - \lp\mE_{\g_1^{(3)}} \mE_{\g_1^{(2)}}  \f^{(2)}\lp \sum_{k=2}^{3}b_k \g_1^{(k)}\rp \rp^2  \rp.
\end{eqnarray}
A combination of (\ref{eq:reluact41})-(\ref{eq:reluact48}) gives
 \begin{eqnarray}\label{eq:reluact48a0}
\sum_{j_w=1}^{\frac{d}{2}}\f^{(2)}\lp \g_{j_w}^{(x,2)}\rp
=\sum_{j_w=1}^{\frac{d}{2}}\f^{(2)}\lp \sum_{k=2}^{3}b_k \g_{j_w}^{(k)}\rp
 \rightarrow  \sigma_{3,1;1} g_{f,1}^{(3,1)}+\sigma_{3,2;1} g_{f,1}^{(3,2)} +\mu_{3,2;1},
\end{eqnarray}
where $g_{f,1}^{(3,1)}$ and $g_{f,1}^{(3,2)}$ are independent standard normals. Due to symmetry, one analogously also has
 \begin{eqnarray}\label{eq:reluact49}
\sum_{j_w=\frac{d}{2}+1}^{d} \f^{(2)}\lp \g_{j_w}^{(x,2)}\rp  & \rightarrow & \sigma_{3,1;1} g_{f,2}^{(3,1)}+\sigma_{3,2;1} g_{f,2}^{(3,2)} +\mu_{3,2;1},
\end{eqnarray}
where $g_{f,2}^{(3,1)}$ and $g_{f,2}^{(3,2)}$ are independent standard normals (which are also independent of $g_{f,1}^{(3,1)}$ and $g_{f,1}^{(3,2)}$). It is then easy to observe that
 \begin{eqnarray}\label{eq:reluact50}
-\f^{(2)}\lp \g^{(x,2)} \rp^T\w = \sum_{j_w=1}^{\frac{d}{2}}\f^{(2)}\lp \g_{j_w}^{(x,2)}\rp - \sum_{j_w=\frac{d}{2}+1}^{d} \f^{(2)}\lp \g_{j_w}^{(x,2)}\rp
\longrightarrow \sqrt{2}\sigma_{3,1;1} g_{f}^{(3,1)}+\sqrt{2}\sigma_{3,2;1} g_{f}^{(3,2)},
\end{eqnarray}
where $g_{f}^{(3,1)}$ and $g_{f}^{(3,2)}$ are independent standard normals, where $g_{f}^{(3,1)}$ relates to the first part obtained by conditioning on $\g^{(3)}$ and the second part relates to the residual randomness over $\g^{(3)}$.

\noindent \red{\textbf{\emph{(ii) Handling $\mE_{\bar{\g}^{(3)}} \left \|  \frac{d\f^{(2)}\lp \g^{(x,2)} \rp }{d\g^{(x,2)}} \right \|_2^2$:}}}  Due to concentrations one has analogously to (\ref{eq:reluact26})
 \begin{eqnarray}\label{eq:reluact51}
\mE_{\bar{\g}^{(3)}} \left \|  \frac{d\f^{(2)}\lp \g^{(x,2)} \rp }{d\g^{(x,2)}} \right \|_2^2
=\mE_{\bar{\g}^{(3)}} \sum_{j_w=1}^{d}\lp \frac{d\f^{(2)}\lp \g_{j_w}^{(x,2)} \rp }{d\g_{j_w}^{(x,2)}}\rp^2
=  \sum_{j_w=1}^{d}\mE_{\g_{j_w}^{(x,2)}} h_s(\g_{j_w}^{(x,2)})=\frac{d}{2},
\end{eqnarray}
where $h_s(\cdot)$ is the unit step function.

One can now combine (\ref{eq:reluact39}), (\ref{eq:reluact44}), (\ref{eq:reluact48}), (\ref{eq:reluact50}), and (\ref{eq:reluact51}) to write
\begin{eqnarray}\label{eq:reluact52}
  z_i^{(2)}\lp   \bar{\g}^{(3)};\f^{(2)}\lp \q^{(net)} \rp   \rp
 &\rightarrow  &  \frac{ \lp \max\lp - \f^{(2)}\lp \g^{(x,2)} \rp^T\w,0\rp \rp^2}{\mE_{\bar{\g}^{(3)}} \left \|  \frac{d\f^{(2)}\lp \g^{(x,2)} \rp }{d\g^{(x,2)}} \right \|_2^2} \nonumber \\
  &\rightarrow  &  \lp \max\lp  \bar{b}_2g_{f}^{(3,1)}+\bar{b}_3g_{f}^{(3,2)},0\rp \rp^2,
 \end{eqnarray}
where
\begin{eqnarray}\label{eq:reluact53}
\bar{b}_2
& = &
\sqrt{\frac{d}{\mE_{\bar{\g}^{(3)}} \left \|  \frac{d\f^{(2)}\lp \g^{(x,2)} \rp }{d\g^{(x,2)}} \right \|_2^2
} \lp \mE_{\g_1^{(3)}} \mE_{\g_1^{(2)}} \lp \f^{(2)}\lp \sum_{k=2}^{3}b_k \g_1^{(k)}\rp \rp^2
 - \mE_{\g_1^{(3)}}\lp\mE_{\g_1^{(2)}}  \f^{(2)}\lp \sum_{k=2}^{3}b_k \g_1^{(k)}\rp \rp^2  \rp} \nonumber \\
 \nonumber \\
\bar{b}_3
& = & \sqrt{\frac{d}{\mE_{\bar{\g}^{(3)}} \left \|  \frac{d\f^{(2)}\lp \g^{(x,2)} \rp }{d\g^{(x,2)}} \right \|_2^2
} \lp \mE_{\g_1^{(3)}}\lp \mE_{\g_1^{(2)}}  \f^{(2)}\lp \sum_{k=2}^{3}b_k \g_1^{(k)}\rp \rp^2
 - \lp\mE_{\g_1^{(3)}} \mE_{\g_1^{(2)}}  \f^{(2)}\lp \sum_{k=2}^{3}b_k \g_1^{(k)}\rp \rp^2  \rp}, \nonumber \\
 \end{eqnarray}
which, after the utilization of  (\ref{eq:reluact51}), becomes
\begin{eqnarray}\label{eq:reluact53a0}
\bar{b}_2
 & = &
\sqrt{2 \lp \mE_{\g_1^{(3)}} \mE_{\g_1^{(2)}} \lp \f^{(2)}\lp \sum_{k=2}^{3}b_k \g_1^{(k)}\rp \rp^2
 - \mE_{\g_1^{(3)}}\lp\mE_{\g_1^{(2)}}  \f^{(2)}\lp \sum_{k=2}^{3}b_k \g_1^{(k)}\rp \rp^2  \rp}
\nonumber \\
\bar{b}_3
  & = & \sqrt{2 \lp \mE_{\g_1^{(3)}}\lp \mE_{\g_1^{(2)}}  \f^{(2)}\lp \sum_{k=2}^{3}b_k \g_1^{(k)}\rp \rp^2
 - \lp\mE_{\g_1^{(3)}} \mE_{\g_1^{(2)}}  \f^{(2)}\lp \sum_{k=2}^{3}b_k \g_1^{(k)}\rp \rp^2  \rp}.
 \end{eqnarray}
As mentioned earlier, $g_f^{(3,1)}$ relates to the randomness of $\g^{(2)}$ (i.e., ${\mathcal U}_2$) and
$g_f^{(3,2)}$ relates to the randomness of $\g^{(3)}$ (i.e., ${\mathcal U}_3$).

\noindent \red{\textbf{\emph{(iii) Specializing to $\f^{(2)}\lp \q^{(net)}\rp=\max\lp \q^{(net)},0\rp$:}}} We first observe
\begin{eqnarray}\label{eq:reluact54}
\bar{p}_1 =2\mE_{\g_1^{(3)}} \mE_{\g_1^{(2)}} \lp \f^{(2)}\lp \sum_{k=2}^{3}b_k \g_1^{(k)}\rp \rp^2=
 2\mE_{\g_1^{(3)}} \mE_{\g_1^{(2)}} \lp \max \lp \lp \sum_{k=2}^{3}b_k \g_1^{(k)}\rp,0\rp \rp^2=2\frac{1}{2}=1,
    \end{eqnarray}
and
\begin{eqnarray}\label{eq:reluact55}
\bar{p}_3 = 2\lp \mE_{\g_1^{(3)}} \mE_{\g_1^{(2)}}  \f^{(2)}\lp \sum_{k=2}^{3}b_k \g_1^{(k)}\rp \rp^2=
  2 \lp \mE_{\g_1^{(3)}} \mE_{\g_1^{(2)}}  \max \lp\lp \sum_{k=2}^{3}b_k \g_1^{(k)}\rp,0\rp \rp^2=2 \lp \frac{1}{2}\sqrt{\frac{2}{\pi}}\rp^2
  =\frac{1}{\pi}. \nonumber \\
 \end{eqnarray}
Then one also has
\begin{eqnarray}\label{eq:reluact56}
  \mE_{\g_1^{(2)}}  \f^{(2)}\lp \sum_{k=2}^{3}b_k \g_1^{(k)}\rp
&  = &
    \mE_{\g_1^{(2)}}  \max\lp \lp \sum_{k=2}^{3}b_k \g_1^{(k)}\rp,0 \rp
  =
    \mE_{\g_1^{(2)}}  \max\lp \lp \sqrt{1-\p_2} \g_1^{(2)} + \sqrt{\p_2} \g_1^{(3)}\rp,0 \rp \nonumber \\
& = &
\frac{1}{2} \lp \sqrt{\frac{2}{\pi}} \sqrt{1-\p_2} e^{-\frac{\lp \sqrt{\p_2}\g_1^{(3)}\rp.^2}{2 \sqrt{1-\p_2}^2}} + \sqrt{\p_2}\g_1^{(3)}\erf\lp \frac{\sqrt{\p_2}\g_1^{(3)}}{\sqrt{2}\sqrt{1-\p_2}}\rp + \sqrt{\p_2}\g_1^{(3)} \rp, \nonumber \\
  \end{eqnarray}
and
\begin{eqnarray}\label{eq:reluact57}
\bar{p}_2 & = &  2 \mE_{\g_1^{(3)}} \lp \mE_{\g_1^{(2)}}  \f^{(2)}\lp \sum_{k=2}^{3}b_k \g_1^{(k)}\rp  \rp^2 \nonumber \\
 & = &
\int_{-\infty}^{\infty} \lp \sqrt{\frac{2}{\pi}} \sqrt{1-\p_2} e^{-\frac{\lp \sqrt{\p_2}\g_1^{(3)}\rp.^2}{2 \sqrt{1-\p_2}^2}} + \sqrt{\p_2}\g_1^{(3)}\erf\lp \frac{\sqrt{\p_2}\g_1^{(3)}}{\sqrt{2}\sqrt{1-\p_2}}\rp + \sqrt{\p_2}\g_1^{(3)} \rp^2 \frac{e^{-\frac{\lp\g^{(3)}\rp^2}{2}}}{\sqrt{2\pi}}d\g^{(3)}. \nonumber \\
  \end{eqnarray}
Combining (\ref{eq:reluact53}), (\ref{eq:reluact54}), (\ref{eq:reluact55}), and (\ref{eq:reluact57}), we then find
\begin{eqnarray}\label{eq:reluact58}
\bar{b}_2
& = &
\sqrt{\bar{p}_1-\bar{p}_2}
\nonumber \\
\bar{b}_3
& = &
\sqrt{\bar{p}_2-\bar{p}_3},
 \end{eqnarray}
where, for the ReLU activations, $\bar{p}_1$, $\bar{p}_2$, and $\bar{p}_3$ are as in (\ref{eq:reluact54}), (\ref{eq:reluact55}), and (\ref{eq:reluact57}), respectively.

We can now return to the initial considerations, utilize (\ref{eq:reluact52}),(\ref{eq:reluact54}), (\ref{eq:reluact55}), (\ref{eq:reluact57}), and (\ref{eq:reluact58}), and rewrite (\ref{eq:reluact35}) as
\begin{eqnarray}\label{eq:reluact59}
    \bar{\psi}_{rd}^{(d,2)}(\p,\q,\c,\gamma_{sq},\gamma_{sq}^{(p)})
 & = &  \frac{1}{2}
(1-\p_2\q_2)\c_2
 -  \gamma_{sq}^{(p)}
-\Bigg(\Bigg. -\frac{1}{2\c_2} \log \lp \frac{2\gamma_{sq}-\c_2(1-\q_2)}{2\gamma_{sq}} \rp  \nonumber \\
 & & +  \frac{\q_2}{2(2\gamma_{sq}-\c_2(1-\q_2))}   \Bigg.\Bigg)
 \nonumber \\
& &   + \gamma_{sq}
 -\alpha\frac{1}{\c_2}\mE_{{\mathcal U}_3^{(j_w)}} \log\lp \mE_{{\mathcal U}_2^{(j_w)}} e^{-\c_2\frac{z_i^{(2)}\lp \bar{\g}^{(3)};\max\lp \q^{(net)},0\rp\rp }{4\gamma_{sq}}}\rp \nonumber \\
  & = &  \frac{1}{2}
(1-\p_2\q_2)\c_2
 -  \gamma_{sq}^{(p)}
-\Bigg(\Bigg. -\frac{1}{2\c_2} \log \lp \frac{2\gamma_{sq}-\c_2(1-\q_2)}{2\gamma_{sq}} \rp  \nonumber \\
 & & +  \frac{\q_2}{2(2\gamma_{sq}-\c_2(1-\q_2))}   \Bigg.\Bigg)
 \nonumber \\
& &   + \gamma_{sq}
 -\alpha\frac{1}{\c_2}\mE_{g_f^{(3,2)}} \log\lp \mE_{g_f^{(3,1)}} e^{-\c_2\frac{ \lp \max\lp  \bar{b}_2g_{f}^{(3,1)}+\bar{b}_3g_{f}^{(3,2)},0\rp \rp^2 }{4\gamma_{sq}}}\rp.
    \end{eqnarray}
One now observes a remarkable property of the above machinery. Namely, the expression in (\ref{eq:reluact59}) is structurally identical to the corresponding one given in equation (50) in \cite{Stojnicnegsphflrdt23}. The only difference is that now one has the adjusted values $\bar{b}_2=\sqrt{1-\bar{p}_2}$ and $\bar{b}_3=\sqrt{\bar{p}_2-\bar{p}_3}$. One can then solve the remaining integrals as in \cite{Stojnicnegsphflrdt23} and obtain
\begin{eqnarray}\label{eq:reluact60}
\hat{h} & = &  -\frac{\sqrt{\bar{p}_2-\bar{p}_3}g_f^{(3,2)}}{\sqrt{\bar{p}_1-\bar{p}_2}}    \nonumber \\
\hat{B} & = & \frac{\c_2}{4\gamma_{sq}} 
\nonumber \\
\hat{C} & = & \sqrt{\bar{p}_2-\bar{p}_3} g_f^{(3,2)} \nonumber \\
f_{(zd)}^{(2,f)}& = & \frac{e^{-\frac{\hat{B}\hat{C}^2}{2\bar{p}_11-\bar{p}_2)\hat{B} + 1}}}{2\sqrt{2(\bar{p}_1-\bar{p}_2)\hat{B} + 1}}
\erfc\lp\frac{\hat{h}}{\sqrt{4(\bar{p}_1-\bar{p}_2)\hat{B} + 2}}\rp
\nonumber \\
f_{(zu)}^{(2,f)}& = & \frac{1}{2}\erfc\lp-\frac{\hat{h}}{\sqrt{2}}\rp,  
\nonumber \\
f_{(zt)}^{(2,f)}& = & f_{(zd)}^{(2,f)}+f_{(zu)}^{(2,f)}.
   \end{eqnarray}
and
\begin{eqnarray}\label{eq:reluact61}
\mE_{g_f^{(3,2)}} \log\lp \mE_{g_f^{(3,1)}} e^{-\c_2\frac{ \lp \max\lp  \bar{b}_2g_{f}^{(3,1)}+\bar{b}_3g_{f}^{(3,2)},0\rp \rp^2 }{4\gamma_{sq}}}\rp=   \mE_{g_f^{(3,2)}} \log\lp f_{(zt)}^{(2,f)} \rp.
    \end{eqnarray}
As in \cite{Stojnicnegsphflrdt23}, one now needs to compute \emph{five} derivatives with respect to $\q_2$, $\p_2$, $\c_2$, $\gamma_{sq}$, and $\gamma_{sq}^{(p)}$. These are structurally identical (with a very minimal adjustment for $\bar{p}_3\neq 0$) to the corresponding ones computed in \cite{Stojnicnegsphflrdt23}. The only tiny structural difference is that for the $\p_2$ derivative, one needs to additionally trivially account for $\frac{d\bar{p}_2}{d\p_2}$. After computing the derivatives one then solves the following system of equations
\begin{eqnarray}\label{eq:reluact62}
  \frac{d\bar{\psi}_{rd}^{(d,2)}(\p,\q,\c,\gamma_{sq},\gamma_{sq}^{(p)}) }{d\q_2}
 & = &  0\nonumber \\
 \frac{d\bar{\psi}_{rd}^{(d,2)}(\p,\q,\c,\gamma_{sq},\gamma_{sq}^{(p)}) }{d\p_2}
 & = &  0 \nonumber \\
 \frac{d\bar{\psi}_{rd}^{(d,2)}(\p,\q,\c,\gamma_{sq},\gamma_{sq}^{(p)}) }{d\c_2}
 & = &  0\nonumber \\
 \frac{d\bar{\psi}_{rd}^{(d,2)}(\p,\q,\c,\gamma_{sq},\gamma_{sq}^{(p)}) }{d\gamma_{sq}^{(p)}}
 & = &  0\nonumber \\
 \frac{d\bar{\psi}_{rd}^{(d,2)}(\p,\q,\c,\gamma_{sq},\gamma_{sq}^{(p)}) }{d\gamma_{sq}}
 & = &  0,
      \end{eqnarray}
and denotes the obtained solution by $\hat{\q}_2,\hat{\p}_2,\hat{\c}_2,\hat{\gamma}_{sq}^{(p)},\hat{\gamma}_{sq}$. Due to structural identicalness between the derivatives considered here and those considered in \cite{Stojnicnegsphflrdt23}, one actually even has that the following closed form relations, established in \cite{Stojnicnegsphflrdt23}, actually hold here as well
\begin{eqnarray}
 \hat{\gamma}_{sq}^{(p)} &  =  &  \frac{1}{2}\frac{1-\hat{\q}_2}{1-\hat{\p}_2}
 \sqrt{\frac{\hat{\p}_2}{\hat{\q}_2}} \nonumber \\
  \hat{\c}_2 & = &  \frac{1}{1-\hat{\p}_2}
 \sqrt{\frac{\hat{\p}_2}{\hat{\q}_2}}- \frac{1}{1-\hat{\q}_2} \sqrt{\frac{\hat{\q}_2}{\hat{\p}_2}}.
   \label{eq:reluact63}
 \end{eqnarray}
Taking concrete numerical values for all the parameters, we then from  $\bar{\psi}_{rd}^{(d,2)}(\hat{\p},\hat{\q},\hat{\c},\hat{\gamma}_{sq},\hat{\gamma}_{sq}^{(p)})=0$ find for
 \begin{equation}\label{eq:reluact64}
\hspace{-2in}(\mbox{\bl{\textbf{\emph{full} second level:}}}) \qquad \qquad  a_c^{(2,f)}(\infty) =  \lim_{d\rightarrow\infty} a_c^{(2,f)}(d)  \approx \bl{\mathbf{2.6643}}.
  \end{equation}

\noindent \underline{\textbf{\emph{Concrete numerical values:}}}  In  Table \ref{tab:tab1}, the above $a_c^{(2,f)}(\infty)$ is complemented with the concrete values of all the relevant quantities related to the second \emph{full} (2-sfl RDT) level of lifting. Moreover, to enable a systematic view of the lifting progress,  the corresponding  quantities for the first \emph{full} (1-sfl RDT) and the second \emph{partial} (2-spf RDT) level are shown as well.
\begin{table}[h]
\caption{$r$-sfl RDT parameters; \textbf{\emph{ReLU}} activations -- \emph{wide} treelike net capacity;  $\hat{\c}_1\rightarrow 1$; $d\rightarrow\infty$; $n\rightarrow\infty$}\vspace{.1in}
\centering
\def\arraystretch{1.2}
\begin{tabular}{||l||c|c||c|c||c|c||c||c||}\hline\hline
 \hspace{-0in}$r$-sfl RDT                                             & $\hat{\gamma}_{sq}$    & $\hat{\gamma}_{sq}^{(p)}$    &  $\hat{\p}_2$ & $\hat{\p}_1$     & $\hat{\q}_2$  & $\hat{\q}_1$ &  $\hat{\c}_2$    & $\alpha_c^{(r)}(-1.5)$  \\ \hline\hline
$1$-sfl RDT                                      & $0.5$ & $0.5$ &  $0$  & $\rightarrow 1$   & $0$ & $\rightarrow 1$
 &  $\rightarrow 0$  & \bl{$\mathbf{2.9339}$} \\ \hline\hline
 $2$-spl RDT                                      & $0.3339$ & $0.7487$ &  $0$ & $\rightarrow 1$ &  $0$ & $\rightarrow 1$ &   $0.8295$   & \bl{$\mathbf{2.8503}$} \\ \hline
  $2$-sfl RDT                                      & $0.1396$  & $1.7903$  & $0.7571$ & $\rightarrow 1$ &  $0.3822$ & $\rightarrow 1$
 &  $4.6457$   & \bl{$\mathbf{2.6643}$}  \\ \hline\hline
  \end{tabular}
\label{tab:tab1}
\end{table}

We also observe that the capacity results shown in Table \ref{tab:tab1} exactly match the corresponding ones obtained using the statistical physics replica methods relying on the replica symmetry, partial 1rsb, and full 1rsb in \cite{ZavPeh21}. Moreover, the very same replica symmetry and 1rsb predictions were obtained in \cite{BalMalZech19} as well and are also exactly matched.

\subsubsection{$r=3$ -- third level of lifting}
\label{sec:thirdlev}

The main ideas behind the partial lifting were already presented in earlier sections. We here skip repeating such considerations and immediately  look at the \emph{full} third level of lifting. For $r=3$, one has that $\hat{\p}_1\rightarrow 1$ and $\hat{\q}_1\rightarrow 1$  as well as  $\hat{\p}_{r+1}=\hat{\p}_{4}=\hat{\q}_{r+1}=\hat{\q}_{4}=0$.  Analogously to (\ref{eq:negprac19}), (\ref{eq:reluact28a0}), and (\ref{eq:reluact35}), we then write
{\small\begin{align}\label{eq:reluact65}
    \bar{\psi}_{rd}^{(d,3)}(\p,\q,\c,\gamma_{sq},\gamma_{sq}^{(p)})   & =   \frac{1}{2}
(1-\p_2\q_2)\c_2+ \frac{1}{2}
(\p_2\q_2-\p_3\q_3)\c_3 \nonumber \\
& \quad  -  \gamma_{sq}^{(p)}  - \frac{1}{\c_3}\mE_{{\mathcal U}_4^{(j_w)}}\log\lp \mE_{{\mathcal U}_3^{(j_w)}} \lp \mE_{{\mathcal U}_2^{(j_w)}} e^{\c_2\frac{\lp\sqrt{1-\q_2}\h_1^{(2,1)} +\sqrt{\q_2-\q_3}\h_1^{(3,1)}+\sqrt{\q_3}\h_1^{(4,1)} \rp^2}{4 \gamma_{sq}^{(p)}}}\rp^{\frac{\c_3}{\c_2}}\rp \nonumber \\
& \quad   + \gamma_{sq}
 -\frac{\alpha}{\c_3}\mE_{{\mathcal U}_4^{(j_w)}} \log\lp \mE_{{\mathcal U}_3^{(j_w)}} \lp \mE_{{\mathcal U}_2^{(j_w)}} e^{-\c_2\frac{z_i^{(3)}\lp \bar{\g}^{(4)};\max\lp \q^{(net)},0\rp\rp}{4\gamma_{sq}}}\rp^{\frac{\c_3}{\c_2}}\rp \nonumber \\
 & =   \frac{1}{2}
(1-\p_2\q_2)\c_2+ \frac{1}{2}
(\p_2\q_2-\p_3\q_3)\c_3 -  \gamma_{sq}^{(p)} \nonumber \\
&\quad
-\Bigg(\Bigg. -\frac{1}{2\c_2} \log \lp \frac{2\gamma_{sq}^{(p)}-\c_2(1-\q_2)}{2\gamma_{sq}^{(p)}} \rp  -\frac{1}{2\c_3} \log \lp \frac{2\gamma_{sq}^{(p)}-\c_2(1-\q_2)-\c_3(\q_2-\q_3)}{2\gamma_{sq}^{(p)}-\c_2(1-\q_2)} \rp  \nonumber \\
& \quad +  \frac{\q_3}{2(2\gamma_{sq}^{(p)}-\c_2(1-\q_2)-\c_3(\q_2-\q_3))}   \Bigg.\Bigg)
 \nonumber \\
& \quad   + \gamma_{sq}
 -\frac{\alpha}{\c_3}\mE_{{\mathcal U}_4^{(j_w)}} \log\lp \mE_{{\mathcal U}_3^{(j_w)}} \lp \mE_{{\mathcal U}_2^{(j_w)}} e^{-\c_2\frac{z_i^{(3)}\lp \bar{\g}^{(4)};\max\lp \q^{(net)},0\rp\rp}{4\gamma_{sq}}}\rp^{\frac{\c_3}{\c_2}}\rp,
    \end{align}}

\noindent where the first sequence of integrals is handled through the utilization of the closed form solutions obtained in \cite{Stojnichopflrdt23,Stojnicnegsphflrdt23}. To be able to proceed with further analysis of (\ref{eq:reluact65}), we now again digress for a moment and look at the parts of the above expression that turn out to be particularly relevant.

 \noindent \red{\textbf{\emph{(i) Handling  $  z_i^{(3)}\lp   \bar{\g}^{(4)};\max\lp \q^{(net)} \rp   \rp
$:}}}  Repeating the reasoning between  (\ref{eq:reluact36}) and (\ref{eq:reluact52}), one can write the following, third level, analogue to (\ref{eq:reluact52})
\begin{eqnarray}\label{eq:reluact66}
  z_i^{(3)}\lp   \bar{\g}^{(4)};\f^{(2)}\lp \q^{(net)} \rp   \rp
 &\rightarrow  &  \frac{ \lp \max\lp - \f^{(2)}\lp \g^{(x,3)} \rp^T\w,0\rp \rp^2}{\mE_{\bar{\g}^{(4)}} \left \|  \frac{d\f^{(2)}\lp \g^{(x,3)} \rp }{d\g^{(x,3)}} \right \|_2^2} \nonumber \\
  &\rightarrow  &  \lp \max\lp  \bar{b}_2g_{f}^{(4,1)}+\bar{b}_3g_{f}^{(4,2)},0+\bar{b}_4g_{f}^{(4,3)},0\rp \rp^2,
 \end{eqnarray}
with
{\small\begin{eqnarray}\label{eq:reluact67}
\bar{b}_2
& = &
\sqrt{\frac{d}{\mE_{\bar{\g}^{(4)}} \left \|  \frac{d\f^{(2)}\lp \g^{(x,3)} \rp }{d\g^{(x,3)}} \right \|_2^2} \lp \mE_{\g_1^{(4)}} \mE_{\g_1^{(3)}} \mE_{\g_1^{(2)}} \lp \f^{(2)}\lp \sum_{k=2}^{4}b_k \g_1^{(k)}\rp \rp^2
 - \mE_{\g_1^{(4)}} \mE_{\g_1^{(3)}} \lp  \mE_{\g_1^{(2)}}  \f^{(2)}\lp \sum_{k=2}^{4}b_k \g_1^{(k)}\rp \rp^2  \rp}
\nonumber \\
 \bar{b}_3
& = & \sqrt{\frac{d}{\mE_{\bar{\g}^{(4)}} \left \|  \frac{d\f^{(2)}\lp \g^{(x,3)} \rp }{d\g^{(x,3)}} \right \|_2^2} \lp \mE_{\g_1^{(4)}}\mE_{\g_1^{(3)}} \lp  \mE_{\g_1^{(2)}}  \f^{(2)}\lp \sum_{k=2}^{4}b_k \g_1^{(k)}\rp \rp^2
 - \mE_{\g_1^{(4)}} \lp  \mE_{\g_1^{(3)}}  \mE_{\g_1^{(2)}}  \f^{(2)}\lp \sum_{k=2}^{4}b_k \g_1^{(k)}\rp \rp^2   \rp}
 \nonumber \\
 \bar{b}_4
& = & \sqrt{\frac{d}{\mE_{\bar{\g}^{(4)}} \left \|  \frac{d\f^{(2)}\lp \g^{(x,3)} \rp }{d\g^{(x,3)}} \right \|_2^2} \lp \mE_{\g_1^{(4)}}  \lp \mE_{\g_1^{(3)}}  \mE_{\g_1^{(2)}}  \f^{(2)}\lp \sum_{k=2}^{4}b_k \g_1^{(k)}\rp \rp^2
 -  \lp \mE_{\g_1^{(4)}} \mE_{\g_1^{(3)}} \mE_{\g_1^{(2)}}   \f^{(2)}\lp \sum_{k=2}^{4}b_k \g_1^{(k)}\rp \rp^2    \rp}, \nonumber \\
 \end{eqnarray}}
which after the utilization of
\begin{eqnarray}\label{eq:reluact67a0}
  \mE_{\bar{\g}^{(4)}} \left \|  \frac{d\f^{(2)}\lp \g^{(x,3)} \rp }{d\g^{(x,3)}} \right \|_2^2   =  \mE_{\bar{\g}^{(4)}} \left \|  \frac{d\max\lp \g^{(x,3)},0 \rp }{d\g^{(x,3)}} \right \|_2^2   =\frac{d}{2},
 \end{eqnarray}
becomes
{\small\begin{eqnarray}\label{eq:reluact67a1}
\bar{b}_2
 & = &
\sqrt{2 \lp \mE_{\g_1^{(4)}} \mE_{\g_1^{(3)}} \mE_{\g_1^{(2)}} \lp \f^{(2)}\lp \sum_{k=2}^{4}b_k \g_1^{(k)}\rp \rp^2
 - \mE_{\g_1^{(4)}} \mE_{\g_1^{(3)}} \lp  \mE_{\g_1^{(2)}}  \f^{(2)}\lp \sum_{k=2}^{4}b_k \g_1^{(k)}\rp \rp^2  \rp}
\nonumber \\
\bar{b}_3
 & = & \sqrt{2 \lp \mE_{\g_1^{(4)}}\mE_{\g_1^{(3)}} \lp  \mE_{\g_1^{(2)}}  \f^{(2)}\lp \sum_{k=2}^{4}b_k \g_1^{(k)}\rp \rp^2
 - \mE_{\g_1^{(4)}} \lp  \mE_{\g_1^{(3)}}  \mE_{\g_1^{(2)}}  \f^{(2)}\lp \sum_{k=2}^{4}b_k \g_1^{(k)}\rp \rp^2   \rp}
 \nonumber \\
\bar{b}_4
  & = & \sqrt{2 \lp \mE_{\g_1^{(4)}}  \lp \mE_{\g_1^{(3)}}  \mE_{\g_1^{(2)}}  \f^{(2)}\lp \sum_{k=2}^{4}b_k \g_1^{(k)}\rp \rp^2
 -  \lp \mE_{\g_1^{(4)}} \mE_{\g_1^{(3)}} \mE_{\g_1^{(2)}}   \f^{(2)}\lp \sum_{k=2}^{4}b_k \g_1^{(k)}\rp \rp^2    \rp},
 \end{eqnarray}}

 \noindent
where $b_2=\sqrt{1-\p_2}$, $b_3=\sqrt{\p_2-\p_3}$, and $b_4=\sqrt{\p_3}$, and, similarly to what we had earlier, $g_f^{(4,1)}$ relates to the randomness of $\g^{(2)}$ (i.e., ${\mathcal U}_2$),
$g_f^{(4,2)}$ to the randomness of $\g^{(3)}$ (i.e., ${\mathcal U}_3$), and $g_f^{(4,3)}$ to the randomness of $\g^{(4)}$ (i.e., ${\mathcal U}_4$).

\noindent \red{\textbf{\emph{(ii) Further specializing to $\f^{(2)}\lp \q^{(net)}\rp=\max\lp \q^{(net)},0\rp$:}}} We start by observing
\begin{eqnarray}\label{eq:reluact68}
\bar{p}_1 =2\mE_{\g_1^{(4)}} \mE_{\g_1^{(3)}} \mE_{\g_1^{(2)}} \lp \f^{(2)}\lp \sum_{k=2}^{4}b_k \g_1^{(k)}\rp \rp^2=
 2\mE_{\g_1^{(4)}}  \mE_{\g_1^{(3)}} \mE_{\g_1^{(2)}} \lp \max \lp \lp \sum_{k=2}^{4}b_k \g_1^{(k)}\rp,0\rp \rp^2=2\frac{1}{2}=1,\nonumber \\
    \end{eqnarray}
and
\begin{eqnarray}\label{eq:reluact69}
\bar{p}_4 & = & 2\lp \mE_{\g_1^{(4)}}  \mE_{\g_1^{(3)}} \mE_{\g_1^{(2)}}  \f^{(2)}\lp \sum_{k=2}^{4}b_k \g_1^{(k)}\rp \rp^2 \nonumber \\
& = &
  2 \lp \mE_{\g_1^{(4)}}  \mE_{\g_1^{(3)}} \mE_{\g_1^{(2)}}  \max \lp\lp \sum_{k=2}^{4}b_k \g_1^{(k)}\rp,0\rp \rp^2=2 \lp \frac{1}{2}\sqrt{\frac{2}{\pi}}\rp^2
  =\frac{1}{\pi}. \nonumber \\
 \end{eqnarray}
Then one also has
\begin{eqnarray}\label{eq:reluact70}
  \mE_{\g_1^{(3)}}  \mE_{\g_1^{(2)}}  \f^{(2)}\lp \sum_{k=2}^{4}b_k \g_1^{(k)}\rp
&  = &
  \mE_{\g_1^{(3)}}     \mE_{\g_1^{(2)}}  \max\lp \lp \sum_{k=2}^{4}b_k \g_1^{(k)}\rp,0 \rp \nonumber \\
&  = &
    \mE_{\g_1^{(3)}}    \mE_{\g_1^{(2)}}  \max\lp \lp \sqrt{1-\p_2} \g_1^{(2)} + \sqrt{\p_2-\p_3} \g_1^{(3)}+ \sqrt{\p_3} \g_1^{(4)}\rp,0 \rp \nonumber \\
& = &
\frac{1}{2} \lp \sqrt{\frac{2}{\pi}} \sqrt{1-\p_3} e^{-\frac{\lp \sqrt{\p_3}\g_1^{(4)}\rp^2}{2 \sqrt{1-\p_3}^2}} + \sqrt{\p_3}\g_1^{(4)}\erf\lp \frac{\sqrt{\p_3}\g_1^{(4)}}{\sqrt{2}\sqrt{1-\p_3}}\rp + \sqrt{\p_3}\g_1^{(4)} \rp, \nonumber \\
  \end{eqnarray}
and
\begin{eqnarray}\label{eq:reluact71}
\bar{p}_3  & = &  2 \mE_{\g_1^{(4)}} \lp \mE_{\g_1^{(3)}} \mE_{\g_1^{(2)}}  \f^{(2)}\lp \sum_{k=2}^{4}b_k \g_1^{(k)}\rp  \rp^2 \nonumber \\
 & = &
  \frac{1}{2}\int_{-\infty}^{\infty} \lp  \sqrt{\frac{2}{\pi}} \sqrt{1-\p_3} e^{-\frac{\lp \sqrt{\p_3}\g_1^{(4)}\rp^2}{2 \sqrt{1-\p_3}^2}} + \sqrt{\p_3}\g_1^{(4)}\erf\lp \frac{\sqrt{\p_3}\g_1^{(4)}}{\sqrt{2}\sqrt{1-\p_3}}\rp + \sqrt{\p_3}\g_1^{(4)}  \rp^2 \frac{e^{-\frac{\lp\g^{(4)}\rp^2}{2}}}{\sqrt{2\pi}}d\g^{(4)} \nonumber \\
 & \triangleq & \bar{p}_x (\p_3).
  \end{eqnarray}
Moreover, we also find
\begin{eqnarray}\label{eq:reluact72}
     \mE_{\g_1^{(2)}}  \f^{(2)}\lp \sum_{k=2}^{4}b_k \g_1^{(k)}\rp
&  = &
      \mE_{\g_1^{(2)}}  \max\lp \lp \sum_{k=2}^{4}b_k \g_1^{(k)}\rp,0 \rp \nonumber \\
&  = &
      \mE_{\g_1^{(2)}}  \max\lp \lp \sqrt{1-\p_2} \g_1^{(2)} + \sqrt{\p_2-\p_3} \g_1^{(3)}+ \sqrt{\p_3} \g_1^{(4)}\rp,0 \rp, \nonumber \\
  \end{eqnarray}
and
\begin{eqnarray}\label{eq:reluact73}
\bar{p}_2 & = &  2 \mE_{\g_1^{(4)}}  \mE_{\g_1^{(3)}} \lp \mE_{\g_1^{(2)}}  \f^{(2)}\lp \sum_{k=2}^{4}b_k \g_1^{(k)}\rp  \rp^2  =
\bar{p}_x(\p_2). \nonumber \\
  \end{eqnarray}
A combination of (\ref{eq:reluact67}), (\ref{eq:reluact68}), (\ref{eq:reluact69}), (\ref{eq:reluact71}), and (\ref{eq:reluact73}) then also gives
\begin{eqnarray}\label{eq:reluact74}
\bar{b}_2
& = &
\sqrt{\bar{p}_1-\bar{p}_2}
\nonumber \\
\bar{b}_3
& = &
\sqrt{\bar{p}_2-\bar{p}_3}
\nonumber \\
\bar{b}_4
& = &
\sqrt{\bar{p}_3-\bar{p}_4},
 \end{eqnarray}
where, for the ReLU activations, $\bar{p}_1$, $\bar{p}_2$, $\bar{p}_3$, and $\bar{p}_4$ are as in (\ref{eq:reluact68}), (\ref{eq:reluact69}), (\ref{eq:reluact71}), and (\ref{eq:reluact73}), respectively.

One can now return to the analysis of (\ref{eq:reluact65}) and rewrite it as
{\small\begin{align}\label{eq:reluact75}
    \bar{\psi}_{rd}^{(d,3)}(\p,\q,\c,\gamma_{sq},\gamma_{sq}^{(p)})
     & =   \frac{1}{2}
(1-\p_2\q_2)\c_2+ \frac{1}{2}
(\p_2\q_2-\p_3\q_3)\c_3 -  \gamma_{sq}^{(p)} \nonumber \\
&\quad
-\Bigg(\Bigg. -\frac{1}{2\c_2} \log \lp \frac{2\gamma_{sq}^{(p)}-\c_2(1-\q_2)}{2\gamma_{sq}^{(p)}} \rp  -\frac{1}{2\c_3} \log \lp \frac{2\gamma_{sq}^{(p)}-\c_2(1-\q_2)-\c_3(\q_2-\q_3)}{2\gamma_{sq}^{(p)}-\c_2(1-\q_2)} \rp  \nonumber \\
& \quad +  \frac{\q_3}{2(2\gamma_{sq}^{(p)}-\c_2(1-\q_2)-\c_3(\q_2-\q_3))}   \Bigg.\Bigg)
 \nonumber \\
& \quad   + \gamma_{sq}
 -\frac{\alpha}{\c_3}\mE_{{\mathcal U}_4^{(j_w)}} \log\lp \mE_{{\mathcal U}_3^{(j_w)}} \lp \mE_{{\mathcal U}_2^{(j_w)}} e^{-\c_2\frac{z_i^{(3)}\lp \bar{\g}^{(4)};\max\lp \q^{(net)},0\rp\rp}{4\gamma_{sq}}}\rp^{\frac{\c_3}{\c_2}}\rp \nonumber \\
 & =   \frac{1}{2}
(1-\p_2\q_2)\c_2+ \frac{1}{2}
(\p_2\q_2-\p_3\q_3)\c_3 -  \gamma_{sq}^{(p)} \nonumber \\
&\quad
-\Bigg(\Bigg. -\frac{1}{2\c_2} \log \lp \frac{2\gamma_{sq}^{(p)}-\c_2(1-\q_2)}{2\gamma_{sq}^{(p)}} \rp  -\frac{1}{2\c_3} \log \lp \frac{2\gamma_{sq}^{(p)}-\c_2(1-\q_2)-\c_3(\q_2-\q_3)}{2\gamma_{sq}^{(p)}-\c_2(1-\q_2)} \rp  \nonumber \\
& \quad +  \frac{\q_3}{2(2\gamma_{sq}^{(p)}-\c_2(1-\q_2)-\c_3(\q_2-\q_3))}   \Bigg.\Bigg)
 \nonumber \\
& \quad   + \gamma_{sq}
 -\frac{\alpha}{\c_3}
 \mE_{g_{f}^{(4,3)}} \log\lp \mE_{g_{f}^{(4,2)}} \lp \mE_{g_{f}^{(4,1)}} e^{-\c_2\frac{\lp \max\lp  \bar{b}_2g_{f}^{(4,1)}+\bar{b}_3g_{f}^{(4,2)},0+\bar{b}_4g_{f}^{(4,3)},0\rp \rp^2}{4\gamma_{sq}}}\rp^{\frac{\c_3}{\c_2}}\rp.
    \end{align}}

\noindent The very same remarkable property that was observed when considering the second level of lifting right after (\ref{eq:reluact59}), remains in place  and one again observes the structural identicalness between the expression in (\ref{eq:reluact75}) and the corresponding one given in equation (91) in \cite{Stojnicnegsphflrdt23}. As earlier, the adjusted values $\bar{b}_2=\sqrt{\bar{p}_1-\bar{p}_2}$, $\bar{b}_3=\sqrt{\bar{p}_2-\bar{p}_3}$, and  $\bar{b}_4=\sqrt{\bar{p}_3-\bar{p}_4}$ are the only difference. This then enables us to solve the remaining integrals as in \cite{Stojnicnegsphflrdt23} and write
\begin{eqnarray}\label{eq:reluact76}
\tilde{h} & = &  -\frac{\sqrt{\bar{p}_2-\bar{p}_3}g_{f}^{(4,2)}+\sqrt{\bar{p}_3-\bar{p}_4}g_{f}^{(4,3)} }{\sqrt{\bar{p}_1-\bar{p}_2}}    \nonumber \\
\tilde{B} & = & \frac{\c_2}{4\gamma_{sq}} 
\nonumber \\
\tilde{C} & = & \sqrt{\bar{p}_2-\bar{p}_3}g_{f}^{(4,2)}+\sqrt{\bar{p}_3-\bar{p}_4}g_{f}^{(4,3)} \nonumber \\
f_{(zd)}^{(3,f)}& = & \frac{e^{-\frac{\tilde{B}\tilde{C}^2}{2(\bar{p}_1-\bar{p}_2)\tilde{B} + 1}}}{2\sqrt{2(\bar{p}_1-\bar{p}_2)\tilde{B} + 1}}
\erfc\lp\frac{\tilde{h}}{\sqrt{4(\bar{p}_1-\bar{p}_2)\tilde{B} + 2}}\rp
\nonumber \\
f_{(zu)}^{(3,f)}& = & \frac{1}{2}\erfc\lp-\frac{\tilde{h}}{\sqrt{2}}\rp,  
\nonumber \\
f_{(zt)}^{(3,f)}& = & f_{(zd)}^{(3,f)}+f_{(zu)}^{(3,f)}.
   \end{eqnarray}
and
\begin{eqnarray}\label{eq:reluact77}
\mE_{g_{f}^{(4,3)}} \log\lp \mE_{g_{f}^{(4,2)}} \lp \mE_{g_{f}^{(4,1)}} e^{-\c_2\frac{\lp \max\lp  \bar{b}_2g_{f}^{(4,1)}+\bar{b}_3g_{f}^{(4,2)},0+\bar{b}_4g_{f}^{(4,3)},0\rp \rp^2}{4\gamma_{sq}}}\rp^{\frac{\c_3}{\c_2}}\rp
=   \mE_{g_{f}^{(4,3)}} \log\lp \mE_{g_{f}^{(4,2)}} \lp f_{(zt)}^{(3,f)} \rp^{\frac{\c_3}{\c_2}}\rp. \nonumber \\
    \end{eqnarray}
Combining (\ref{eq:reluact75}) and (\ref{eq:reluact77}), we obtain
\begin{align}\label{eq:reluact78}
    \bar{\psi}_{rd}(\p,\q,\c,\gamma_{sq},\gamma_{sq}^{(p)})
 & =    \frac{1}{2}
(1-\p_2\q_2)\c_2+ \frac{1}{2}
(\p_2\q_2-\p_3\q_3)\c_3 -  \gamma_{sq}^{(p)}
-\Bigg(\Bigg. -\frac{1}{2\c_2} \log \lp \frac{2\gamma_{sq}^{(p)}-\c_2(1-\q_2)}{2\gamma_{sq}^{(p)}} \rp
 \nonumber \\
& \quad -\frac{1}{2\c_3} \log \lp \frac{2\gamma_{sq}^{(p)}-\c_2(1-\q_2)-\c_3(\q_2-\q_3)}{2\gamma_{sq}^{(p)}-\c_2(1-\q_2)} \rp
 \nonumber \\
& \quad +  \frac{\q_3}{2(2\gamma_{sq}^{(p)}-\c_2(1-\q_2)-\c_3(\q_2-\q_3))}   \Bigg.\Bigg)
 + \gamma_{sq}
 -\frac{\alpha}{\c_3}   \mE_{g_{f}^{(4,3)}} \log\lp \mE_{g_{f}^{(4,2)}} \lp f_{(zt)}^{(3,f)} \rp^{\frac{\c_3}{\c_2}}\rp.
    \end{align}

Following what we presented in the earlier sections when we discussed the second level of lifting and relying on \cite{Stojnicnegsphflrdt23}, one proceeds by computing the \emph{eight} derivatives with respect to $\q_2$, $\q_3$, $\p_2$, $\p_3$, $\c_2$, $\c_3$, $\gamma_{sq}$, and $\gamma_{sq}^{(p)}$. Keeping in mind very minimal adjustments for $\bar{p}_4\neq 0$, the resulting derivatives are structurally identical to the corresponding ones from \cite{Stojnicnegsphflrdt23}. The only small structural difference is that for the $\p_2$ and $\p_3$ derivatives, a trivial additional accounting for $\frac{d\bar{p}_2}{d\p_2}$, and $\frac{d\bar{p}_3}{d\p_3}$ is needed as well. After computing all the derivatives, one then solves the following system of equations
\begin{eqnarray}\label{eq:reluact79}
  \frac{d\bar{\psi}_{rd}^{(d,3)}(\p,\q,\c,\gamma_{sq},\gamma_{sq}^{(p)}) }{d\q_2}
 & = &
   \frac{d\bar{\psi}_{rd}^{(d,3)}(\p,\q,\c,\gamma_{sq},\gamma_{sq}^{(p)}) }{d\q_3}
=0
\nonumber \\
 \frac{d\bar{\psi}_{rd}^{(d,3)}(\p,\q,\c,\gamma_{sq},\gamma_{sq}^{(p)}) }{d\p_2}
 & = &
  \frac{d\bar{\psi}_{rd}^{(d,3)}(\p,\q,\c,\gamma_{sq},\gamma_{sq}^{(p)}) }{d\p_3}
= 0
\nonumber \\
 \frac{d\bar{\psi}_{rd}^{(d,3)}(\p,\q,\c,\gamma_{sq},\gamma_{sq}^{(p)}) }{d\c_2}
 & = &
  \frac{d\bar{\psi}_{rd}^{(d,3)}(\p,\q,\c,\gamma_{sq},\gamma_{sq}^{(p)}) }{d\c_3}
=0
 \nonumber \\
 \frac{d\bar{\psi}_{rd}^{(d,3)}(\p,\q,\c,\gamma_{sq},\gamma_{sq}^{(p)}) }{d\gamma_{sq}^{(p)}}
 & = &   \frac{d\bar{\psi}_{rd}^{(d,3)}(\p,\q,\c,\gamma_{sq},\gamma_{sq}^{(p)}) }{d\gamma_{sq}}
  = 0,
      \end{eqnarray}
and denotes the obtained solution by $\hat{\q}_2,\hat{\q}_3,\hat{\p}_2,\hat{\p}_3,\hat{\c}_2,\hat{\c}_3,\hat{\gamma}_{sq}^{(p)},\hat{\gamma}_{sq}$. Moreover, the structural identicalness  also ensures that the following closed form relations, established in \cite{Stojnicnegsphflrdt23}, actually hold here as well
\begin{eqnarray}
 \hat{\gamma}_{sq}^{(p)} &  =  &  \frac{1}{2}\frac{1-\hat{\q}_2}{1-\hat{\p}_2}
 \frac{\hat{\p}_2-\hat{\p}_3}{\hat{\q}_2-\hat{\q}_3}\sqrt{\frac{\hat{\q}_3}{\hat{\p}_3}} \nonumber \\
\hat{\c}_3 & = & \frac{1}{\hat{\p}_2-\hat{\p}_3}\sqrt{\frac{\hat{\p}_3}{\hat{\q}_3}} -\frac{1}{\hat{\q}_2-\hat{\q}_3}\sqrt{\frac{\hat{\q}_3}{\hat{\p}_3}} \nonumber \\
 \hat{\c}_2 & = &  \frac{1}{1-\hat{\p}_2}
 \frac{\hat{\p}_2-\hat{\p}_3}{\hat{\q}_2-\hat{\q}_3}\sqrt{\frac{\hat{\q}_3}{\hat{\p}_3}}- \frac{1}{1-\hat{\q}_2}\frac{\hat{\q}_2-\hat{\q}_3}{\hat{\p}_2-\hat{\p}_3}\sqrt{\frac{\hat{\p}_3}{\hat{\q}_3}}.
   \label{eq:reluact80}
 \end{eqnarray}
After, taking concrete numerical values for all the considered parameters, $\q_2$, $\q_3$, $\p_2$, $\p_3$, $\c_2$, $\c_3$, $\gamma_{sq}$, and $\gamma_{sq}^{(p)}$, we then, from  $\bar{\psi}_{rd}^{(d,3)}(\hat{\p},\hat{\q},\hat{\c},\hat{\gamma}_{sq},\hat{\gamma}_{sq}^{(p)})=0$, obtain for
 \begin{equation}\label{eq:reluact81}
\hspace{-2in}(\mbox{\bl{\textbf{\emph{full} third level:}}}) \qquad \qquad  a_c^{(3,f)}(\infty) =  \lim_{d\rightarrow\infty} a_c^{(3,f)}(d)  \approx \bl{\mathbf{2.6534}}.
  \end{equation}

\noindent \underline{\textbf{\emph{Concrete numerical values:}}}  In  Table \ref{tab:tab2}, the concrete values of all the relevant quantities related to the third \emph{full} (3-sfl RDT) level of lifting, complement the above $a_c^{(3,f)}(\infty)$. A systematic view of the lifting progress is also enabled, by showing in parallel the corresponding quantities for the first \emph{full} (1-sfl RDT) and the second \emph{partial} (2-spl RDT) and \emph{full} (2-sfl RDT) levels  as well.
\begin{table}[h]
\caption{$r$-sfl RDT parameters; \textbf{\emph{ReLU}} activations -- \emph{wide} treelike net capacity;  $\hat{\c}_1\rightarrow 1$; $d\rightarrow\infty$; $n\rightarrow\infty$}\vspace{.1in}
\centering
\def\arraystretch{1.2}
{\small
\begin{tabular}{||l||c|c||c|c|c||c|c|c||c|c||c||}\hline\hline
 \hspace{-0in}$r$-sfl RDT                                             & $\hat{\gamma}_{sq}$    & $\hat{\gamma}_{sq}^{(p)}$    &  $\hat{\p}_3$  &  $\hat{\p}_2$ & $\hat{\p}_1$    &  $\hat{\q}_3$   & $\hat{\q}_2$  & $\hat{\q}_1$ &  $\hat{\c}_3$ &  $\hat{\c}_2$    & $\alpha_c^{(r)}(\infty)$  \\ \hline\hline
$1$-sfl RDT                                      & $0.5$ & $0.5$ &  $0$ &  $0$  & $\rightarrow 1$  &  $0$ & $0$ & $\rightarrow 1$ &  $0$
 &  $\rightarrow 0$  & \bl{$\mathbf{2.9339}$} \\ \hline\hline
 $2$-spl RDT                                      & $0.3339$ & $0.7487$ &  $0$ &  $0$ & $\rightarrow 1$ &  $0$ &  $0$ & $\rightarrow 1$ &  $0$ &  $0.8295$   & \bl{$\mathbf{2.8503}$} \\ \hline
  $2$-sfl RDT                                      & $0.1396$  & $1.7903$ &  $0$ & $0.7571$ & $\rightarrow 1$ &  $0$ &  $0.3822$ & $\rightarrow 1$
 &  $0$ &  $4.6457$   & \bl{$\mathbf{2.6643}$}  \\ \hline\hline
   $3$-sfl RDT                                      & $0.0786$  & $3.1858$ &  $0.6961$ & $0.9756$ & $\rightarrow 1$ &  $0.3331$ &  $0.7026$ & $\rightarrow 1$
&  $3.3$ &  $15$   & \bl{$\mathbf{2.6534}$}  \\ \hline\hline
  \end{tabular}
  }
\label{tab:tab2}
\end{table}

\subsubsection{General $r$-th level of lifting}
\label{sec:rlev}

It is clear from Table \ref{tab:tab2} that the convergence of the lifting mechanism is rather rapid with the concrete results showing, already on the third level, relative improvements no better than $\sim 0.1\% $. Doing further evaluations on higher levels is therefore practically not necessarily needed. For the completeness, we however, formalize below the general $r$-level ($r\geq 2$) results. In particular, analogously to (\ref{eq:reluact59}) and (\ref{eq:reluact75}), we have
 \begin{equation}\label{eq:reluact82}
    \bar{\psi}_{rd}^{(d,r)}(\p,\q,\c,\gamma_{sq},\gamma_{sq}^{(p)})
   =    \frac{1}{2}
\sum_{k=2}^{r}(\p_{k-1}\q_{k-1}-\p_k\q_k)\c_k  - {\mathcal I}_{sph}^{(r)}  +{\mathcal I}_{net}^{(r)},
\end{equation}
 where
 {\small\begin{align}\label{eq:reluact83}
 {\mathcal I}_{sph}^{(r)} & = \gamma_{sq}^{(p)}
+\Bigg(\Bigg. -\sum_{k=2}^{r}\frac{1}{2\c_k} \log \lp \frac{\Theta_{k}}{\Theta_{k-1}}\rp +\frac{\p_r}{2\Theta_{r}}  \Bigg.\Bigg) \nonumber \\
{\mathcal I}_{net}^{(r)} & = \gamma_{sq}
\nonumber \\
& \quad  -\frac{\alpha}{\c_r}\mE_{g_{f}^{(r+1,r)}}\log\lp \mE_{g_{f}^{(r+1,r-1)}} \lp \dots \lp \mE_{g_{f}^{(r+1,2)}} \lp  \mE_{g_{f}^{(r+1,1)}}
   e^{-\frac{\c_2\lp \max\lp  \sum_{k=2}^{r+1} \bar{b}_kg_{f}^{(r+1,k-1)},0\rp \rp^2}{4\gamma_{sq}}}
   \rp^{\frac{\c_3}{\c_2}} \rp^{\frac{\c_4}{\c_{3}}} \dots \rp^{\frac{\c_r}{\c_{r-1}}} \rp,\nonumber \\
    \end{align}}
and
\begin{eqnarray}\label{eq:reluact85}
 \Theta_{1} & = & 2\gamma_{sq}^{(p)} \nonumber \\
 \Theta_{k} & = & \Theta_{k-1}-\c_k(\p_{k-1}-\p_k), k\in\{2,3,\dots,r\} \nonumber \\
 \bar{b}_k & = &\sqrt{\bar{p}_{k-1}-\bar{p}_k}, k\in\{2,3,\dots,r\}  \nonumber \\.
\bar{p}_k & = &\bar{p}_x(\p_k), k\in\{1,2,3,\dots,r+1\},
 \end{eqnarray}
with additionally noting that $\bar{p}_x(\cdot)$ is as given in (\ref{eq:reluact71}) and $\bar{p}_1=1 = \bar{p}_x(1)= \bar{p}_x(\p_1)$  and $\bar{p}_{r+1}  = \frac{1}{\pi}=\bar{p}_x(0)=\bar{p}_x(\p_{r+1})$. One then solves the following system with $(3(r-1)+2)$ unknowns
\begin{eqnarray}\label{eq:reluact86}
  \frac{d\bar{\psi}_{rd}^{(d,r)}(\p,\q,\c,\gamma_{sq},\gamma_{sq}^{(p)}) }{d\q}
 & = &
 0
\nonumber \\
 \frac{d\bar{\psi}_{rd}^{(d,r)}(\p,\q,\c,\gamma_{sq},\gamma_{sq}^{(p)}) }{d\p}
 & = &
  0
\nonumber \\
 \frac{d\bar{\psi}_{rd}^{(d,r)}(\p,\q,\c,\gamma_{sq},\gamma_{sq}^{(p)}) }{d\c}
 & = &
 0
 \nonumber \\
 \frac{d\bar{\psi}_{rd}^{(d,r)}(\p,\q,\c,\gamma_{sq},\gamma_{sq}^{(p)}) }{d\gamma_{sq}^{(p)}}
 & = &  0 \nonumber \\
   \frac{d\bar{\psi}_{rd}^{(d,r)}(\p,\q,\c,\gamma_{sq},\gamma_{sq}^{(p)}) }{d\gamma_{sq}}
 & = & 0,
      \end{eqnarray}
and denotes by $\hat{\q},\hat{\p},\hat{\c},\hat{\gamma}_{sq}^{(p)},\hat{\gamma}_{sq}$ the obtained solution. Moreover, from Theorem 3 in \cite{Stojnicnegsphflrdt23}, one has the following remarkable, closed form relations among the obtained parameters
 \begin{eqnarray}
 \hat{\gamma}_{sq}^{(p)}&  = & \frac{1}{2}\frac{\hat{\q}_1-\hat{\q}_2}{\hat{\p}_{1}-\hat{\p}_2}  \prod_{k=2:2:r-1} \frac{ \hat{\p}_{k}-\hat{\p}_{k+1} }{ \hat{\q}_{k}-\hat{\q}_{k+1}} \prod_{k=2:2:r-2}  \frac{\hat{\q}_{k+1}-\hat{\q}_{k+2} }{ \hat{\p}_{k+1}-\hat{\p}_{k+2}}
 \sqrt{\lp \frac{\hat{\q}_r}{\hat{\p}_r}\rp^{(-1)^{r+1}}} \nonumber \\
 \hat{\c}_i &  = &
\frac{1}{\hat{\p}_{i-1}-\hat{\p}_i}  \prod_{k=i:2:r-1} \frac{ \hat{\p}_{k}-\hat{\p}_{k+1} }{ \hat{\q}_{k}-\hat{\q}_{k+1}} \prod_{k=i:2:r-2}  \frac{ \hat{\q}_{k+1}-\hat{\q}_{k+2} }{ \hat{\p}_{k+1}-\hat{\p}_{k+2}}
  \sqrt{\lp \frac{\hat{\q}_r}{\hat{\p}_r}\rp^{(-1)^i}}
\nonumber \\
& &  -
\frac{1}{\hat{\q}_{i-1}-\hat{\q}_i}  \prod_{k=i:2:r-1} \frac{ \hat{\q}_{k}-\hat{\q}_{k+1} }{ \hat{\p}_{k}-\hat{\p}_{k+1}} \prod_{k=i:2:r-2}  \frac{ \hat{\p}_{k+1}-\hat{\p}_{k+2} }{ \hat{\q}_{k+1}-\hat{\q}_{k+2}}
  \sqrt{\lp \frac{\hat{\p}_r}{\hat{\q}_r}\rp^{(-1)^i}}, \quad \mbox{with}\quad i\in\{2,3,\dots,r\}. \nonumber \\
   \label{eq:reluact87}
 \end{eqnarray}
Finally, from $\bar{\psi}_{rd}^{(d,r)}(\hat{\p},\hat{\q},\hat{\c},\hat{\gamma}_{sq},\hat{\gamma}_{sq}^{(p)})=0$, one determines $\alpha_c^{(r,f)}(\infty)$.

\subsubsection{Modulo-$\m$ sfl RDT}
\label{sec:posmodm}

We should also add that everything presented above can be repeated  while utilizing the modulo-m sfl RDT as discussed in \cite{Stojnicnegsphflrdt23,Stojnicbinperflrdt23,Stojnichopflrdt23}. Instead of Theorem \ref{thm:thm2}, one then basically has the following theorem.

 \begin{theorem}
  \label{thm:thm2modm}
    Assume the setup of  Lemma \ref{lemma:lemma1} and Theorems \ref{thm:thm1} and Theorem \ref{thm:thm2} and instead of the complete, assume the modulo-$\m$ sfl RDT setup of \cite{Stojnicsflgscompyx23}. Let the ``fixed'' parts of $\hat{\p}$, $\hat{\q}$, and $\hat{\c}$ satisfy $\hat{\p}_1\rightarrow 1$, $\hat{\q}_1\rightarrow 1$, $\hat{\c}_1\rightarrow 1$, $\hat{\p}_{r+1}=\hat{\q}_{r+1}=\hat{\c}_{r+1}=0$, and let the ``non-fixed'' parts of $\hat{\p}_k$, $\hat{\q}_k$, and $\hat{\c}_k$ ($k\in\{2,3,\dots,r\}$) be the solutions of the following system of equations
  \begin{eqnarray}\label{eq:modmnegthmprac1eq1}
   \frac{d \bar{\psi}_{rd}^{(d)}(\p,\q,\c,\gamma_{sq},\gamma_{sq}^{(p)})}{d\p} & = &  0 \nonumber \\
   \frac{d \bar{\psi}_{rd}^{(d)}(\p,\q,\c,\gamma_{sq},\gamma_{sq}^{(p)})}{d\q} & = &  0 \nonumber \\
   \frac{d \bar{\psi}_{rd}^{(d)}(\p,\q,\c,\gamma_{sq},\gamma_{sq}^{(p)})}{d\gamma_{sq}} & = &  0\nonumber \\
   \frac{d \bar{\psi}_{rd}^{(d)}(\p,\q,\c,\gamma_{sq},\gamma_{sq}^{(p)})}{d\gamma_{sq}^{(p)}} & = &  0,
 \end{eqnarray}
 and, consequently, let
\begin{eqnarray}\label{eq:modmprac17}
c_k(\hat{\p},\hat{\q})  & = & \sqrt{\hat{\q}_{k-1}-\hat{\q}_k} \nonumber \\
b_k(\hat{\p},\hat{\q})  & = & \sqrt{\hat{\p}_{k-1}-\hat{\p}_k}.
 \end{eqnarray}
 Then
\begin{eqnarray}\label{eq:modmthm2negprac13}
    \bar{\psi}_{rd}^{(d)}(\hat{\p},\hat{\q},\c,\gamma_{sq},\gamma_{sq}^{(p)})   & \leq &  \max_{\c}\frac{1}{2}    \sum_{k=2}^{r+1}\Bigg(\Bigg.
   \hat{\p}_{k-1}\hat{\q}_{k-1}
   -\hat{\p}_{k}\hat{\q}_{k}
  \Bigg.\Bigg)
\c_k
 -\gamma_{sq}^{(p)} - \varphi(D_{1,1}^{(per)}(c_k(\hat{\p},\hat{\q})),\c) \nonumber \\
& &
+\gamma_{sq}- \alpha\varphi(-D_1^{(net)}(b_k(\hat{\p},\hat{\q})),\c)\triangleq      \bar{\psi}_{rd,m}^{(d)}(\hat{\p},\hat{\q},\hat{\c},\hat{\gamma}_{sq},\hat{\gamma}_{sq}^{(p)}) ,\nonumber \\
  \end{eqnarray}
and
\begin{align}
\hspace{-.0in} \bar{\psi}_{rd,m}^{(d)}(\hat{\p},\hat{\q},\hat{\c},\hat{\gamma}_{sq},\hat{\gamma}_{sq}^{(p)}) \leq 0 & \Longrightarrow  \lp \lim_{n\rightarrow\infty}\mP_{X}(f_{rp}(X)>0)\longrightarrow 0 \rp
 \nonumber \\
& \Longleftrightarrow  \lp \lim_{n\rightarrow\infty}\mP_{X}(A([n,d,1];\f^{(2)}) \quad \mbox{fails to memorize data set} \quad (X,\1))\longrightarrow 0\rp. \nonumber \\
\label{eq:modmthm2ta17}
\end{align}
\end{theorem}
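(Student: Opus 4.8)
The plan is to reduce the claim to the complete sfl RDT characterization already proved as Theorem \ref{thm:thm2} (itself built on Lemma \ref{lemma:lemma1} and Corollaries \ref{cor:cor1}--\ref{cor:cor2}) and then to swap the exact strong-duality \emph{equality} for the one-sided \emph{modulo-$\m$} bound of \cite{Stojnicsflgscompyx23,Stojnicnegsphflrdt23}. First I would recall the equivalence chain (\ref{eq:cor2ta17}): the sign of $\phi_0=\lim_{n\rightarrow\infty}\mE_{X}\psi_{rp}^{(d)}/\sqrt{n}$ controls memorization, so that $\phi_0\leq 0$ forces $\lim_{n\rightarrow\infty}\mP_{X}(f_{rp}(X)>0)\rightarrow 0$, which is exactly the statement that $A([n,d,1];\f^{(2)})$ memorizes $(X,\1)$ with probability tending to $1$. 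Hence it suffices to exhibit a computable \emph{upper} bound on $\phi_0$ and to determine when that bound is nonpositive.

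The second step is to invoke the modulo-$\m$ interpolation. The key structural fact, inherited from the bilinearly indexed (bli) comparison machinery of \cite{Stojnicsflgscompyx23,Stojnicflrdt23}, is that the Gaussian comparison inequality underlying Theorem \ref{thm:thm1} holds for \emph{every} admissible sequence $\c$ obeying $\c_0=1$, $\c_{r+1}=0$ (together with the ordering in (\ref{eq:hmsfl2})), and not merely for the fully stationarized one; stationarity in $\c$ is needed only to promote that inequality to the equality of (\ref{eq:thmsflrdt2eq3a0}). Consequently, fixing $\p,\q,\gamma_{sq},\gamma_{sq}^{(p)}$ at the solution $\hat{\p},\hat{\q},\hat{\gamma}_{sq},\hat{\gamma}_{sq}^{(p)}$ of the \emph{reduced} system (\ref{eq:modmnegthmprac1eq1}) (which omits the $d/d\c$ equation present in (\ref{eq:negthmprac1eq1})) and then taking $\max_{\c}$ of the random-dual functional produces a valid majorant of $\bar{\psi}_{rd}^{(d)}(\hat{\p},\hat{\q},\c,\gamma_{sq},\gamma_{sq}^{(p)})$. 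This is precisely the content of (\ref{eq:modmthm2negprac13}), with the right-hand side defined as $\bar{\psi}_{rd,m}^{(d)}(\hat{\p},\hat{\q},\hat{\c},\hat{\gamma}_{sq},\hat{\gamma}_{sq}^{(p)})$. Because the reduction of $D_{0,S,\infty}^{(d)}$ to the decoupled per-sample and per-neuron pieces $\varphi(D_{1,1}^{(per)}(c_k),\c)$ and $\alpha\varphi(-D_1^{(net)}(b_k),\c)$ proceeds verbatim as in Section \ref{sec:fitmemsflrdt}, the functional being maximized is literally the one already assembled for Theorem \ref{thm:thm2}.

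The third step is to assemble the implication. When $\bar{\psi}_{rd,m}^{(d)}(\hat{\p},\hat{\q},\hat{\c},\hat{\gamma}_{sq},\hat{\gamma}_{sq}^{(p)})\leq 0$, the modulo-$\m$ bound gives $\phi_0\leq\bar{\psi}_{rd,m}^{(d)}\leq 0$, and (\ref{eq:cor2ta17}) then converts this into the memorization statement (\ref{eq:modmthm2ta17}). Only the forward implication $\Longrightarrow$ survives here, which is the expected weakening relative to the two-sided equivalence of Theorem \ref{thm:thm2}: the modulo-$\m$ step supplies a one-directional estimate rather than an identity, so we can certify \emph{success} of memorization but not its \emph{failure}.

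The hard part will be certifying the \emph{direction} of the modulo-$\m$ inequality through the whole reduction. With $s=-1$ the random primal in (\ref{eq:cor2sflrdt2eq3a0a0}) is a min over $\z^{(j_w)}$ and max over $\Lambda$, and along the way the Cauchy--Schwarz relaxation (\ref{eq:fiteq5}) and the two applications of the square-root trick in (\ref{eq:prac4a00}) and (\ref{eq:prac8}) each introduce an inequality; one must check that, after the $\Lambda$- and $\z$-optimizations and the splitting into $D^{(net)}$ and $D^{(per)}$, the accumulated relaxations still majorize $\phi_0$ in the sense required, consistently with the concentration remarks following (\ref{eq:fiteq5}). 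Equally, I would verify that replacing the $d/d\c$ stationarity by an outer $\max_{\c}$ (as opposed to a saddle over $\c$) is the operation that preserves the upper-bound orientation in this partially reciprocal, $d$-fold-summed setting; this is exactly where the modulo-$\m$ framework of \cite{Stojnicsflgscompyx23,Stojnicnegsphflrdt23} is doing the real work, and I would simply cite its bound rather than re-derive the interpolation from scratch.
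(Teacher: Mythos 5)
Your proposal is correct and follows essentially the same route as the paper, whose own proof is a one-line appeal to Lemma \ref{lemma:lemma1}, Theorems \ref{thm:thm1} and \ref{thm:thm2}, and the modulo-$\m$ sfl RDT machinery of the cited references; you simply make explicit the logical chain (the sign of $\phi_0$ controlling memorization via (\ref{eq:cor2ta17}), the validity of the bli comparison bound for every admissible $\c$ with stationarity needed only for equality, and the resulting one-directional implication) that the paper compresses into that citation. Your identification of the inequality-direction bookkeeping through (\ref{eq:fiteq5}) and the square-root tricks as the point where the cited modulo-$\m$ framework does the real work is exactly where the paper also defers to those references.
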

\begin{proof}
Follows directly from Lemma \ref{lemma:lemma1}, Theorems \ref{thm:thm1} and \ref{thm:thm2}, and the sfl RDT machinery presented in \cite{Stojnicnflgscompyx23,Stojnicsflgscompyx23,Stojnicflrdt23}.
\end{proof}

We have done the numerical evaluations utilizing the above theorem as well and obtained exactly the same results as in Table \ref{tab:tab2}. This basically indicates that the  above mentioned \emph{stationarity} over $\c$ is of the \emph{maximization} type, precisely as observed in \cite{Stojnicnegsphflrdt23,Stojnicbinperflrdt23,Stojnichopflrdt23}. Moreover, we maintained this practice in all of the calculations related to different activation functions that we present below and observed the very same outcome.

\subsection{Quadratic activations}
\label{sec:quad}

We now consider the well known \emph{quadratic}  activation. This means that we now assume that the neuronal function in the hidden layer is the following
\begin{equation}
\hspace{-1.5in} \mbox{\textbf{\bl{\emph{Quadratic}} activation:}} \hspace{1in} \f^{(2)}(\x)=\x^2.
\label{eq:quadact1}
\end{equation}
We will again start by looking at the first level of lifting. As all the key results obtained in the previous sections hold for generic activations, we are here in position to heavily utilize them and consequently proceed at a much faster pace.

\subsubsection{$r=1$ -- first level of lifting}
\label{sec:firstlevquad}

We first observe that from (\ref{eq:negprac20a0}) one has
 \begin{eqnarray}\label{eq:quadact0a0}
a_c^{(1)}(d)
& = &   \frac{1}{\mE_{{\mathcal U}_2^{(j_w)}}  \lp z_i^{(1)}\lp\g^{(2)};\f^{(2)}\lp \q^{(net)} \rp\rp\rp}
  =   \frac{1}{\mE_{\g^{(2)}}  \lp z_i^{(1)}\lp \bar{\g}^{(2)}; \lp\q^{(net)}\rp^2\rp\rp}.
  \end{eqnarray}
  Analogously to (\ref{eq:reluact12}), one now also finds
    \begin{eqnarray}\label{eq:quadact12}
\lim_{d\rightarrow \infty}\mE_{\g^{(2)}} z_i^{(1)}\lp   \bar{\g}^{(2)};\f^{(2)}\lp \q^{(net)} \rp   \rp
 & \rightarrow &
\frac{\lim_{d\rightarrow \infty}\mE_{\g^{(2)}} \lp \max\lp - \f^{(2)}\lp \g^{(2)} \rp^T\w,0\rp \rp^2}{\lim_{d\rightarrow \infty}\mE_{\g^{(2)}} \left \|  \frac{d\f^{(2)}\lp \g^{(2)} \rp }{d\g^{(2)}} \right \|_2^2}.
 \end{eqnarray}

\noindent \red{\textbf{\emph{(i) Handling $\mE_{\g^{(2)}} \lp \max\lp - \f^{(2)}\lp \g^{(2)} \rp^T\w,0\rp \rp^2$:}}} From (\ref{eq:reluact2})), we also have
 \begin{eqnarray}\label{eq:quadact20}
\mE_{\g^{(2)}} \lp \max\lp - \f^{(2)}\lp \g^{(2)} \rp^T\w,0\rp \rp^2
=\mE_{g_c^{(2)}} \lp \max\lp g_c^{(2)},0\rp \rp^2=\frac{1}{2}\sigma_{2}^2,
\end{eqnarray}
where recalling on (\ref{eq:quadact1}) and analogously to   (\ref{eq:reluact21}), (\ref{eq:reluact22}), and (\ref{eq:reluact23})
\begin{eqnarray}
\mE \lp \f^{(2)}\lp \g_{1}^{(2)}\rp \rp &= &\mE \lp\g_1^{(2)}\rp^2 =1 \nonumber \\
 \mE \lp \f^{(2)}\lp \g_{1}^{(2)}\rp \rp^2 &= &\mE \lp  \g_1^{(2)}\rp^4 = 3,
\label{eq:quadact22}
\end{eqnarray}
and
 \begin{eqnarray}\label{eq:quadact23}
 \sigma_2^2=d\lp\mE \lp\f^{(2)}\lp \g_{1}^{(2)} \rp \rp^2- \lp \mE\lp \f^{(2)}\lp \g_{1}^{(2)}\rp\rp\rp^2\rp
 =2d.
\end{eqnarray}
From  (\ref{eq:quadact20}) and (\ref{eq:quadact23}), we find
 \begin{eqnarray}\label{eq:quadact24}
\mE_{\g^{(2)}} \lp \max\lp - \f^{(2)}\lp \g^{(2)} \rp^T\w,0\rp \rp^2
 =\frac{1}{2}\sigma_{2}^2=d.
\end{eqnarray}

\noindent \red{\textbf{\emph{(ii) Handling $\mE_{\g^{(2)}} \left \|  \frac{d\f^{(2)}\lp \g^{(2)} \rp }{d\g^{(2)}} \right \|_2^2$:}}} Recalling again on (\ref{eq:quadact1}), we find
 \begin{eqnarray}\label{eq:quadact25}
\frac{d\f^{(2)}\lp \g_{j_w}^{(2)} \rp }{d\g_{j_w}^{(2)}}= 2\g_{j_w}^{(2)}.
\end{eqnarray}
One then also has
 \begin{eqnarray}\label{eq:quadact26}
\mE_{\g^{(2)}} \left \|  \frac{d\f^{(2)}\lp \g^{(2)} \rp }{d\g^{(2)}} \right \|_2^2
=\mE_{\g^{(2)}} \sum_{j_w=1}^{d}\lp \frac{d\f^{(2)}\lp \g_{j_w}^{(2)} \rp }{d\g_{j_w}^{(2)}}\rp^2
=  4\sum_{j_w=1}^{d}\mE_{\g_{j_w}^{(2)}} \lp\g_{j_w}^{(2)}\rp^2=4d.
\end{eqnarray}
 Utilizing the above observations, and combining (\ref{eq:quadact12}), (\ref{eq:quadact24}), and (\ref{eq:quadact26}), one finds
 \begin{eqnarray}\label{eq:quadact27}
\lim_{d\rightarrow\infty}\mE_{\g^{(2)}} z_i^{(1)}\lp   \bar{\g}^{(2)};\f^{(2)}\lp \q^{(net)} \rp   \rp
 & =  &  \lim_{d\rightarrow\infty}\frac{\mE_{\g^{(2)}} \lp \max\lp - \f^{(2)}\lp \g^{(2)} \rp^T\w,0\rp \rp^2}{\mE_{\g^{(2)}} \left \|  \frac{d\f^{(2)}\lp \g^{(2)} \rp }{d\g^{(2)}} \right \|_2^2}=\frac{1}{4}.
 \end{eqnarray}
A further combination of (\ref{eq:negprac20a0}), (\ref{eq:quadact0a0}), and (\ref{eq:quadact27}) then gives
\begin{eqnarray}\label{eq:quadact28}
\hspace{-0in}(\mbox{\bl{\textbf{first level:}}}) \qquad  a_c^{(1)}(\infty)
& = &    \lim_{d\rightarrow\infty} a_c^{(1)}(d) =  \frac{1}{\lim_{d\rightarrow\infty} \mE_{{\mathcal U}_2^{(j_w)}}  \lp z_i^{(1)}\lp  \bar{\g}^{(2)};\f^{(2)}\lp \q^{(net)} \rp\rp\rp}  \nonumber \\
 & = &   \frac{1}{\lim_{d\rightarrow\infty} \mE_{\g^{(2)}}  \lp z_i^{(1)}\lp   \bar{\g}^{(2)};\lp\q^{(net)}\rp^2\rp\rp}
  = \bl{\mathbf{4}}.
  \end{eqnarray}

\subsubsection{$r=2$ -- second level of lifting}
\label{sec:secondlev}

As was the case when we considered the ReLU activations in the previous sections, we again split the analysis of the second level of lifting  into two separate parts: (i) \emph{partial} second level of lifting; and (ii) \emph{full} second level of lifting.

\subsubsubsection{Partial second level of lifting}
\label{sec:secondlevparquad}

Analogously to (\ref{eq:reluact28a0}), we first have
 \begin{align}\label{eq:quadact28a0}
    \bar{\psi}_{rd}^{(d,2)}(\hat{\p},\hat{\q},\c,\gamma_{sq},\gamma_{sq}^{(p)})
     & =   \frac{1}{2}
\c_2
      -\gamma_{sq}^{(p)}  +\frac{1}{2\c_2}\log\lp \frac{2\gamma_{sq}^{(p)}-\c_2}{2\gamma_{sq}^{(p)}}\rp \nonumber \\
      &\quad
  + \gamma_{sq}
- \alpha\frac{1}{\c_2}\log\lp \mE_{\bar{\g}^{(3)}} e^{-\c_2\frac{z_i^{(2)}\lp \bar{\g}^{(3)};\lp \q^{(net)}\rp^2 \rp}{4\gamma_{sq}}}\rp,
    \end{align}
 and then analogously to (\ref{eq:reluact30})
 \begin{eqnarray}\label{eq:quadact30}
  z_i^{(2)}\lp   \bar{\g}^{(3)};\max\lp \q^{(net)},0 \rp   \rp
   \rightarrow
  \lp \max\lp \bar{g}_c^{(3)},0\rp \rp^2,
 \end{eqnarray}
with
 \begin{eqnarray}\label{eq:quadact31}
 \bar{g}_c^{(3)} \sim {\mathcal N}\lp 0, \bar{\sigma}_3^2 \rp, \quad \mbox{and}\quad \bar{\sigma}_3^2=\frac{1}{2}.
 \end{eqnarray}
A combination of (\ref{eq:quadact28a0}) and (\ref{eq:quadact31}) further gives
\begin{align}\label{eq:quadact32}
    \bar{\psi}_{rd}^{(d,2)}(\hat{\p},\hat{\q},\c,\gamma_{sq},\gamma_{sq}^{(p)})
& =   \frac{1}{2}
\c_2
      -\gamma_{sq}^{(p)}  +\frac{1}{2\c_2}\log\lp \frac{2\gamma_{sq}^{(p)}-\c_2}{2\gamma_{sq}^{(p)}}\rp
  + \gamma_{sq}
- \alpha\frac{1}{\c_2}\log\lp \mE_{\bar{g}_c^{(2)}} e^{-\c_2\frac{ \lp \max\lp \bar{g}_c^{(3)},0\rp \rp^2}{4\gamma_{sq}}}\rp \nonumber \\
& =   \frac{1}{2}
\c_2
      -\gamma_{sq}^{(p)}  +\frac{1}{2\c_2}\log\lp \frac{2\gamma_{sq}^{(p)}-\c_2}{2\gamma_{sq}^{(p)}}\rp
  + \gamma_{sq}
- \alpha\frac{1}{\c_2}\log\lp \frac{1}{2} + \frac{1}{2\sqrt{\frac{\bar{\sigma}_3^2\c_2}{2\gamma_{sq}}+1}} \rp.
    \end{align}
 One then computes the derivatives of $ \bar{\psi}_{rd}^{(d,2)}(\hat{\p},\hat{\q},\c,\gamma_{sq},\gamma_{sq}^{(p)})$ with respect to $\gamma_{sq}^{(p)}$, $\gamma_{sq}$, and $\c_2$, equals them to zero, and proceeds by solving the obtained system of equations. Denoting the solution of the system by
 $\hat{\gamma}_{sq}^{(p)}$, $\hat{\gamma}_{sq}$, and $\hat{\c}_2$, the convenient closed form relation (analogous to the one given in (\label{eq:reluact33})) holds
 \begin{equation}\label{eq:quadact33}
 \hat{\gamma}_{sq}^{(p)}=\frac{\hat{\c}_2+\sqrt{\hat{\c}_2^2+4}}{4}.
   \end{equation}
One then from $\bar{\psi}_{rd}^{(d,2)}(\hat{\p},\hat{\q},\hat{\c},\hat{\gamma}_{sq},\hat{\gamma}_{sq}^{(p)})=0$, ultimately finds for
 \begin{equation}\label{eq:quadact34}
\hspace{-2in}(\mbox{\bl{\textbf{\emph{partial} second level:}}}) \qquad \qquad  a_c^{(2,p)}(\infty) =  \lim_{d\rightarrow\infty} a_c^{(2,p)}(d)  \approx \bl{\mathbf{3.3811}}.
  \end{equation}

\subsubsubsection{Full second level of lifting}
\label{sec:secondlevfullquad}

Paralleling (\ref{eq:reluact35}) and keeping in mind (\ref{eq:quadact1}), we then write
\begin{eqnarray}\label{eq:quadact35}
    \bar{\psi}_{rd}^{(d,2)}(\p,\q,\c,\gamma_{sq},\gamma_{sq}^{(p)})
  & = &  \frac{1}{2}
(1-\p_2\q_2)\c_2
 -  \gamma_{sq}^{(p)}
-\Bigg(\Bigg. -\frac{1}{2\c_2} \log \lp \frac{2\gamma_{sq}-\c_2(1-\q_2)}{2\gamma_{sq}} \rp  \nonumber \\
 & & +  \frac{\q_2}{2(2\gamma_{sq}-\c_2(1-\q_2))}   \Bigg.\Bigg)
 \nonumber \\
& &   + \gamma_{sq}
 -\alpha\frac{1}{\c_2}\mE_{{\mathcal U}_3^{(j_w)}} \log\lp \mE_{{\mathcal U}_2^{(j_w)}} e^{-\c_2\frac{z_i^{(2)}\lp \bar{\g}^{(3)};\lp \q^{(net)}\rp^2\rp }{4\gamma_{sq}}}\rp.
    \end{eqnarray}
Moreover, analogously to (\ref{eq:reluact52}), we also have
\begin{eqnarray}\label{eq:quadact52}
  z_i^{(2)}\lp   \bar{\g}^{(3)}; \lp \q^{(net)} \rp^2   \rp
   &\rightarrow  &  \lp \max\lp  \bar{b}_2g_{f}^{(3,1)}+\bar{b}_3g_{f}^{(3,2)},0\rp \rp^2,
 \end{eqnarray}
where $\bar{b}_2$ and $\bar{b}_3$ are as in (\ref{eq:reluact53}), and, as mentioned earlier, $g_f^{(3,1)}$ relates to the randomness of $\g^{(2)}$ (i.e., ${\mathcal U}_2$) and $g_f^{(3,2)}$ relates to the randomness of $\g^{(3)}$ (i.e., ${\mathcal U}_3$).

\noindent \red{\textbf{\emph{(i) Handling $\mE_{\bar{\g}^{(3)}} \left \|  \frac{d\f^{(2)}\lp \g^{(x,2)} \rp }{d\g^{(x,2)}} \right \|_2^2$:}}} Analogously to (\ref{eq:quadact26}), we, keeping in mind (\ref{eq:quadact1}), find
\begin{eqnarray}\label{eq:quadact53a0}
\mE_{\bar{\g}^{(3)}} \left \|  \frac{d\f^{(2)}\lp \g^{(x,2)} \rp }{d\g^{(x,2)}} \right \|_2^2=4d.
\end{eqnarray}

\noindent \red{\textbf{\emph{(ii) Further specializing to $\f^{(2)}\lp \q^{(net)}\rp=\lp \q^{(net)}\rp^2$:}}} We first observe
\begin{eqnarray}\label{eq:quadact54}
\bar{p}_1 =\frac{1}{4}\mE_{\g_1^{(3)}} \mE_{\g_1^{(2)}} \lp \f^{(2)}\lp \sum_{k=2}^{3}b_k \g_1^{(k)}\rp \rp^2=
 \frac{1}{4}\mE_{\g_1^{(3)}} \mE_{\g_1^{(2)}} \lp \sum_{k=2}^{3}b_k \g_1^{(k)}\rp^4=\frac{3}{4},
    \end{eqnarray}
and
\begin{eqnarray}\label{eq:quadact55}
\bar{p}_3 =  \frac{1}{4}\lp \mE_{\g_1^{(3)}} \mE_{\g_1^{(2)}}  \f^{(2)}\lp \sum_{k=2}^{3}b_k \g_1^{(k)}\rp \rp^2=
   \frac{1}{4} \lp \mE_{\g_1^{(3)}} \mE_{\g_1^{(2)}}   \lp \sum_{k=2}^{3}b_k \g_1^{(k)}\rp^2 \rp^2
  =\frac{1}{4}. \nonumber \\
 \end{eqnarray}
Then one also has
\begin{eqnarray}\label{eq:quadact56}
  \mE_{\g_1^{(2)}}  \f^{(2)}\lp \sum_{k=2}^{3}b_k \g_1^{(k)}\rp
&  = &
    \mE_{\g_1^{(2)}}   \lp \sum_{k=2}^{3}b_k \g_1^{(k)} \rp^2
  =
    \mE_{\g_1^{(2)}}   \lp \sqrt{1-\p_2} \g_1^{(2)} + \sqrt{\p_2} \g_1^{(3)} \rp^2
    = 1-\p_2+\p_2 \lp\g_1^{(3)} \rp^2, \nonumber \\
   \end{eqnarray}
and
\begin{eqnarray}\label{eq:quadact57}
\bar{p}_2 & = &  \frac{1}{4} \mE_{\g_1^{(3)}} \lp \mE_{\g_1^{(2)}}  \f^{(2)}\lp \sum_{k=2}^{3}b_k \g_1^{(k)}\rp  \rp^2
= \frac{1}{4} \mE_{\g_1^{(3)}} \lp  1-\p_2+\p_2 \lp\g_1^{(3)} \rp^2 \rp^2
\nonumber \\
 & = &
 \frac{1}{4} \lp ( 1-\p_2)^2+2(1-\p_2)\p_2+3\p_2^2\rp
 =  \frac{1}{4} \lp  1+2\p_2^2\rp.
 \nonumber \\
   \end{eqnarray}
One then has
\begin{eqnarray}\label{eq:quadact58}
\bar{b}_2
& = &
\sqrt{\bar{p}_1-\bar{p}_2}
\nonumber \\
\bar{b}_3
& = &
\sqrt{\bar{p}_2-\bar{p}_3},
 \end{eqnarray}
where, for the \emph{quadratic} activations, $\bar{p}_1$, $\bar{p}_2$, and $\bar{p}_3$ are as in (\ref{eq:quadact54}), (\ref{eq:quadact55}), and (\ref{eq:quadact57}), respectively.

We can now  rewrite (\ref{eq:quadact35}) as
\begin{eqnarray}\label{eq:quadact59}
    \bar{\psi}_{rd}^{(d,2)}(\p,\q,\c,\gamma_{sq},\gamma_{sq}^{(p)})
   & = &  \frac{1}{2}
(1-\p_2\q_2)\c_2
 -  \gamma_{sq}^{(p)}
-\Bigg(\Bigg. -\frac{1}{2\c_2} \log \lp \frac{2\gamma_{sq}-\c_2(1-\q_2)}{2\gamma_{sq}} \rp  \nonumber \\
 & & +  \frac{\q_2}{2(2\gamma_{sq}-\c_2(1-\q_2))}   \Bigg.\Bigg)
   + \gamma_{sq}
 -\alpha\frac{1}{\c_2}   \mE_{g_f^{(3,2)}} \log\lp f_{(zt)}^{(2,f)} \rp,
    \end{eqnarray}
where $f_{(zt)}^{(2,f)}$ is as in (\ref{eq:reluact60}). One then computes the derivatives as in \cite{Stojnicnegsphflrdt23}, equals them to zero to obtain the system as in (\ref{eq:reluact62})and denotes the
obtained solution by $\hat{\q}_2,\hat{\p}_2,\hat{\c}_2,\hat{\gamma}_{sq}^{(p)},\hat{\gamma}_{sq}$. Keeping in mind that the closed form relations from (\ref{eq:reluact63}) continue to hold, one,
after taking the concrete numerical values for all the relevant parameters, from $\bar{\psi}_{rd}^{(d,2)}(\hat{\p},\hat{\q},\hat{\c},\hat{\gamma}_{sq},\hat{\gamma}_{sq}^{(p)})=0$ finds for
 \begin{equation}\label{eq:quadact64}
\hspace{-2in}(\mbox{\bl{\textbf{\emph{full} second level:}}}) \qquad \qquad  a_c^{(2,f)}(\infty) =  \lim_{d\rightarrow\infty} a_c^{(2,f)}(d)  \approx \bl{\mathbf{3.3750}}.
  \end{equation}

\noindent \underline{\textbf{\emph{Concrete numerical values:}}}  In  Table \ref{tab:tab3}, the $a_c^{(2,f)}(\infty)$ obtained above is complemented with the concrete values of all the parameters relevant to the second \emph{full} (2-sfl RDT) level of lifting. To enusre a systematic view of the lifting progress,  the corresponding  quantities for the first \emph{full} (1-sfl RDT) and the second \emph{partial} (2-spl RDT) level are included in the table as well.
\begin{table}[h]
\caption{$r$-sfl RDT parameters; \textbf{\emph{Quadratic}} activations -- \emph{wide} treelike net capacity;  $\hat{\c}_1\rightarrow 1$; $d\rightarrow\infty$; $n\rightarrow\infty$}\vspace{.1in}
\centering
\def\arraystretch{1.2}
\begin{tabular}{||l||c|c||c|c||c|c||c||c||}\hline\hline
 \hspace{-0in}$r$-sfl RDT                                             & $\hat{\gamma}_{sq}$    & $\hat{\gamma}_{sq}^{(p)}$    &  $\hat{\p}_2$ & $\hat{\p}_1$     & $\hat{\q}_2$  & $\hat{\q}_1$ &  $\hat{\c}_2$    & $\alpha_c^{(r)}(\infty)$  \\ \hline\hline
$1$-sfl RDT                                      & $0.5$ & $0.5$ &  $0$  & $\rightarrow 1$   & $0$ & $\rightarrow 1$
 &  $\rightarrow 0$  & \bl{$\mathbf{4}$} \\ \hline\hline
 $2$-spl RDT                                      & $ 0.1975$ & $1.2657$ &  $0$ & $\rightarrow 1$ &  $0$ & $\rightarrow 1$ &   $2.1364$   & \bl{$\mathbf{3.3811}$} \\ \hline
  $2$-sfl RDT                                      & $0.1855$  & $ 1.3482$  & $0.2845$ & $\rightarrow 1$ &  $0.0666$ & $\rightarrow 1$
 &  $2.3705$   & \bl{$\mathbf{3.3750}$}  \\ \hline\hline
  \end{tabular}
\label{tab:tab3}
\end{table}

As was the case for the \emph{ReLU} activations, the capacity results shown in Table \ref{tab:tab3} exactly match the corresponding ones obtained using the statistical physics replica methods relying on the replica symmetry, partial 1rsb, and full 1rsb in \cite{ZavPeh21}.

\subsubsection{$r=3$ -- third level of lifting}
\label{sec:thirdlevquad}

Analogously to  (\ref{eq:reluact65}) (and ultimately (\ref{eq:negprac19}) and (\ref{eq:quadact35})), we then write
{\small\begin{align}\label{eq:quadact65}
    \bar{\psi}_{rd}^{(d,3)}(\p,\q,\c,\gamma_{sq},\gamma_{sq}^{(p)})
  & =   \frac{1}{2}
(1-\p_2\q_2)\c_2+ \frac{1}{2}
(\p_2\q_2-\p_3\q_3)\c_3 -  \gamma_{sq}^{(p)} \nonumber \\
&\quad
-\Bigg(\Bigg. -\frac{1}{2\c_2} \log \lp \frac{2\gamma_{sq}^{(p)}-\c_2(1-\q_2)}{2\gamma_{sq}^{(p)}} \rp  -\frac{1}{2\c_3} \log \lp \frac{2\gamma_{sq}^{(p)}-\c_2(1-\q_2)-\c_3(\q_2-\q_3)}{2\gamma_{sq}^{(p)}-\c_2(1-\q_2)} \rp  \nonumber \\
& \quad +  \frac{\q_3}{2(2\gamma_{sq}^{(p)}-\c_2(1-\q_2)-\c_3(\q_2-\q_3))}   \Bigg.\Bigg)
 \nonumber \\
& \quad   + \gamma_{sq}
 -\frac{\alpha}{\c_3}\mE_{{\mathcal U}_4^{(j_w)}} \log\lp \mE_{{\mathcal U}_3^{(j_w)}} \lp \mE_{{\mathcal U}_2^{(j_w)}} e^{-\c_2\frac{z_i^{(3)}\lp \bar{\g}^{(4)};\lp \q^{(net)}\rp^2\rp}{4\gamma_{sq}}}\rp^{\frac{\c_3}{\c_2}}\rp,
    \end{align}}
 where
 \begin{eqnarray}\label{eq:quadact66}
  z_i^{(3)}\lp   \bar{\g}^{(4)};\f^{(2)}\lp \q^{(net)} \rp   \rp
    &\rightarrow  &  \lp \max\lp  \bar{b}_2g_{f}^{(4,1)}+\bar{b}_3g_{f}^{(4,2)}+\bar{b}_4g_{f}^{(4,3)},0\rp \rp^2,
 \end{eqnarray}
with $\bar{b}_2$, $\bar{b}_3$, and $\bar{b}_4$  as in (\ref{eq:reluact67}), and $b_2=\sqrt{1-\p_2}$, $b_3=\sqrt{\p_2-\p_3}$, and $b_4=\sqrt{\p_3}$. Similarly to what we had earlier, $g_f^{(4,1)}$ relates to the randomness of $\g^{(2)}$ (i.e., ${\mathcal U}_2$),
$g_f^{(4,2)}$ to the randomness of $\g^{(3)}$ (i.e., ${\mathcal U}_3$), and $g_f^{(4,3)}$ to the randomness of $\g^{(4)}$ (i.e., ${\mathcal U}_4$).

\noindent \red{\textbf{\emph{(i) Handling $\mE_{\bar{\g}^{(4)}} \left \|  \frac{d\f^{(2)}\lp \g^{(x,3)} \rp }{d\g^{(x,3)}} \right \|_2^2$:}}} As in (\ref{eq:quadact53a0}), we, keeping in mind (\ref{eq:quadact1}), find
\begin{eqnarray}\label{eq:quadact65a0}
\mE_{\bar{\g}^{(4)}} \left \|  \frac{d\f^{(2)}\lp \g^{(x,3)} \rp }{d\g^{(x,3)}} \right \|_2^2=
\mE_{\bar{\g}^{(4)}} \left \|  \frac{d\lp \g^{(x,3)} \rp^2 }{d\g^{(x,3)}} \right \|_2^2=4d.
\end{eqnarray}

\noindent \red{\textbf{\emph{(ii) Further specializing to $\f^{(2)}\lp \q^{(net)}\rp=\max\lp \q^{(net)},0\rp$:}}} We start by observing
\begin{eqnarray}\label{eq:quadact68}
\bar{p}_1 =\frac{1}{4}\mE_{\g_1^{(4)}} \mE_{\g_1^{(3)}} \mE_{\g_1^{(2)}} \lp \f^{(2)}\lp \sum_{k=2}^{4}b_k \g_1^{(k)}\rp \rp^2=
 2\mE_{\g_1^{(4)}}  \mE_{\g_1^{(3)}} \mE_{\g_1^{(2)}} \lp \sum_{k=2}^{4}b_k \g_1^{(k)}\rp^4=\frac{3}{4},\nonumber \\
    \end{eqnarray}
and
\begin{eqnarray}\label{eq:quadact69}
\bar{p}_4 & = & \frac{1}{4}\lp \mE_{\g_1^{(4)}}  \mE_{\g_1^{(3)}} \mE_{\g_1^{(2)}}  \f^{(2)}\lp \sum_{k=2}^{4}b_k \g_1^{(k)}\rp \rp^2 =
   \frac{1}{4}\lp \mE_{\g_1^{(4)}}  \mE_{\g_1^{(3)}} \mE_{\g_1^{(2)}}   \lp\sum_{k=2}^{4}b_k \g_1^{(k)}\rp^2 \rp^2= \frac{1}{4}. \nonumber \\
 \end{eqnarray}
Then one also has
\begin{eqnarray}\label{eq:quadact70}
  \mE_{\g_1^{(3)}}  \mE_{\g_1^{(2)}}  \f^{(2)}\lp \sum_{k=2}^{4}b_k \g_1^{(k)}\rp
&  = &
  \mE_{\g_1^{(3)}}     \mE_{\g_1^{(2)}}   \lp \sum_{k=2}^{4}b_k \g_1^{(k)}\rp^2 \nonumber \\
&  = &
    \mE_{\g_1^{(3)}}    \mE_{\g_1^{(2)}}   \lp \sqrt{1-\p_2} \g_1^{(2)} + \sqrt{\p_2-\p_3} \g_1^{(3)}+ \sqrt{\p_3} \g_1^{(4)} \rp^2 \nonumber \\
& = &
1-\p_2+\p_2 \lp\g_1^{(3)} \rp^2, \nonumber \\
  \end{eqnarray}
and
\begin{eqnarray}\label{eq:quadact71}
\bar{p}_3  & = &  \frac{1}{4} \mE_{\g_1^{(4)}} \lp \mE_{\g_1^{(3)}} \mE_{\g_1^{(2)}}  \f^{(2)}\lp \sum_{k=2}^{4}b_k \g_1^{(k)}\rp  \rp^2
=
\frac{1}{4} \mE_{\g_1^{(4)}} \lp 1-\p_2+\p_2 \lp\g_1^{(3)} \rp^2  \rp^2
\nonumber \\
 & = &
 \frac{1}{4} \lp  1+2\p_3^2\rp  \triangleq  \bar{p}_x (\p_3).
  \end{eqnarray}
We then also quickly find
\begin{eqnarray}\label{eq:quadact72}
     \mE_{\g_1^{(2)}}  \f^{(2)}\lp \sum_{k=2}^{4}b_k \g_1^{(k)}\rp
&  = &
      \mE_{\g_1^{(2)}}   \lp \sum_{k=2}^{4}b_k \g_1^{(k)} \rp^2 \nonumber \\
&  = &
      \mE_{\g_1^{(2)}}   \lp \sqrt{1-\p_2} \g_1^{(2)} + \sqrt{\p_2-\p_3} \g_1^{(3)}+ \sqrt{\p_3} \g_1^{(4)} \rp^2, \nonumber \\
  \end{eqnarray}
and
\begin{eqnarray}\label{eq:quadact73}
\bar{p}_2 & = &  \frac{1}{4} \mE_{\g_1^{(4)}}  \mE_{\g_1^{(3)}} \lp \mE_{\g_1^{(2)}}  \f^{(2)}\lp \sum_{k=2}^{4}b_k \g_1^{(k)}\rp  \rp^2  =
\bar{p}_x(\p_2). \nonumber \\
  \end{eqnarray}
A combination of (\ref{eq:reluact67}), (\ref{eq:quadact68}), (\ref{eq:quadact69}), (\ref{eq:quadact71}), and (\ref{eq:quadact73}) then also gives
\begin{eqnarray}\label{eq:quadact74}
\bar{b}_2
& = &
\sqrt{\bar{p}_1-\bar{p}_2}
\nonumber \\
\bar{b}_3
& = &
\sqrt{\bar{p}_2-\bar{p}_3}
\nonumber \\
\bar{b}_4
& = &
\sqrt{\bar{p}_3-\bar{p}_4},
 \end{eqnarray}
where, for the \emph{quadratic} activations, $\bar{p}_1$, $\bar{p}_2$, $\bar{p}_3$, and $\bar{p}_4$ are as in (\ref{eq:quadact68}), (\ref{eq:quadact69}), (\ref{eq:quadact71}), and (\ref{eq:quadact73}), respectively.

Analogously to (\ref{eq:reluact78}), we then have
\begin{align}\label{eq:quadact78}
    \bar{\psi}_{rd}(\p,\q,\c,\gamma_{sq},\gamma_{sq}^{(p)})
 & =    \frac{1}{2}
(1-\p_2\q_2)\c_2+ \frac{1}{2}
(\p_2\q_2-\p_3\q_3)\c_3 -  \gamma_{sq}^{(p)}
-\Bigg(\Bigg. -\frac{1}{2\c_2} \log \lp \frac{2\gamma_{sq}^{(p)}-\c_2(1-\q_2)}{2\gamma_{sq}^{(p)}} \rp
 \nonumber \\
& \quad -\frac{1}{2\c_3} \log \lp \frac{2\gamma_{sq}^{(p)}-\c_2(1-\q_2)-\c_3(\q_2-\q_3)}{2\gamma_{sq}^{(p)}-\c_2(1-\q_2)} \rp
 \nonumber \\
& \quad +  \frac{\q_3}{2(2\gamma_{sq}^{(p)}-\c_2(1-\q_2)-\c_3(\q_2-\q_3))}   \Bigg.\Bigg)
 + \gamma_{sq}
 -\frac{\alpha}{\c_3}   \mE_{g_{f}^{(4,3)}} \log\lp \mE_{g_{f}^{(4,2)}} \lp f_{(zt)}^{(3,f)} \rp^{\frac{\c_3}{\c_2}}\rp,
    \end{align}
where $f_{(zt)}^{(3,f)} $ is as in (\ref{eq:reluact76}) with $\bar{b}_2$, $\bar{b}_3$, and $\bar{b}_4$ as in (\ref{eq:quadact74}). One then proceeds by solving the system in (\ref{eq:reluact79}) and observing that the closed form relations (\ref{eq:reluact80}) continue to hold for the obtained solutions. Utilizing the obtained concrete numerical values for all the considered parameters, one then, from  $\bar{\psi}_{rd}^{(d,3)}(\hat{\p},\hat{\q},\hat{\c},\hat{\gamma}_{sq},\hat{\gamma}_{sq}^{(p)})=0$, obtains for
 \begin{equation}\label{eq:quadact81}
\hspace{-2in}(\mbox{\bl{\textbf{\emph{full} third level:}}}) \qquad \qquad  a_c^{(3,f)}(\infty) =  \lim_{d\rightarrow\infty} a_c^{(3,f)}(d)  \approx \bl{\mathbf{3.3669}}.
  \end{equation}

\noindent \underline{\textbf{\emph{Concrete numerical values:}}}  In  Table \ref{tab:tab2}, the above $a_c^{(3,f)}(\infty)$ is complemented by the concrete values of all the relevant quantities related to the third \emph{full} (3-sfl RDT) level of lifting. As earlier, the corresponding quantities for the first \emph{full} (1-sfl RDT) and the second \emph{partial} (2-spl RDT) and \emph{full} (2-sfl RDT) levels  are shown in parallel to enable a systematic view of the lifting mechanism progressing.
\begin{table}[h]
\caption{$r$-sfl RDT parameters; \textbf{\emph{Quadratic}} activations  -- \emph{wide} treelike net capacity;  $\hat{\c}_1\rightarrow 1$; $d\rightarrow\infty$; $n\rightarrow\infty$}\vspace{.1in}
\centering
\def\arraystretch{1.2}
{\small
\begin{tabular}{||l||c|c||c|c|c||c|c|c||c|c||c||}\hline\hline
 \hspace{-0in}$r$-sfl RDT                                             & $\hat{\gamma}_{sq}$    & $\hat{\gamma}_{sq}^{(p)}$    &  $\hat{\p}_3$  &  $\hat{\p}_2$ & $\hat{\p}_1$    &  $\hat{\q}_3$   & $\hat{\q}_2$  & $\hat{\q}_1$ &  $\hat{\c}_3$ &  $\hat{\c}_2$    & $\alpha_c^{(r)}(\infty)$  \\ \hline\hline
$1$-sfl RDT                                      & $0.5$ & $0.5$ &  $0$  &  $0$  & $\rightarrow 1$  &  $0$   & $0$ & $\rightarrow 1$
 &  $0$  &  $\rightarrow 0$  & \bl{$\mathbf{4}$} \\ \hline\hline
 $2$-spl RDT                                      & $ 0.1975$ & $1.2657$  &  $0$  &  $0$ & $\rightarrow 1$ &  $0$  &  $0$ & $\rightarrow 1$  &  $0$ &   $2.1364$   & \bl{$\mathbf{3.3811}$} \\ \hline
  $2$-sfl RDT                                      & $0.1855$  & $ 1.3482$   &  $0$ & $0.2845$ & $\rightarrow 1$  &  $0$  &  $0.0666$ & $\rightarrow 1$
 &  $0$  &  $2.3705$   & \bl{$\mathbf{3.3750}$}  \\ \hline\hline
   $3$-sfl RDT                                      & $0.1110$  & $2.2673$ & $0.2014$ & $0.9557$ &  $\rightarrow 1$ &  $0.0432$ &  $0.6504$ & $\rightarrow 1$
&  $2.1$ &  $8$   & \bl{$\mathbf{3.3669}$}  \\ \hline\hline
  \end{tabular}
  }
\label{tab:tab4}
\end{table}

\subsubsection{General $r$-th level of lifting}
\label{sec:rlev}

All general $r$ lifting considerations are exactly the same as stated between  (\ref{eq:reluact82})-(\ref{eq:reluact87}) with $\bar{p}_x(\cdot)$ as in (\ref{eq:quadact71}).

\subsection{Error function ($\erf$) activations}
\label{sec:erf}

We now consider the well known $\erf$  activation. This basically means that the neuronal function in the hidden layer is now assumed as
\begin{equation}
\hspace{-1.5in} \mbox{\textbf{\bl{\erf} activation:}} \hspace{1in} \f^{(2)}(\x)=\erf\lp \x\rp.
\label{eq:erfact1}
\end{equation}
As usual, we start by looking at the first level of lifting, utilize the results obtained in the previous sections for generic activations, and specialize them to the $\erf$ scenario of interest here.

\subsubsection{$r=1$ -- first level of lifting}
\label{sec:firstleverf}

We first observe that (\ref{eq:negprac20a0}) can now be rewritten as
 \begin{eqnarray}\label{eq:erfact0a0}
a_c^{(1)}(d)
& = &   \frac{1}{\mE_{{\mathcal U}_2^{(j_w)}}  \lp z_i^{(1)}\lp\g^{(2)};\f^{(2)}\lp \q^{(net)} \rp\rp\rp}
  =   \frac{1}{\mE_{\g^{(2)}}  \lp z_i^{(1)}\lp \bar{\g}^{(2)}; \erf\lp\q^{(net)}\rp\rp\rp}.
  \end{eqnarray}
where one also recalls on (\ref{eq:reluact12})
    \begin{eqnarray}\label{eq:erfact12}
\lim_{d\rightarrow \infty}\mE_{\g^{(2)}} z_i^{(1)}\lp   \bar{\g}^{(2)};\f^{(2)}\lp \q^{(net)} \rp   \rp
 & \rightarrow &
\frac{\lim_{d\rightarrow \infty}\mE_{\g^{(2)}} \lp \max\lp - \f^{(2)}\lp \g^{(2)} \rp^T\w,0\rp \rp^2}{\lim_{d\rightarrow \infty}\mE_{\g^{(2)}} \left \|  \frac{d\f^{(2)}\lp \g^{(2)} \rp }{d\g^{(2)}} \right \|_2^2}.
 \end{eqnarray}

\noindent \red{\textbf{\emph{(i) Handling $\mE_{\g^{(2)}} \lp \max\lp - \f^{(2)}\lp \g^{(2)} \rp^T\w,0\rp \rp^2$:}}} From (\ref{eq:reluact2})), we further have
 \begin{eqnarray}\label{eq:erfact20}
\mE_{\g^{(2)}} \lp \max\lp - \f^{(2)}\lp \g^{(2)} \rp^T\w,0\rp \rp^2
=\mE_{g_c^{(2)}} \lp \max\lp g_c^{(2)},0\rp \rp^2=\frac{1}{2}\sigma_{2}^2,
\end{eqnarray}
where recalling on (\ref{eq:erfact1}) and analogously to   (\ref{eq:reluact21}), (\ref{eq:reluact22}), and (\ref{eq:reluact23})
\begin{eqnarray}
\mE \lp \f^{(2)}\lp \g_{1}^{(2)}\rp \rp &= &\mE \erf\lp\g_1^{(2)}\rp =0 \nonumber \\
 \mE \lp \f^{(2)}\lp \g_{1}^{(2)}\rp \rp^2 &= &\mE \lp \erf\lp \g_1^{(2)}\rp\rp^2 = \frac{2}{\pi}\asin\lp \frac{2}{3}\rp\approx 0.4646,
\label{eq:erfact22}
\end{eqnarray}
and
 \begin{eqnarray}\label{eq:erfact23}
 \sigma_2^2=d\lp\mE \lp\f^{(2)}\lp \g_{1}^{(2)} \rp \rp^2- \lp \mE\lp \f^{(2)}\lp \g_{1}^{(2)}\rp\rp\rp^2\rp
 =\frac{2}{\pi}\asin\lp \frac{2}{3}\rp d\approx 0.4646d.
\end{eqnarray}
From  (\ref{eq:erfact20}) and (\ref{eq:erfact23}), we also find
 \begin{eqnarray}\label{eq:erfact24}
\mE_{\g^{(2)}} \lp \max\lp - \f^{(2)}\lp \g^{(2)} \rp^T\w,0\rp \rp^2
 =\frac{1}{2}\sigma_{2}^2=\frac{1}{\pi}\asin\lp \frac{2}{3}\rp d\approx 0.2323d.
\end{eqnarray}

\noindent \red{\textbf{\emph{(ii) Handling $\mE_{\g^{(2)}} \left \|  \frac{d\f^{(2)}\lp \g^{(2)} \rp }{d\g^{(2)}} \right \|_2^2$:}}} Recalling again on (\ref{eq:erfact1}), we find
 \begin{eqnarray}\label{eq:erfact25}
\frac{d\f^{(2)}\lp \g_{j_w}^{(2)} \rp }{d\g_{j_w}^{(2)}}= \frac{d\erf\lp \g_{j_w}^{(2)} \rp }{d\g_{j_w}^{(2)}}= \frac{2}{\sqrt{\pi}}e^{-\lp\g_{j_w}^{(2)}\rp^2}.
\end{eqnarray}
One then further finds
 \begin{eqnarray}\label{eq:erfact26}
\mE_{\g^{(2)}} \left \|  \frac{d\f^{(2)}\lp \g^{(2)} \rp }{d\g^{(2)}} \right \|_2^2
=\mE_{\g^{(2)}} \sum_{j_w=1}^{d}\lp \frac{d\f^{(2)}\lp \g_{j_w}^{(2)} \rp }{d\g_{j_w}^{(2)}}\rp^2
=  \frac{4}{\pi}\sum_{j_w=1}^{d}\mE_{\g_{j_w}^{(2)}} e^{-2\lp\g_{j_w}^{(2)}\rp^2}=\frac{4}{\sqrt{5}\pi}d.
\end{eqnarray}
Relying on the above observations, and combining (\ref{eq:erfact12}), (\ref{eq:erfact24}), and (\ref{eq:erfact26}), one obtains
 \begin{eqnarray}\label{eq:erfact27}
\lim_{d\rightarrow\infty}\mE_{\g^{(2)}} z_i^{(1)}\lp   \bar{\g}^{(2)};\f^{(2)}\lp \q^{(net)} \rp   \rp
 & =  &  \lim_{d\rightarrow\infty}\frac{\mE_{\g^{(2)}} \lp \max\lp - \f^{(2)}\lp \g^{(2)} \rp^T\w,0\rp \rp^2}{\mE_{\g^{(2)}} \left \|  \frac{d\f^{(2)}\lp \g^{(2)} \rp }{d\g^{(2)}} \right \|_2^2}=
 \frac{\frac{1}{\pi}\asin\lp \frac{2}{3}\rp }{\frac{4}{\sqrt{5}\pi}}
 =0.4079.\nonumber \\
 \end{eqnarray}
A further combination of (\ref{eq:negprac20a0}), (\ref{eq:erfact0a0}), and (\ref{eq:erfact27}) then gives
\begin{eqnarray}\label{eq:erfact28}
\hspace{-0in}(\mbox{\bl{\textbf{first level:}}}) \qquad  a_c^{(1)}(\infty)
& = &    \lim_{d\rightarrow\infty} a_c^{(1)}(d)   =   \frac{1}{\lim_{d\rightarrow\infty} \mE_{{\mathcal U}_2^{(j_w)}}  \lp z_i^{(1)}\lp  \bar{\g}^{(2)};\f^{(2)}\lp \q^{(net)} \rp\rp\rp}  \nonumber \\
 & = &   \frac{1}{\lim_{d\rightarrow\infty} \mE_{\g^{(2)}}  \lp z_i^{(1)}\lp   \bar{\g}^{(2)}; \erf\lp\q^{(net)}\rp\rp\rp} = \frac{4}{\sqrt{5}\asin\lp \frac{2}{3}\rp}
  \approx \bl{\mathbf{2.4514}}.
  \end{eqnarray}

\subsubsection{Second level of lifting}
\label{sec:secondleverf}

Analogously to (\ref{eq:reluact35}) and keeping in mind (\ref{eq:erfact1}), we first have
\begin{eqnarray}\label{eq:erfact35}
    \bar{\psi}_{rd}^{(d,2)}(\p,\q,\c,\gamma_{sq},\gamma_{sq}^{(p)})
  & = &  \frac{1}{2}
(1-\p_2\q_2)\c_2
 -  \gamma_{sq}^{(p)}
-\Bigg(\Bigg. -\frac{1}{2\c_2} \log \lp \frac{2\gamma_{sq}-\c_2(1-\q_2)}{2\gamma_{sq}} \rp  \nonumber \\
 & & +  \frac{\q_2}{2(2\gamma_{sq}-\c_2(1-\q_2))}   \Bigg.\Bigg)
 \nonumber \\
& &   + \gamma_{sq}
 -\alpha\frac{1}{\c_2}\mE_{{\mathcal U}_3^{(j_w)}} \log\lp \mE_{{\mathcal U}_2^{(j_w)}} e^{-\c_2\frac{z_i^{(2)}\lp \bar{\g}^{(3)}; \erf\lp \q^{(net)}\rp \rp }{4\gamma_{sq}}}\rp,
    \end{eqnarray}
where, analogously to (\ref{eq:reluact52}),
\begin{eqnarray}\label{eq:erfact52}
  z_i^{(2)}\lp   \bar{\g}^{(3)}; \erf\lp \q^{(net)} \rp   \rp
   &\rightarrow  &  \lp \max\lp  \bar{b}_2g_{f}^{(3,1)}+\bar{b}_3g_{f}^{(3,2)},0\rp \rp^2,
 \end{eqnarray}
with $\bar{b}_2$ and $\bar{b}_3$ as in (\ref{eq:reluact53}). As mentioned on multiple occasions earlier, $g_f^{(3,1)}$ relates to the randomness of $\g^{(2)}$ (i.e., ${\mathcal U}_2$) and $g_f^{(3,2)}$ relates to the randomness of $\g^{(3)}$ (i.e., ${\mathcal U}_3$).

\noindent \red{\textbf{\emph{(i) Handling $\mE_{\bar{\g}^{(3)}} \left \|  \frac{d\f^{(2)}\lp \g^{(x,2)} \rp }{d\g^{(x,2)}} \right \|_2^2$:}}} Analogously to (\ref{eq:erfact26}), and keeping in mind (\ref{eq:erfact1}), we find
\begin{eqnarray}\label{eq:erfact53a0}
\mE_{\bar{\g}^{(3)}} \left \|  \frac{d\f^{(2)}\lp \g^{(x,2)} \rp }{d\g^{(x,2)}} \right \|_2^2=\frac{4}{\sqrt{5}\pi}d.
\end{eqnarray}

\noindent \red{\textbf{\emph{(ii) Further specializing to $\f^{(2)}\lp \q^{(net)}\rp=\erf\lp \q^{(net)}\rp$:}}} We first observe
\begin{eqnarray}\label{eq:erfact54}
\bar{p}_1  &  = & \frac{\sqrt{5}\pi}{4}\mE_{\g_1^{(3)}} \mE_{\g_1^{(2)}} \lp \f^{(2)}\lp \sum_{k=2}^{3}b_k \g_1^{(k)}\rp \rp^2=
 \frac{\sqrt{5}\pi}{4}\mE_{\g_1^{(3)}} \mE_{\g_1^{(2)}} \lp\erf \lp \sum_{k=2}^{3}b_k \g_1^{(k)}\rp\rp^2 \nonumber \\
& = & \frac{\sqrt{5}}{2}\asin \lp \frac{2}{3}\rp \approx 0.8159,
    \end{eqnarray}
and
\begin{eqnarray}\label{eq:erfact55}
\bar{p}_3 =  \frac{\sqrt{5}\pi}{4}\lp \mE_{\g_1^{(3)}} \mE_{\g_1^{(2)}}  \f^{(2)}\lp \sum_{k=2}^{3}b_k \g_1^{(k)}\rp \rp^2=
   \frac{\sqrt{5}\pi}{4} \lp \mE_{\g_1^{(3)}} \mE_{\g_1^{(2)}}  \erf  \lp \sum_{k=2}^{3}b_k \g_1^{(k)}\rp \rp^2
  =0. \nonumber \\
 \end{eqnarray}
Then one also has
\begin{eqnarray}\label{eq:erfact56}
  \mE_{\g_1^{(2)}}  \f^{(2)}\lp \sum_{k=2}^{3}b_k \g_1^{(k)}\rp
&  = &
    \mE_{\g_1^{(2)}}   \erf\lp \sum_{k=2}^{3}b_k \g_1^{(k)} \rp
  =
    \mE_{\g_1^{(2)}} \erf \lp \sqrt{1-\p_2} \g_1^{(2)} + \sqrt{\p_2} \g_1^{(3)} \rp, \nonumber \\
   \end{eqnarray}
and
\begin{eqnarray}\label{eq:erfact57}
\bar{p}_2 & = &  \frac{\sqrt{5}\pi}{4} \mE_{\g_1^{(3)}} \lp \mE_{\g_1^{(2)}}  \f^{(2)}\lp \sum_{k=2}^{3}b_k \g_1^{(k)}\rp  \rp^2
= \frac{\sqrt{5}\pi}{4} \mE_{\g_1^{(3)}} \lp   \mE_{\g_1^{(2)}} \erf \lp \sqrt{1-\p_2} \g_1^{(2)} + \sqrt{\p_2} \g_1^{(3)} \rp  \rp^2. \nonumber \\
   \end{eqnarray}
One then has
\begin{eqnarray}\label{eq:erfact58}
\bar{b}_2
& = &
\sqrt{\bar{p}_1-\bar{p}_2}
\nonumber \\
\bar{b}_3
& = &
\sqrt{\bar{p}_2-\bar{p}_3},
 \end{eqnarray}
where, for the $\erf$ activations, $\bar{p}_1$, $\bar{p}_2$, and $\bar{p}_3$ are as in (\ref{eq:erfact54}), (\ref{eq:erfact55}), and (\ref{eq:erfact57}), respectively.

It is then easy to  rewrite (\ref{eq:erfact35}) as
\begin{eqnarray}\label{eq:erfact59}
    \bar{\psi}_{rd}^{(d,2)}(\p,\q,\c,\gamma_{sq},\gamma_{sq}^{(p)})
   & = &  \frac{1}{2}
(1-\p_2\q_2)\c_2
 -  \gamma_{sq}^{(p)}
-\Bigg(\Bigg. -\frac{1}{2\c_2} \log \lp \frac{2\gamma_{sq}-\c_2(1-\q_2)}{2\gamma_{sq}} \rp  \nonumber \\
 & & +  \frac{\q_2}{2(2\gamma_{sq}-\c_2(1-\q_2))}   \Bigg.\Bigg)
    + \gamma_{sq}
 -\alpha\frac{1}{\c_2}   \mE_{g_f^{(3,2)}} \log\lp f_{(zt)}^{(2,f)} \rp,
    \end{eqnarray}
with $f_{(zt)}^{(2,f)}$ as in (\ref{eq:reluact60}) and $\bar{p}_1$, $\bar{p}_2$, and $\bar{p}_3$ as in (\ref{eq:erfact54}), (\ref{eq:erfact55}), and (\ref{eq:erfact57}), respectively. One then solves the system in (\ref{eq:reluact62}) and observes that the closed form relations from (\ref{eq:reluact63}) continue to hold. After taking concrete numerical values for all the parameters, from $\bar{\psi}_{rd}^{(d,2)}(\hat{\p},\hat{\q},\hat{\c},\hat{\gamma}_{sq},\hat{\gamma}_{sq}^{(p)})=0$, we then find for
 \begin{equation}\label{eq:erfact64}
\hspace{-2in}(\mbox{\bl{\textbf{\emph{full} second level:}}}) \qquad \qquad  a_c^{(2,f)}(\infty) =  \lim_{d\rightarrow\infty} a_c^{(2,f)}(d)  \approx \bl{\mathbf{2.3750}}.
  \end{equation}

\noindent \underline{\textbf{\emph{Concrete numerical values:}}}  As expected by standards set earlier, we, in  Table \ref{tab:tab5}, complement the $a_c^{(2,f)}(\infty)$ obtained above with the concrete values of all the parameters relevant to the second \emph{full} (2-sfl RDT) level of lifting.  The corresponding  quantities for the first \emph{full} (1-sfl RDT)  are included as well so that the progress of the lifting mechanism can be systematically viewed. Differently from earlier sections though, the second partial level makes no progress and is therefore not included in the table.
\begin{table}[h]
\caption{$r$-sfl RDT parameters; \textbf{$\erf$} activations -- \emph{wide} treelike net capacity;  $\hat{\c}_1\rightarrow 1$; $d\rightarrow\infty$; $n\rightarrow\infty$}\vspace{.1in}
\centering
\def\arraystretch{1.2}
\begin{tabular}{||l||c|c||c|c||c|c||c||c||}\hline\hline
 \hspace{-0in}$r$-sfl RDT                                             & $\hat{\gamma}_{sq}$    & $\hat{\gamma}_{sq}^{(p)}$    &  $\hat{\p}_2$ & $\hat{\p}_1$     & $\hat{\q}_2$  & $\hat{\q}_1$ &  $\hat{\c}_2$    & $\alpha_c^{(r)}(\infty)$  \\ \hline\hline
$1$-sfl RDT                                      & $0.5$ & $0.5$ &  $0$  & $\rightarrow 1$   & $0$ & $\rightarrow 1$
 &  $\rightarrow 0$  & \bl{$\mathbf{2.4514}$} \\ \hline\hline
   $2$-sfl RDT                                      & $0.2110$  & $ 1.1847$  & $ 0.7547$ & $\rightarrow 1$ &  $0.5183$ & $\rightarrow 1$
 &  $3.1984$   & \bl{$\mathbf{2.3750}$}  \\ \hline\hline
  \end{tabular}
\label{tab:tab5}
\end{table}

Similarly to what we observed earlier when discussing the \emph{ReLU} and \emph{quadratic} activations, we here again note that the capacity results shown in Table \ref{tab:tab5} exactly match the corresponding ones obtained using the statistical physics replica methods relying on the replica symmetry and full 1rsb in \cite{ZavPeh21}.

\subsubsection{$r=3$ -- third level of lifting}
\label{sec:thirdleverf}

Analogously to  (\ref{eq:reluact65}) (and ultimately (\ref{eq:negprac19}) and (\ref{eq:erfact35})), we then write
{\small\begin{align}\label{eq:erfact65}
    \bar{\psi}_{rd}^{(d,3)}(\p,\q,\c,\gamma_{sq},\gamma_{sq}^{(p)})
  & =   \frac{1}{2}
(1-\p_2\q_2)\c_2+ \frac{1}{2}
(\p_2\q_2-\p_3\q_3)\c_3 -  \gamma_{sq}^{(p)} \nonumber \\
&\quad
-\Bigg(\Bigg. -\frac{1}{2\c_2} \log \lp \frac{2\gamma_{sq}^{(p)}-\c_2(1-\q_2)}{2\gamma_{sq}^{(p)}} \rp  -\frac{1}{2\c_3} \log \lp \frac{2\gamma_{sq}^{(p)}-\c_2(1-\q_2)-\c_3(\q_2-\q_3)}{2\gamma_{sq}^{(p)}-\c_2(1-\q_2)} \rp  \nonumber \\
& \quad +  \frac{\q_3}{2(2\gamma_{sq}^{(p)}-\c_2(1-\q_2)-\c_3(\q_2-\q_3))}   \Bigg.\Bigg)
 \nonumber \\
& \quad   + \gamma_{sq}
 -\frac{\alpha}{\c_3}\mE_{{\mathcal U}_4^{(j_w)}} \log\lp \mE_{{\mathcal U}_3^{(j_w)}} \lp \mE_{{\mathcal U}_2^{(j_w)}} e^{-\c_2\frac{z_i^{(3)}\lp \bar{\g}^{(4)}; \erf\lp \q^{(net)}\rp\rp}{4\gamma_{sq}}}\rp^{\frac{\c_3}{\c_2}}\rp,
    \end{align}}
 where
 \begin{eqnarray}\label{eq:erfact66}
  z_i^{(3)}\lp   \bar{\g}^{(4)};\f^{(2)}\lp \q^{(net)} \rp   \rp
 =
  z_i^{(3)}\lp   \bar{\g}^{(4)};\erf\lp \q^{(net)} \rp   \rp
    &\rightarrow  &  \lp \max\lp  \bar{b}_2g_{f}^{(4,1)}+\bar{b}_3g_{f}^{(4,2)}+\bar{b}_4g_{f}^{(4,3)},0\rp \rp^2,\nonumber \\
 \end{eqnarray}
with $\bar{b}_2$, $\bar{b}_3$, and $\bar{b}_4$  as in (\ref{eq:reluact67}), and $b_2=\sqrt{1-\p_2}$, $b_3=\sqrt{\p_2-\p_3}$, and $b_4=\sqrt{\p_3}$. As usual, $g_f^{(4,k)},k=1,2,3$, relates to the randomness of $\g^{(k+1)}$ (i.e., ${\mathcal U}_{k+1}$).

\noindent \red{\textbf{\emph{(i) Handling $\mE_{\bar{\g}^{(4)}} \left \|  \frac{d\f^{(2)}\lp \g^{(x,3)} \rp }{d\g^{(x,3)}} \right \|_2^2$:}}} As in (\ref{eq:erfact53a0}), we, keeping in mind (\ref{eq:erfact1}), find
\begin{eqnarray}\label{eq:erfact65a0}
\mE_{\bar{\g}^{(4)}} \left \|  \frac{d\f^{(2)}\lp \g^{(x,3)} \rp }{d\g^{(x,3)}} \right \|_2^2=
\mE_{\bar{\g}^{(4)}} \left \|  \frac{d\lp \g^{(x,3)} \rp^2 }{d\g^{(x,3)}} \right \|_2^2=\frac{4}{\sqrt{5}\pi}d.
\end{eqnarray}

\noindent \red{\textbf{\emph{(ii) Further specializing to $\f^{(2)}\lp \q^{(net)}\rp=\erf\lp \q^{(net)}\rp$:}}} We start by observing
\begin{eqnarray}\label{eq:erfact68}
\bar{p}_1 & = & \frac{\sqrt{5}\pi}{4}\mE_{\g_1^{(4)}} \mE_{\g_1^{(3)}} \mE_{\g_1^{(2)}} \lp \f^{(2)}\lp \sum_{k=2}^{4}b_k \g_1^{(k)}\rp \rp^2=
\frac{\sqrt{5}\pi}{4} \mE_{\g_1^{(4)}}  \mE_{\g_1^{(3)}} \mE_{\g_1^{(2)}} \lp \erf\lp \sum_{k=2}^{4}b_k \g_1^{(k)}\rp\rp^2 \nonumber \\
& = &\frac{\sqrt{5}}{2}\asin\lp \frac{2}{3}\rp=0.8159,\nonumber \\
    \end{eqnarray}
and
\begin{eqnarray}\label{eq:erfact69}
\bar{p}_4 & = & \frac{\sqrt{5}\pi}{4} \lp \mE_{\g_1^{(4)}}  \mE_{\g_1^{(3)}} \mE_{\g_1^{(2)}}  \f^{(2)}\lp \sum_{k=2}^{4}b_k \g_1^{(k)}\rp \rp^2 =
   \frac{\sqrt{5}\pi}{4} \lp \mE_{\g_1^{(4)}}  \mE_{\g_1^{(3)}} \mE_{\g_1^{(2)}}   \erf\lp\sum_{k=2}^{4}b_k \g_1^{(k)}\rp \rp^2= 0. \nonumber \\
 \end{eqnarray}
Then one also has
\begin{eqnarray}\label{eq:erfact70}
  \mE_{\g_1^{(3)}}  \mE_{\g_1^{(2)}}  \f^{(2)}\lp \sum_{k=2}^{4}b_k \g_1^{(k)}\rp
&  = &
  \mE_{\g_1^{(3)}}     \mE_{\g_1^{(2)}}   \erf\lp \sum_{k=2}^{4}b_k \g_1^{(k)}\rp \nonumber \\
&  = &
    \mE_{\g_1^{(3)}}    \mE_{\g_1^{(2)}}   \erf\lp \sqrt{1-\p_2} \g_1^{(2)} + \sqrt{\p_2-\p_3} \g_1^{(3)}+ \sqrt{\p_3} \g_1^{(4)} \rp,
  \end{eqnarray}
and
\begin{eqnarray}\label{eq:erfact71}
\bar{p}_3  & = &  \frac{\sqrt{5}\pi}{4} \mE_{\g_1^{(4)}} \lp \mE_{\g_1^{(3)}} \mE_{\g_1^{(2)}}  \f^{(2)}\lp \sum_{k=2}^{4}b_k \g_1^{(k)}\rp  \rp^2
\nonumber \\
& = &
\frac{\sqrt{5}\pi}{4} \mE_{\g_1^{(4)}} \lp     \mE_{\g_1^{(3)}}    \mE_{\g_1^{(2)}}   \erf\lp \sqrt{1-\p_2} \g_1^{(2)} + \sqrt{\p_2-\p_3} \g_1^{(3)}+ \sqrt{\p_3} \g_1^{(4)} \rp \rp^2
  \triangleq  \bar{p}_x (\p_3).
  \end{eqnarray}
One then immediately also has
\begin{eqnarray}\label{eq:erfact72}
     \mE_{\g_1^{(2)}}  \f^{(2)}\lp \sum_{k=2}^{4}b_k \g_1^{(k)}\rp
&  = &
      \mE_{\g_1^{(2)}} \erf  \lp \sum_{k=2}^{4}b_k \g_1^{(k)} \rp \nonumber \\
&  = &
      \mE_{\g_1^{(2)}}  \erf \lp \sqrt{1-\p_2} \g_1^{(2)} + \sqrt{\p_2-\p_3} \g_1^{(3)}+ \sqrt{\p_3} \g_1^{(4)} \rp, \nonumber \\
  \end{eqnarray}
and
\begin{eqnarray}\label{eq:erfact73}
\bar{p}_2 & = &  \frac{\sqrt{5}\pi}{4} \mE_{\g_1^{(4)}}  \mE_{\g_1^{(3)}} \lp \mE_{\g_1^{(2)}}  \f^{(2)}\lp \sum_{k=2}^{4}b_k \g_1^{(k)}\rp  \rp^2  =
\bar{p}_x(\p_2).
  \end{eqnarray}
One then further finds
\begin{eqnarray}\label{eq:erfact74}
\bar{b}_2
& = &
\sqrt{\bar{p}_1-\bar{p}_2}
\nonumber \\
\bar{b}_3
& = &
\sqrt{\bar{p}_2-\bar{p}_3}
\nonumber \\
\bar{b}_4
& = &
\sqrt{\bar{p}_3-\bar{p}_4},
 \end{eqnarray}
where $\bar{p}_1$, $\bar{p}_2$, $\bar{p}_3$, and $\bar{p}_4$ are as in (\ref{eq:erfact68}), (\ref{eq:erfact69}), (\ref{eq:erfact71}), and (\ref{eq:erfact73}), respectively.

Analogously to (\ref{eq:reluact78}), we then have
\begin{align}\label{eq:erfact78}
    \bar{\psi}_{rd}(\p,\q,\c,\gamma_{sq},\gamma_{sq}^{(p)})
 & =    \frac{1}{2}
(1-\p_2\q_2)\c_2+ \frac{1}{2}
(\p_2\q_2-\p_3\q_3)\c_3 -  \gamma_{sq}^{(p)}
-\Bigg(\Bigg. -\frac{1}{2\c_2} \log \lp \frac{2\gamma_{sq}^{(p)}-\c_2(1-\q_2)}{2\gamma_{sq}^{(p)}} \rp
 \nonumber \\
& \quad -\frac{1}{2\c_3} \log \lp \frac{2\gamma_{sq}^{(p)}-\c_2(1-\q_2)-\c_3(\q_2-\q_3)}{2\gamma_{sq}^{(p)}-\c_2(1-\q_2)} \rp
 \nonumber \\
& \quad +  \frac{\q_3}{2(2\gamma_{sq}^{(p)}-\c_2(1-\q_2)-\c_3(\q_2-\q_3))}   \Bigg.\Bigg)
 + \gamma_{sq}
 -\frac{\alpha}{\c_3}   \mE_{g_{f}^{(4,3)}} \log\lp \mE_{g_{f}^{(4,2)}} \lp f_{(zt)}^{(3,f)} \rp^{\frac{\c_3}{\c_2}}\rp,
    \end{align}
where $f_{(zt)}^{(3,f)} $ is as in (\ref{eq:reluact76}) with $\bar{b}_2$, $\bar{b}_3$, and $\bar{b}_4$ as in (\ref{eq:erfact74}). After  observing that the closed form relations (\ref{eq:reluact80}) continue to hold, one solves the system in (\ref{eq:reluact79})  to obtain the concrete numerical values for all the considered parameters, $\hat{\p},\hat{\q},\hat{\c},\hat{\gamma}_{sq},\hat{\gamma}_{sq}^{(p)}$. Plugging these values in  $\bar{\psi}_{rd}^{(d,3)}(\hat{\p},\hat{\q},\hat{\c},\hat{\gamma}_{sq},\hat{\gamma}_{sq}^{(p)})=0$, gives
 \begin{equation}\label{eq:erfact81}
\hspace{-2in}(\mbox{\bl{\textbf{\emph{full} third level:}}}) \qquad \qquad  a_c^{(3,f)}(\infty) =  \lim_{d\rightarrow\infty} a_c^{(3,f)}(d)  \approx \bl{\mathbf{2.3744}}.
  \end{equation}

\noindent \underline{\textbf{\emph{Concrete numerical values:}}}  Table \ref{tab:tab6} contains the concrete values of all the relevant quantities related to the third \emph{full} (3-sfl RDT) level of lifting that complement the above $a_c^{(3,f)}(\infty)$. The corresponding quantities for the first \emph{full} (1-sfl RDT) and the second \emph{full} (2-sfl RDT) levels  are shown in parallel as well, allowing a systematic following of the lifting mechanism progressing.
\begin{table}[h]
\caption{$r$-sfl RDT parameters; \textbf{$\erf$} activations  -- \emph{wide} treelike net capacity;  $\hat{\c}_1\rightarrow 1$; $d\rightarrow\infty$; $n\rightarrow\infty$}\vspace{.1in}
\centering
\def\arraystretch{1.2}
{\small
\begin{tabular}{||l||c|c||c|c|c||c|c|c||c|c||c||}\hline\hline
 \hspace{-0in}$r$-sfl RDT                                             & $\hat{\gamma}_{sq}$    & $\hat{\gamma}_{sq}^{(p)}$    &  $\hat{\p}_3$  &  $\hat{\p}_2$ & $\hat{\p}_1$    &  $\hat{\q}_3$   & $\hat{\q}_2$  & $\hat{\q}_1$ &  $\hat{\c}_3$ &  $\hat{\c}_2$    & $\alpha_c^{(r)}(\infty)$  \\ \hline\hline
$1$-sfl RDT                                      & $0.5$ & $0.5$ &  $0$  &  $0$  & $\rightarrow 1$  &  $0$   & $0$ & $\rightarrow 1$
 &  $0$  &  $\rightarrow 0$  & \bl{$\mathbf{2.4514}$} \\ \hline\hline
   $2$-sfl RDT                                      & $0.2110$  & $ 1.1847$   &  $0$ & $ 0.7547$ & $\rightarrow 1$  &  $0$  &  $0.5183$ & $\rightarrow 1$
 &  $0$  &  $3.1984$   & \bl{$\mathbf{2.3750}$}  \\ \hline\hline
   $3$-sfl RDT                                      & $0.1706$  & $1.4666$ &  $0.7389$ & $0.9657$ & $\rightarrow 1$ &  $0.5023$ &  $0.8256$ & $\rightarrow 1$
&  $2.8$ &  $6.9$   & \bl{$\mathbf{2.3744}$}  \\ \hline\hline
  \end{tabular}
  }
\label{tab:tab6}
\end{table}

\subsubsection{General $r$-th level of lifting}
\label{sec:rleverf}

General $r$ lifting considerations between  (\ref{eq:reluact82})-(\ref{eq:reluact87}) continue to hold with $\bar{p}_x(\cdot)$ as in (\ref{eq:erfact71}).

\subsection{$\tanh$ activations}
\label{sec:tanh}

We now consider as the neuronal functions in the hidden layer the following
\begin{equation}
\hspace{-1.5in} \mbox{\textbf{\bl{\tanh} activation:}} \hspace{1in} \f^{(2)}(\x)=\tanh\lp \x\rp.
\label{eq:tanhact1}
\end{equation}
As in previous sections, we start by looking at the first level of lifting and specialize the generic results obtained earlier to the $\tanh$ scenario of interest here.

\subsubsection{$r=1$ -- first level of lifting}
\label{sec:firstlevtanh}

We follow again the procedure outlined above on multiple occasions and start by rewriting  (\ref{eq:negprac20a0})  as
 \begin{eqnarray}\label{eq:tanhact0a0}
a_c^{(1)}(d)
& = &   \frac{1}{\mE_{{\mathcal U}_2^{(j_w)}}  \lp z_i^{(1)}\lp\g^{(2)};\f^{(2)}\lp \q^{(net)} \rp\rp\rp}
  =   \frac{1}{\mE_{\g^{(2)}}  \lp z_i^{(1)}\lp \bar{\g}^{(2)}; \tanh\lp\q^{(net)}\rp\rp\rp},
  \end{eqnarray}
where recalling on (\ref{eq:reluact12}) also gives
    \begin{eqnarray}\label{eq:tanhact12}
\lim_{d\rightarrow \infty}\mE_{\g^{(2)}} z_i^{(1)}\lp   \bar{\g}^{(2)};\f^{(2)}\lp \q^{(net)} \rp   \rp
 & \rightarrow &
\frac{\lim_{d\rightarrow \infty}\mE_{\g^{(2)}} \lp \max\lp - \f^{(2)}\lp \g^{(2)} \rp^T\w,0\rp \rp^2}{\lim_{d\rightarrow \infty}\mE_{\g^{(2)}} \left \|  \frac{d\f^{(2)}\lp \g^{(2)} \rp }{d\g^{(2)}} \right \|_2^2}.
 \end{eqnarray}

\noindent \red{\textbf{\emph{(i) Handling $\mE_{\g^{(2)}} \lp \max\lp - \f^{(2)}\lp \g^{(2)} \rp^T\w,0\rp \rp^2$:}}} Recalling further on (\ref{eq:reluact2}), we also have
 \begin{eqnarray}\label{eq:tanhact20}
\mE_{\g^{(2)}} \lp \max\lp - \f^{(2)}\lp \g^{(2)} \rp^T\w,0\rp \rp^2
=\mE_{g_c^{(2)}} \lp \max\lp g_c^{(2)},0\rp \rp^2=\frac{1}{2}\sigma_{2}^2,
\end{eqnarray}
which, together with (\ref{eq:tanhact1}), allows to write (analogously to   (\ref{eq:reluact21}), (\ref{eq:reluact22}), and (\ref{eq:reluact23}))
\begin{eqnarray}
\mE \lp \f^{(2)}\lp \g_{1}^{(2)}\rp \rp &= &\mE \tanh\lp\g_1^{(2)}\rp =0 \nonumber \\
 \mE \lp \f^{(2)}\lp \g_{1}^{(2)}\rp \rp^2 &= &\mE \lp \tanh\lp \g_1^{(2)}\rp\rp^2 \approx 0.3943,
\label{eq:tanhact22}
\end{eqnarray}
and
 \begin{eqnarray}\label{eq:tanhact23}
 \sigma_2^2=d\lp\mE \lp\f^{(2)}\lp \g_{1}^{(2)} \rp \rp^2- \lp \mE\lp \f^{(2)}\lp \g_{1}^{(2)}\rp\rp\rp^2\rp
  \approx 0.3943 d.
\end{eqnarray}
From  (\ref{eq:tanhact20}) and (\ref{eq:tanhact23}), one  also easily finds
 \begin{eqnarray}\label{eq:tanhact24}
\mE_{\g^{(2)}} \lp \max\lp - \f^{(2)}\lp \g^{(2)} \rp^T\w,0\rp \rp^2
 =\frac{1}{2}\sigma_{2}^2  \approx 0.1971 d.
\end{eqnarray}

\noindent \red{\textbf{\emph{(ii) Handling $\mE_{\g^{(2)}} \left \|  \frac{d\f^{(2)}\lp \g^{(2)} \rp }{d\g^{(2)}} \right \|_2^2$:}}} Keeping in mind (\ref{eq:tanhact1}), we find
 \begin{eqnarray}\label{eq:tanhact25}
\frac{d\f^{(2)}\lp \g_{j_w}^{(2)} \rp }{d\g_{j_w}^{(2)}}= \frac{d\tanh\lp \g_{j_w}^{(2)} \rp }{d\g_{j_w}^{(2)}}= 1-\tanh\lp\g_{j_w}^{(2)}\rp^2,
\end{eqnarray}
and
 \begin{eqnarray}\label{eq:tanhact26}
\mE_{\g^{(2)}} \left \|  \frac{d\f^{(2)}\lp \g^{(2)} \rp }{d\g^{(2)}} \right \|_2^2
=\mE_{\g^{(2)}} \sum_{j_w=1}^{d}\lp \frac{d\f^{(2)}\lp \g_{j_w}^{(2)} \rp }{d\g_{j_w}^{(2)}}\rp^2
\approx 0.4644 d.
\end{eqnarray}
A combination of (\ref{eq:tanhact12}), (\ref{eq:tanhact24}), and (\ref{eq:tanhact26}) gives
 \begin{eqnarray}\label{eq:tanhact27}
\lim_{d\rightarrow\infty}\mE_{\g^{(2)}} z_i^{(1)}\lp   \bar{\g}^{(2)};\f^{(2)}\lp \q^{(net)} \rp   \rp
 & =  &  \lim_{d\rightarrow\infty}\frac{\mE_{\g^{(2)}} \lp \max\lp - \f^{(2)}\lp \g^{(2)} \rp^T\w,0\rp \rp^2}{\mE_{\g^{(2)}} \left \|  \frac{d\f^{(2)}\lp \g^{(2)} \rp }{d\g^{(2)}} \right \|_2^2}
 =0.4244.\nonumber \\
 \end{eqnarray}
Utilizing (\ref{eq:negprac20a0}), (\ref{eq:tanhact0a0}), and (\ref{eq:tanhact27}), one then obtains
\begin{eqnarray}\label{eq:tanhact28}
\hspace{-0in}(\mbox{\bl{\textbf{first level:}}}) \qquad  a_c^{(1)}(\infty)
& = &    \lim_{d\rightarrow\infty} a_c^{(1)}(d)  =    \frac{1}{\lim_{d\rightarrow\infty} \mE_{{\mathcal U}_2^{(j_w)}}  \lp z_i^{(1)}\lp  \bar{\g}^{(2)};\f^{(2)}\lp \q^{(net)} \rp\rp\rp}  \nonumber \\
 & = &   \frac{1}{\lim_{d\rightarrow\infty} \mE_{\g^{(2)}}  \lp z_i^{(1)}\lp   \bar{\g}^{(2)}; \tanh\lp\q^{(net)}\rp\rp\rp}   \approx \bl{\mathbf{2.3556}}.
  \end{eqnarray}

We here again observe that the above capacity result exactly matches the corresponding one obtained using the statistical physics replica symmetry methods in \cite{ZavPeh21}.

\subsubsection{Second level of lifting}
\label{sec:secondleverf}

Relying on (\ref{eq:tanhact1}), we write analogously to (\ref{eq:reluact35})
\begin{eqnarray}\label{eq:tanhact35}
    \bar{\psi}_{rd}^{(d,2)}(\p,\q,\c,\gamma_{sq},\gamma_{sq}^{(p)})
  & = &  \frac{1}{2}
(1-\p_2\q_2)\c_2
 -  \gamma_{sq}^{(p)}
-\Bigg(\Bigg. -\frac{1}{2\c_2} \log \lp \frac{2\gamma_{sq}-\c_2(1-\q_2)}{2\gamma_{sq}} \rp  \nonumber \\
 & & +  \frac{\q_2}{2(2\gamma_{sq}-\c_2(1-\q_2))}   \Bigg.\Bigg)
 \nonumber \\
& &   + \gamma_{sq}
 -\alpha\frac{1}{\c_2}\mE_{{\mathcal U}_3^{(j_w)}} \log\lp \mE_{{\mathcal U}_2^{(j_w)}} e^{-\c_2\frac{z_i^{(2)}\lp \bar{\g}^{(3)}; \tanh\lp \q^{(net)}\rp \rp }{4\gamma_{sq}}}\rp,
    \end{eqnarray}
where, as in (\ref{eq:reluact52}),
\begin{eqnarray}\label{eq:tanhact52}
  z_i^{(2)}\lp   \bar{\g}^{(3)}; \tanh\lp \q^{(net)} \rp   \rp
   &\rightarrow  &  \lp \max\lp  \bar{b}_2g_{f}^{(3,1)}+\bar{b}_3g_{f}^{(3,2)},0\rp \rp^2,
 \end{eqnarray}
with $\bar{b}_2$ and $\bar{b}_3$ as in (\ref{eq:reluact53}). As usual, $g_f^{(3,1)}$ relates to the randomness of $\g^{(2)}$ (i.e., ${\mathcal U}_2$) and $g_f^{(3,2)}$ relates to the randomness of $\g^{(3)}$ (i.e., ${\mathcal U}_3$).

\noindent \red{\textbf{\emph{(i) Handling $\mE_{\bar{\g}^{(3)}} \left \|  \frac{d\f^{(2)}\lp \g^{(x,2)} \rp }{d\g^{(x,2)}} \right \|_2^2$:}}} Analogously to (\ref{eq:tanhact26})
\begin{eqnarray}\label{eq:tanhact53a0}
\mE_{\bar{\g}^{(3)}} \left \|  \frac{d\f^{(2)}\lp \g^{(x,2)} \rp }{d\g^{(x,2)}} \right \|_2^2=0.4644 d.
\end{eqnarray}

\noindent \red{\textbf{\emph{(ii) Further specializing to $\f^{(2)}\lp \q^{(net)}\rp=\tanh\lp \q^{(net)}\rp$:}}} We first write
\begin{eqnarray}\label{eq:tanhact54}
\bar{p}_1  &  \approx &  2.1533 \mE_{\g_1^{(3)}} \mE_{\g_1^{(2)}} \lp \f^{(2)}\lp \sum_{k=2}^{3}b_k \g_1^{(k)}\rp \rp^2=
  2.1533 \mE_{\g_1^{(3)}} \mE_{\g_1^{(2)}} \lp\tanh \lp \sum_{k=2}^{3}b_k \g_1^{(k)}\rp\rp^2 \nonumber \\
& \approx & 0.8490,
    \end{eqnarray}
and
\begin{eqnarray}\label{eq:tanhact55}
\bar{p}_3 \approx  2.1533 \lp \mE_{\g_1^{(3)}} \mE_{\g_1^{(2)}}  \f^{(2)}\lp \sum_{k=2}^{3}b_k \g_1^{(k)}\rp \rp^2=
    2.1533 \lp \mE_{\g_1^{(3)}} \mE_{\g_1^{(2)}}  \tanh  \lp \sum_{k=2}^{3}b_k \g_1^{(k)}\rp \rp^2
  =0. \nonumber \\
 \end{eqnarray}
Then one also has
\begin{eqnarray}\label{eq:tanhact56}
  \mE_{\g_1^{(2)}}  \f^{(2)}\lp \sum_{k=2}^{3}b_k \g_1^{(k)}\rp
&  = &
    \mE_{\g_1^{(2)}}   \tanh\lp \sum_{k=2}^{3}b_k \g_1^{(k)} \rp
  =
    \mE_{\g_1^{(2)}} \tanh \lp \sqrt{1-\p_2} \g_1^{(2)} + \sqrt{\p_2} \g_1^{(3)} \rp, \nonumber \\
   \end{eqnarray}
and
\begin{eqnarray}\label{eq:tanhact57}
\bar{p}_2 & \approx &  2.1533\mE_{\g_1^{(3)}} \lp \mE_{\g_1^{(2)}}  \f^{(2)}\lp \sum_{k=2}^{3}b_k \g_1^{(k)}\rp  \rp^2
= 2.1533 \mE_{\g_1^{(3)}} \lp   \mE_{\g_1^{(2)}} \tanh \lp \sqrt{1-\p_2} \g_1^{(2)} + \sqrt{\p_2} \g_1^{(3)} \rp  \rp^2. \nonumber \\
   \end{eqnarray}
One then has
\begin{eqnarray}\label{eq:tanhact58}
\bar{b}_2
& = &
\sqrt{\bar{p}_1-\bar{p}_2}
\nonumber \\
\bar{b}_3
& = &
\sqrt{\bar{p}_2-\bar{p}_3},
 \end{eqnarray}
where, for the $\tanh$ activations, $\bar{p}_1$, $\bar{p}_2$, and $\bar{p}_3$ are as in (\ref{eq:tanhact54}), (\ref{eq:tanhact55}), and (\ref{eq:tanhact57}), respectively.

Moreover, one can now easily rewrite (\ref{eq:tanhact35}) as
\begin{eqnarray}\label{eq:tanhact59}
    \bar{\psi}_{rd}^{(d,2)}(\p,\q,\c,\gamma_{sq},\gamma_{sq}^{(p)})
   & = &  \frac{1}{2}
(1-\p_2\q_2)\c_2
 -  \gamma_{sq}^{(p)}
-\Bigg(\Bigg. -\frac{1}{2\c_2} \log \lp \frac{2\gamma_{sq}-\c_2(1-\q_2)}{2\gamma_{sq}} \rp  \nonumber \\
 & & +  \frac{\q_2}{2(2\gamma_{sq}-\c_2(1-\q_2))}   \Bigg.\Bigg)
    + \gamma_{sq}
 -\alpha\frac{1}{\c_2}   \mE_{g_f^{(3,2)}} \log\lp f_{(zt)}^{(2,f)} \rp,
    \end{eqnarray}
with $f_{(zt)}^{(2,f)}$ as in (\ref{eq:reluact60}) and $\bar{p}_1$, $\bar{p}_2$, and $\bar{p}_3$ as in (\ref{eq:tanhact54}), (\ref{eq:tanhact55}), and (\ref{eq:tanhact57}), respectively. Solving the system in (\ref{eq:reluact62}) again gives the solution that satisfies the closed form relations from (\ref{eq:reluact63}). Taking the obtained concrete numerical parameters values and utilizing $\bar{\psi}_{rd}^{(d,2)}(\hat{\p},\hat{\q},\hat{\c},\hat{\gamma}_{sq},\hat{\gamma}_{sq}^{(p)})=0$, we obtain for
 \begin{equation}\label{eq:tanhact64}
\hspace{-2in}(\mbox{\bl{\textbf{\emph{full} second level:}}}) \qquad \qquad  a_c^{(2,f)}(\infty) =  \lim_{d\rightarrow\infty} a_c^{(2,f)}(d)  \approx \bl{\mathbf{2.3063}}.
  \end{equation}

\noindent \underline{\textbf{\emph{Concrete numerical values:}}} Following the common practice, we, in  Table \ref{tab:tab7}, again complement the above $a_c^{(2,f)}(\infty)$ with the concrete values of all the parameters relevant to the second \emph{full} (2-sfl RDT) level of lifting.  The corresponding  quantities for the first \emph{full} (1-sfl RDT) level are shown in the table as well. As was the case for the $\erf$ activation, the second partial level of lifting makes no progress and is therefore not included in the table.
\begin{table}[h]
\caption{$r$-sfl RDT parameters; \textbf{$\tanh$} activations -- \emph{wide} treelike net capacity;  $\hat{\c}_1\rightarrow 1$; $d\rightarrow\infty$; $n\rightarrow\infty$}\vspace{.1in}
\centering
\def\arraystretch{1.2}
\begin{tabular}{||l||c|c||c|c||c|c||c||c||}\hline\hline
 \hspace{-0in}$r$-sfl RDT                                             & $\hat{\gamma}_{sq}$    & $\hat{\gamma}_{sq}^{(p)}$    &  $\hat{\p}_2$ & $\hat{\p}_1$     & $\hat{\q}_2$  & $\hat{\q}_1$ &  $\hat{\c}_2$    & $\alpha_c^{(r)}(\infty)$  \\ \hline\hline
$1$-sfl RDT                                      & $0.5$ & $0.5$ &  $0$  & $\rightarrow 1$   & $0$ & $\rightarrow 1$
 &  $\rightarrow 0$  & \bl{$\mathbf{2.3556}$} \\ \hline\hline
   $2$-sfl RDT                                      & $0.2235$  & $ 1.1186$  & $ 0.7855$ & $\rightarrow 1$ &  $0.5857$ & $\rightarrow 1$
 &  $3.3157$   & \bl{$\mathbf{2.3063}$}  \\ \hline\hline
  \end{tabular}
\label{tab:tab7}
\end{table}

\subsubsection{$r=3$ -- third level of lifting}
\label{sec:thirdleverf}

Analogously to  (\ref{eq:reluact65}) (and ultimately (\ref{eq:negprac19}) and (\ref{eq:tanhact35})), we first write
{\small\begin{align}\label{eq:tanhact65}
    \bar{\psi}_{rd}^{(d,3)}(\p,\q,\c,\gamma_{sq},\gamma_{sq}^{(p)})
  & =   \frac{1}{2}
(1-\p_2\q_2)\c_2+ \frac{1}{2}
(\p_2\q_2-\p_3\q_3)\c_3 -  \gamma_{sq}^{(p)} \nonumber \\
&\quad
-\Bigg(\Bigg. -\frac{1}{2\c_2} \log \lp \frac{2\gamma_{sq}^{(p)}-\c_2(1-\q_2)}{2\gamma_{sq}^{(p)}} \rp  -\frac{1}{2\c_3} \log \lp \frac{2\gamma_{sq}^{(p)}-\c_2(1-\q_2)-\c_3(\q_2-\q_3)}{2\gamma_{sq}^{(p)}-\c_2(1-\q_2)} \rp  \nonumber \\
& \quad +  \frac{\q_3}{2(2\gamma_{sq}^{(p)}-\c_2(1-\q_2)-\c_3(\q_2-\q_3))}   \Bigg.\Bigg)
 \nonumber \\
& \quad   + \gamma_{sq}
 -\frac{\alpha}{\c_3}\mE_{{\mathcal U}_4^{(j_w)}} \log\lp \mE_{{\mathcal U}_3^{(j_w)}} \lp \mE_{{\mathcal U}_2^{(j_w)}} e^{-\c_2\frac{z_i^{(3)}\lp \bar{\g}^{(4)}; \tanh\lp \q^{(net)}\rp\rp}{4\gamma_{sq}}}\rp^{\frac{\c_3}{\c_2}}\rp,
    \end{align}}
 where
 \begin{eqnarray}\label{eq:tanhact66}
  z_i^{(3)}\lp   \bar{\g}^{(4)};\f^{(2)}\lp \q^{(net)} \rp   \rp
 =
  z_i^{(3)}\lp   \bar{\g}^{(4)};\tanh\lp \q^{(net)} \rp   \rp
    &\rightarrow  &  \lp \max\lp  \bar{b}_2g_{f}^{(4,1)}+\bar{b}_3g_{f}^{(4,2)}+\bar{b}_4g_{f}^{(4,3)},0\rp \rp^2,\nonumber \\
 \end{eqnarray}
with $\bar{b}_2$, $\bar{b}_3$, and $\bar{b}_4$  as in (\ref{eq:reluact67}), and $b_2=\sqrt{1-\p_2}$, $b_3=\sqrt{\p_2-\p_3}$, and $b_4=\sqrt{\p_3}$ (as earlier, $g_f^{(4,1)}$ relates to the randomness of $\g^{(2)}$ (i.e., ${\mathcal U}_2$),
$g_f^{(4,2)}$ to the randomness of $\g^{(3)}$ (i.e., ${\mathcal U}_3$), and $g_f^{(4,3)}$ to the randomness of $\g^{(4)}$ (i.e., ${\mathcal U}_4$)).

\noindent \red{\textbf{\emph{(i) Handling $\mE_{\bar{\g}^{(4)}} \left \|  \frac{d\f^{(2)}\lp \g^{(x,3)} \rp }{d\g^{(x,3)}} \right \|_2^2$:}}} As in (\ref{eq:tanhact53a0}),
\begin{eqnarray}\label{eq:tanhact65a0}
\mE_{\bar{\g}^{(4)}} \left \|  \frac{d\f^{(2)}\lp \g^{(x,3)} \rp }{d\g^{(x,3)}} \right \|_2^2=
\mE_{\bar{\g}^{(4)}} \left \|  \frac{d\lp \g^{(x,3)} \rp^2 }{d\g^{(x,3)}} \right \|_2^2=0.4644 d.
\end{eqnarray}

\noindent \red{\textbf{\emph{(ii) Further specializing to $\f^{(2)}\lp \q^{(net)}\rp=\tanh\lp \q^{(net)}\rp$:}}} We start by observing
\begin{eqnarray}\label{eq:tanhact68}
\bar{p}_1 & \approx & 2.1533 \mE_{\g_1^{(4)}} \mE_{\g_1^{(3)}} \mE_{\g_1^{(2)}} \lp \f^{(2)}\lp \sum_{k=2}^{4}b_k \g_1^{(k)}\rp \rp^2=
2.1533 \mE_{\g_1^{(4)}}  \mE_{\g_1^{(3)}} \mE_{\g_1^{(2)}} \lp \tanh\lp \sum_{k=2}^{4}b_k \g_1^{(k)}\rp\rp^2 \nonumber \\
& = & 0.8490,\nonumber \\
    \end{eqnarray}
and
\begin{eqnarray}\label{eq:tanhact69}
\bar{p}_4 & \approx & 2.1533 \lp \mE_{\g_1^{(4)}}  \mE_{\g_1^{(3)}} \mE_{\g_1^{(2)}}  \f^{(2)}\lp \sum_{k=2}^{4}b_k \g_1^{(k)}\rp \rp^2 =
   2.1533 \lp \mE_{\g_1^{(4)}}  \mE_{\g_1^{(3)}} \mE_{\g_1^{(2)}}   \tanh\lp\sum_{k=2}^{4}b_k \g_1^{(k)}\rp \rp^2= 0. \nonumber \\
 \end{eqnarray}
Then one also has
\begin{eqnarray}\label{eq:tanhact70}
  \mE_{\g_1^{(3)}}  \mE_{\g_1^{(2)}}  \f^{(2)}\lp \sum_{k=2}^{4}b_k \g_1^{(k)}\rp
&  = &
  \mE_{\g_1^{(3)}}     \mE_{\g_1^{(2)}}   \tanh\lp \sum_{k=2}^{4}b_k \g_1^{(k)}\rp \nonumber \\
&  = &
    \mE_{\g_1^{(3)}}    \mE_{\g_1^{(2)}}   \tanh\lp \sqrt{1-\p_2} \g_1^{(2)} + \sqrt{\p_2-\p_3} \g_1^{(3)}+ \sqrt{\p_3} \g_1^{(4)} \rp,
  \end{eqnarray}
and
\begin{eqnarray}\label{eq:tanhact71}
\bar{p}_3  & \approx &  2.1533 \mE_{\g_1^{(4)}} \lp \mE_{\g_1^{(3)}} \mE_{\g_1^{(2)}}  \f^{(2)}\lp \sum_{k=2}^{4}b_k \g_1^{(k)}\rp  \rp^2
\nonumber \\
& = &
2.1533 \mE_{\g_1^{(4)}} \lp     \mE_{\g_1^{(3)}}    \mE_{\g_1^{(2)}}   \tanh\lp \sqrt{1-\p_2} \g_1^{(2)} + \sqrt{\p_2-\p_3} \g_1^{(3)}+ \sqrt{\p_3} \g_1^{(4)} \rp \rp^2
  \triangleq  \bar{p}_x (\p_3).
  \end{eqnarray}
One then immediately also has
\begin{eqnarray}\label{eq:tanhact72}
     \mE_{\g_1^{(2)}}  \f^{(2)}\lp \sum_{k=2}^{4}b_k \g_1^{(k)}\rp
&  = &
      \mE_{\g_1^{(2)}} \tanh  \lp \sum_{k=2}^{4}b_k \g_1^{(k)} \rp \nonumber \\
&  = &
      \mE_{\g_1^{(2)}}  \tanh \lp \sqrt{1-\p_2} \g_1^{(2)} + \sqrt{\p_2-\p_3} \g_1^{(3)}+ \sqrt{\p_3} \g_1^{(4)} \rp, \nonumber \\
  \end{eqnarray}
and
\begin{eqnarray}\label{eq:tanhact73}
\bar{p}_2 & \approx & 2.1533 \mE_{\g_1^{(4)}}  \mE_{\g_1^{(3)}} \lp \mE_{\g_1^{(2)}}  \f^{(2)}\lp \sum_{k=2}^{4}b_k \g_1^{(k)}\rp  \rp^2  =
\bar{p}_x(\p_2).
  \end{eqnarray}
The above then gives
\begin{eqnarray}\label{eq:tanhact74}
\bar{b}_2
& = &
\sqrt{\bar{p}_1-\bar{p}_2}
\nonumber \\
\bar{b}_3
& = &
\sqrt{\bar{p}_2-\bar{p}_3}
\nonumber \\
\bar{b}_4
& = &
\sqrt{\bar{p}_3-\bar{p}_4},
 \end{eqnarray}
where $\bar{p}_1$, $\bar{p}_2$, $\bar{p}_3$, and $\bar{p}_4$ are as in (\ref{eq:tanhact68}), (\ref{eq:tanhact69}), (\ref{eq:tanhact71}), and (\ref{eq:tanhact73}), respectively.

Analogously to (\ref{eq:reluact78}), we then write
\begin{align}\label{eq:tanhact78}
    \bar{\psi}_{rd}(\p,\q,\c,\gamma_{sq},\gamma_{sq}^{(p)})
 & =    \frac{1}{2}
(1-\p_2\q_2)\c_2+ \frac{1}{2}
(\p_2\q_2-\p_3\q_3)\c_3 -  \gamma_{sq}^{(p)}
-\Bigg(\Bigg. -\frac{1}{2\c_2} \log \lp \frac{2\gamma_{sq}^{(p)}-\c_2(1-\q_2)}{2\gamma_{sq}^{(p)}} \rp
 \nonumber \\
& \quad -\frac{1}{2\c_3} \log \lp \frac{2\gamma_{sq}^{(p)}-\c_2(1-\q_2)-\c_3(\q_2-\q_3)}{2\gamma_{sq}^{(p)}-\c_2(1-\q_2)} \rp
 \nonumber \\
& \quad +  \frac{\q_3}{2(2\gamma_{sq}^{(p)}-\c_2(1-\q_2)-\c_3(\q_2-\q_3))}   \Bigg.\Bigg)
 + \gamma_{sq}
 -\frac{\alpha}{\c_3}   \mE_{g_{f}^{(4,3)}} \log\lp \mE_{g_{f}^{(4,2)}} \lp f_{(zt)}^{(3,f)} \rp^{\frac{\c_3}{\c_2}}\rp,
    \end{align}
where $f_{(zt)}^{(3,f)} $ is as in (\ref{eq:reluact76}) with $\bar{b}_2$, $\bar{b}_3$, and $\bar{b}_4$ as in (\ref{eq:tanhact74}). Utilizing the closed form relations (\ref{eq:reluact80}), one solves the system in (\ref{eq:reluact79})  and denotes the concrete numerical values for all the considered parameters by $\hat{\p},\hat{\q},\hat{\c},\hat{\gamma}_{sq},\hat{\gamma}_{sq}^{(p)}$. Plugging these values in  $\bar{\psi}_{rd}^{(d,3)}(\hat{\p},\hat{\q},\hat{\c},\hat{\gamma}_{sq},\hat{\gamma}_{sq}^{(p)})=0$, allows to determine
 \begin{equation}\label{eq:tanhact81}
\hspace{-2in}(\mbox{\bl{\textbf{\emph{full} third level:}}}) \qquad \qquad  a_c^{(3,f)}(\infty) =  \lim_{d\rightarrow\infty} a_c^{(3,f)}(d)  \approx \bl{\mathbf{2.3058}}.
  \end{equation}

\noindent \underline{\textbf{\emph{Concrete numerical values:}}}  In  Table \ref{tab:tab8},  all the relevant concrete parameters values related to the third \emph{full} (3-sfl RDT) level of lifting are added as complements to the above $a_c^{(3,f)}(\infty)$. The corresponding quantities for the first and second \emph{full} (1-sfl RDT and 2-sfl RDT) levels  are shown in parallel as well.
\begin{table}[h]
\caption{$r$-sfl RDT parameters; \textbf{$\tanh$} activations  -- \emph{wide} treelike net capacity;  $\hat{\c}_1\rightarrow 1$; $d\rightarrow\infty$; $n\rightarrow\infty$}\vspace{.1in}
\centering
\def\arraystretch{1.2}
{\small
\begin{tabular}{||l||c|c||c|c|c||c|c|c||c|c||c||}\hline\hline
 \hspace{-0in}$r$-sfl RDT                                             & $\hat{\gamma}_{sq}$    & $\hat{\gamma}_{sq}^{(p)}$    &  $\hat{\p}_3$  &  $\hat{\p}_2$ & $\hat{\p}_1$    &  $\hat{\q}_3$   & $\hat{\q}_2$  & $\hat{\q}_1$ &  $\hat{\c}_3$ &  $\hat{\c}_2$    & $\alpha_c^{(r)}(\infty)$  \\ \hline\hline
$1$-sfl RDT                                      & $0.5$ & $0.5$ &  $0$  &  $0$  & $\rightarrow 1$  &  $0$   & $0$ & $\rightarrow 1$
 &  $0$  &  $\rightarrow 0$  & \bl{$\mathbf{2.3556}$} \\ \hline\hline
   $2$-sfl RDT                                     & $0.2235$  & $ 1.1186$   &  $0$ & $ 0.7855$ & $\rightarrow 1$  &  $0$  &  $0.5857$ & $\rightarrow 1$
 &  $0$  &  $3.3157$   & \bl{$\mathbf{2.3063}$}  \\ \hline\hline
   $3$-sfl RDT                                      & $0.1824$  & $1.3711$ &  $0.7685$ & $0.9660$ & $\rightarrow 1$ &  $0.5682$ &  $0.8479$ & $\rightarrow 1$
&  $2.8$ &  $7.2$   & \bl{$\mathbf{2.3058}$}  \\ \hline\hline
  \end{tabular}
  }
\label{tab:tab8}
\end{table}

\subsubsection{General $r$-th level of lifting}
\label{sec:rleverf}

As earlier, general $r$ lifting considerations between  (\ref{eq:reluact82})-(\ref{eq:reluact87}) are applicable again with $\bar{p}_x(\cdot)$ as in (\ref{eq:tanhact71}).

\subsection{Summary}
\label{sec:summary}

In  Table \ref{tab:tab9},  we summarize the above results obtained for different hidden layer activations.

\begin{table}[h]
\caption{\emph{Wide} treelike net capacity -- lifting mechanism progress for different activations;   $d\rightarrow\infty$; $n\rightarrow\infty$}\vspace{.1in}
\centering
\def\arraystretch{1.2}
{\small
\begin{tabular}{c||c|c|c|c}\hline\hline
  \hspace{.2in}   \textbf{Memory capacity} \hspace{.2in}   &  \multicolumn{4}{c}{\textbf{Activation}}                   \\
    \cline{2-5}
 \hspace{.0in} ($r$ -- lifting level)  &  ReLU & Quad & $\erf$ & $\tanh$  \\  \hline\hline
\hspace{.0in}$\alpha_c^{(1,f)}(\infty)$  ($1$-sfl RDT) \hspace{.0in}   & $\quad \mathbf{2.9339}\quad $ & $\quad \mathbf{4}\quad $ &  $\quad \mathbf{ 2.4514}\quad $  &  $\quad \mathbf{ 2.3556}\quad $    \\ \hline\hline
\hspace{.0in}$\alpha_c^{(2,f)}(\infty)$  ($2$-sfl RDT)    & $\mathbf{2.6643}$  & $\mathbf{3.3750}$   &  $\mathbf{2.3750}$ & $\mathbf{2.3063}$   \\ \hline\hline
\hspace{.0in}$\alpha_c^{(3,f)}(\infty)$  ($3$-sfl RDT)    & $\mathbf{2.6534}$  & $\mathbf{3.3669}$ &  $\mathbf{2.3744}$ & $\mathbf{2.3058}$   \\ \hline\hline
  \end{tabular}
  }
\label{tab:tab9}
\end{table}

The results from Table \ref{tab:tab9} are also visualized in Figure \ref{fig:fig4}. The benefit of the lifted RDT is fairly strong. Moreover, the lifting mechanism converges rather rapidly with relative improvement no better than $\sim 0.1\%$ achieved already on the third level.

\begin{figure}[h]
\centering
\centerline{\includegraphics[width=1\linewidth]{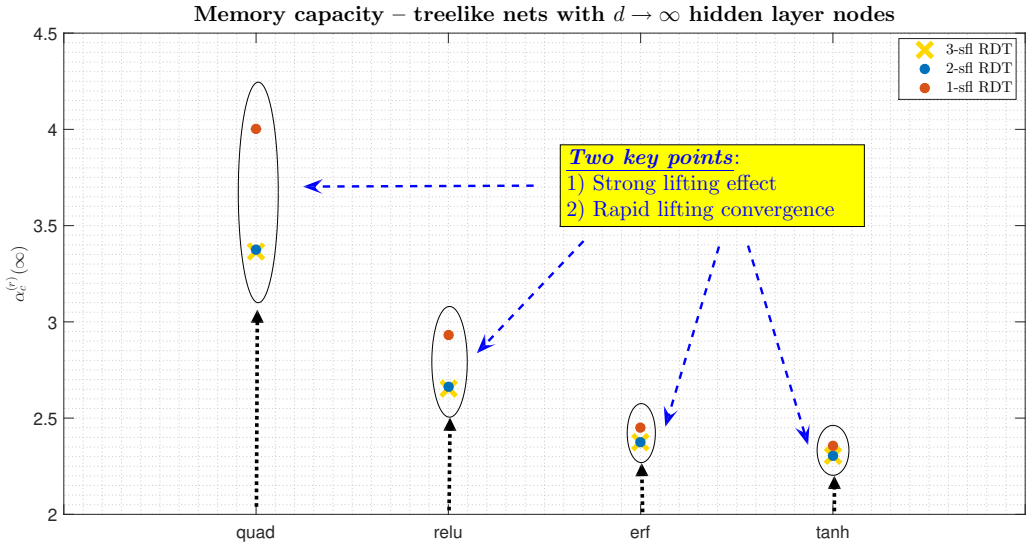}}
\caption{Memory capacity -- treelike nets with $d\rightarrow\infty$ hidden layer neurons; different activations}
\label{fig:fig4}
\end{figure}

\section{Conclusion}
\label{sec:conc}

We studied the memory capacity of the \emph{wide} treelike committee machines (TCM) neural networks with generic hidden layer activations. \cite{Stojnictcmspnncaprdt23,Stojnictcmspnncapliftedrdt23} recently showed that the Random Duality Theory (RDT) and its a \emph{partially lifted} (pl RDT) variant can be used to create very powerful frameworks for precise networks capacity analysis. The \emph{sign} activations considerations from \cite{Stojnictcmspnncaprdt23,Stojnictcmspnncapliftedrdt23} were then extended  to more general ones in \cite{Stojnictcmspnncapdiffactrdt23}, where particularly elegant results were uncovered for \emph{any} even number of the \emph{quadratically} and \emph{ReLU} activated hidden layer neurons, $d$. The machinery of  \cite{Stojnictcmspnncapdiffactrdt23} was designed to work for any type of activations. However, it often requires a significant amount of numerical work to become practically usable. Here, we consider very popular \emph{wide} hidden layer networks and uncover that a strong portion of the numerical difficulties magically disappears.

In particular, we employ recently developed \emph{fully lifted} (fl) RDT to characterize the \emph{wide} ($d\rightarrow \infty$) TCM nets capacity. We obtain explicit, closed form, capacity characterizations for generic hidden layer activations functions. Even though the obtained forms allow  significantly less computationally intensive evaluations, they ultimately still requite one to perform  a substantial residual numerical work to ensure that the whole machinery can indeed be made practically relevant and useful. We successfully conducted all of it. Moreover,  for four very famous activations, ReLU, quadratic, erf, and tanh, we uncovered that the whole lifting mechanism exhibits a remarkably fast convergence with the relative improvements no better than $\sim 0.1\%$ happening already on the 3-rd level of lifting. As an additional bonus, we also observe that the capacity characterizations obtained on the first and second level of lifting exactly match those obtained through the statistical physics replica theory methods in \cite{ZavPeh21} for the generic and in \cite{BalMalZech19} for the ReLU activations.

Various extensions are possible as well. We here discussed in details some of the most famous NN activations functions. Depending on the context of application, many other well known ones are of interest and can be handled as well. Also, more complex network architectures including both  multi-layered TCM and  FCM or PM based ones are as relevant. We will discuss all of these in detail in separate papers.

\begin{singlespace}
\bibliographystyle{plain}
\bibliography{nflgscompyxRefs}

\begin{thebibliography}{10}

\bibitem{ADHLW19}
S.~Arora, S.~S. Du, W.~Hu, Z.~Li, and R.~Wang.
\newblock Fine-grained analysis of optimiza\-tion and generalization for
  overparameterized two-layer neural networks.
\newblock 2019.
\newblock available online at
  {\small\bl{\url{http://arxiv.org/abs/1901.08584}}}.

\bibitem{BalMalZech19}
C.~Baldassi, E.~M. Malatesta, and R.~Zecchina.
\newblock Properties of the geometry of solutions and capacity of multilayer
  neural networks with rectified linear unit activations.
\newblock {\em Phys. Rev. Lett.}, 123:170602, October 2019.

\bibitem{BalVen87}
P.~Baldi and S.~Venkatesh.
\newblock Number od stable points for spin-glasses and neural networks of
  higher orders.
\newblock {\em Phys. Rev. Letters}, 58(9):913--916, Mar. 1987.

\bibitem{BHK90}
E.~Barkai, D.~Hansel, and I.~Kanter.
\newblock Statistical mechanics of a multilayered neural network.
\newblock {\em Phys. Rev. Lett.}, 65(18):2312--2315, Oct 1990.

\bibitem{BHS92}
E.~Barkai, D.~Hansel, and H.~Sompolinsky.
\newblock Broken symmetries in multilayered perceptrons.
\newblock {\em Phys. Rev. A}, 45(6):4146, March 1992.

\bibitem{BarKan91}
E.~Barkai and I.~Kanter.
\newblock Storage capacity of a multilayer neural network with binary weights.
\newblock {\em Europhys. Lett.}, 14(2):107, 1991.

\bibitem{Baum88}
E.~B. Baum.
\newblock On the capabilities of multilayer perceptrons.
\newblock {\em Journal of complexity}, 4(3):193--215, 1988.

\bibitem{Cameron60}
S.~H. Cameron.
\newblock Tech-report 60-600.
\newblock {\em Proceedings of the bionics symposium}, pages 197--212, 1960.
\newblock Wright air development division, {D}ayton, {O}hio.

\bibitem{Cover65}
T.~Cover.
\newblock Geomretrical and statistical properties of systems of linear
  inequalities with applications in pattern recognition.
\newblock {\em IEEE Transactions on Electronic Computers}, (EC-14):326--334,
  1965.

\bibitem{DT}
D.~Donoho and J.~Tanner.
\newblock Neighborliness of randomly-projected simplices in high dimensions.
\newblock {\em Proc. National Academy of Sciences}, 102(27):9452--9457, 2005.

\bibitem{DonTan09Univ}
D.~Donoho and J.~Tanner.
\newblock Observed universality of phase transitions in high-dimensional
  geometry, with implications for modern data analysis and signal processing.
\newblock {\em Phylosophical transactions of the royal society A: mathematical,
  physical and engineering sciences}, 367, November 2009.

\bibitem{DTbern}
D.~Donoho and J.~Tanner.
\newblock Counting the face of randomly projected hypercubes and orthants, with
  application.
\newblock {\em Discrete and Computational Geometry}, 43:522--541, 2010.

\bibitem{DuZhaiPoc18}
S.~S. Du, X.~Zhai, B.~Poczos, and A.~Singh.
\newblock Gradient descent provably optimizes over\-parameterized neural
  networks.
\newblock 2018.
\newblock available online at
  {\small\bl{\url{http://arxiv.org/abs/1810.02054}}}.

\bibitem{EKTVZ92}
A.~Engel, H.~M. Kohler, F.~Tschepke, H.~Vollmayr, and A.~Zippelius.
\newblock Storage capacity and learning algorithms for two-layer neural
  networks.
\newblock {\em Phys. Rev. A}, 45(10):7590, May 1992.

\bibitem{MitchDurb89}
R.~M.~Durbin G.~J.~Mitchison.
\newblock Bounds on the learning capacity of some multi-layer networks.
\newblock {\em Biological Cybernetics}, 60:345--365, 1989.

\bibitem{Gar88}
E.~Gardner.
\newblock The space of interactions in neural networks models.
\newblock {\em J. Phys. A: Math. Gen.}, 21:257--270, 1988.

\bibitem{GarDer88}
E.~Gardner and B.~Derrida.
\newblock Optimal storage properties of neural networks models.
\newblock {\em J. Phys. A: Math. Gen.}, 21:271--284, 1988.

\bibitem{GeWangZhao19}
R.~Ge, R.~Wang, and H.~Zhao.
\newblock Mildly overparametrized neural nets can memorize training data
  efficiently.
\newblock 2019.
\newblock available online at
  {\small\bl{\url{http://arxiv.org/abs/1909.11837}}}.

\bibitem{HardrtMa16}
M.~Hardt and T.~Ma.
\newblock Identity matters in deep learning.
\newblock 2016.
\newblock available online at
  {\small\bl{\url{http://arxiv.org/abs/1611.04231}}}.

\bibitem{GBHuang03}
G.~B. Huang.
\newblock Learning capability and storage capacity of two-hidden-layer
  feedforward networks.
\newblock {\em IEEE Transactions on Neural Networks}, 14(2):274--281, 2003.

\bibitem{JiTel19}
Z.~Ji and M.~Telgarsky.
\newblock Polylogarithmic width suffices for gradient descent to achieve
  arbitrarily small test error with shallow relu networks.
\newblock 2019.
\newblock available online at
  {\small\bl{\url{http://arxiv.org/abs/1909.12292}}}.

\bibitem{Joseph60}
R.~D. Joseph.
\newblock The number of orthants in $n$-space instersected by an
  $s$-dimensional subspace.
\newblock {\em Tech. memo 8, project {PARA}}, 1960.
\newblock Cornel aeronautical lab., Buffalo, N.Y.

\bibitem{LiLiang18}
Y.~Li and Y.~Liang.
\newblock Learning overparameterized neural networks via stochastic gradient
  descent on structured data.
\newblock {\em In Advances in Neural Information Processing Systems}, pages
  8157--8166, 2018.

\bibitem{MonZech95}
R.~Monasson and R.~Zecchina.
\newblock Weight space structure and internal representations: A direct
  approach to learning and generalization in multilayer neural networks.
\newblock {\em Phys. Rev. Lett.}, 75:2432, September 1995.

\bibitem{OymSol19}
S.~Oymak and M.~Soltanolkotabi.
\newblock Towards moderate overparameterization: global convergence guarantees
  for training shallow neural networks.
\newblock 2019.
\newblock available online at
  {\small\bl{\url{http://arxiv.org/abs/1902.04674}}}.

\bibitem{Schlafli}
L.~Schlafli.
\newblock {\em Gesammelte Mathematische AbhandLungen I}.
\newblock Basel, Switzerland: Verlag Birkhauser, 1950.

\bibitem{SongYang19}
Z.~Song and X.~Yang.
\newblock Quadratic suffices for over-parametrization via matrix {C}hernoff
  bound.
\newblock 2019.
\newblock available online at
  {\small\bl{\url{http://arxiv.org/abs/1906.03593}}}.

\bibitem{StojnicISIT2010binary}
M.~Stojnic.
\newblock Recovery thresholds for $\ell_1$ optimization in binary compressed
  sensing.
\newblock {\em ISIT, IEEE International Symposium on Information Theory}, pages
  1593 -- 1597, 13-18 June 2010.
\newblock Austin, TX.

\bibitem{StojnicGardGen13}
M.~Stojnic.
\newblock Another look at the {G}ardner problem.
\newblock 2013.
\newblock available online at \bl{\url{http://arxiv.org/abs/1306.3979}}.

\bibitem{StojnicLiftStrSec13}
M.~Stojnic.
\newblock Lifting $\ell_1$-optimization strong and sectional thresholds.
\newblock 2013.
\newblock available online at \bl{\url{http://arxiv.org/abs/1306.3770}}.

\bibitem{StojnicMoreSophHopBnds10}
M.~Stojnic.
\newblock Lifting/lowering {H}opfield models ground state energies.
\newblock 2013.
\newblock available online at \bl{\url{http://arxiv.org/abs/1306.3975}}.

\bibitem{StojnicGardSphNeg13}
M.~Stojnic.
\newblock Negative spherical perceptron.
\newblock 2013.
\newblock available online at \bl{\url{http://arxiv.org/abs/1306.3980}}.

\bibitem{StojnicGardSphErr13}
M.~Stojnic.
\newblock Spherical perceptron as a storage memory with limited errors.
\newblock 2013.
\newblock available online at \bl{\url{http://arxiv.org/abs/1306.3809}}.

\bibitem{Stojnicsflgscompyx23}
M.~Stojnic.
\newblock Bilinearly indexed random processes -- {\emph{stationarization}} of
  fully lifted interpolation.
\newblock 2023.
\newblock available online at \bl{\url{http://arxiv.org/abs/2311.18097}}.

\bibitem{Stojnicbinperflrdt23}
M.~Stojnic.
\newblock Binary perceptrons capacity via fully lifted random duality theory.
\newblock 2023.
\newblock available online at \bl{\url{http://arxiv.org/abs/2312.00073}}.

\bibitem{Stojnictcmspnncaprdt23}
M.~Stojnic.
\newblock Capacity of the treelike sign perceptrons neural networks with one
  hidden layer -- rdt based upper bounds.
\newblock 2023.
\newblock available online at \bl{\url{http://arxiv.org/abs/2312.08244}}.

\bibitem{Stojnicnegsphflrdt23}
M.~Stojnic.
\newblock Fl rdt based ultimate lowering of the negative spherical perceptron
  capacity.
\newblock 2023.
\newblock available online at \bl{\url{http://arxiv.org/abs/2312.16531}}.

\bibitem{Stojnicnflgscompyx23}
M.~Stojnic.
\newblock Fully lifted interpolating comparisons of bilinearly indexed random
  processes.
\newblock 2023.
\newblock available online at \bl{\url{http://arxiv.org/abs/2311.18092}}.

\bibitem{Stojnicflrdt23}
M.~Stojnic.
\newblock Fully lifted random duality theory.
\newblock 2023.
\newblock available online at \bl{\url{http://arxiv.org/abs/2312.00070}}.

\bibitem{Stojnictcmspnncapliftedrdt23}
M.~Stojnic.
\newblock {\emph{Lifted}} rdt based capacity analysis of the 1-hidden layer
  treelike \emph{sign} perceptrons neural networks.
\newblock 2023.
\newblock available online at \bl{\url{http://arxiv.org/abs/2312.08257}}.

\bibitem{Stojnichopflrdt23}
M.~Stojnic.
\newblock Studying {H}opfield models via fully lifted random duality theory.
\newblock 2023.
\newblock available online at \bl{\url{http://arxiv.org/abs/2312.00071}}.

\bibitem{Stojnictcmspnncapdiffactrdt23}
M.~Stojnic.
\newblock Fixed width treelike neural networks capacity analysis -- generic
  activations.
\newblock 2024.
\newblock available online at arxiv.

\bibitem{RuoyuSun19}
R.~Sun.
\newblock Optimization for deep learning: theory and algorithms.
\newblock 2019.
\newblock available online at
  {\small\bl{\url{http://arxiv.org/abs/1912.08957}}}.

\bibitem{Urban97}
R~Urbanczik.
\newblock Storage capacity of the fully-connected committee machine.
\newblock {\em J. Phys. A: Math. Gen.}, 30, 1997.

\bibitem{Ven86}
S.~Venkatesh.
\newblock Epsilon capacity of neural networks.
\newblock {\em Proc. Conf. on Neural Networks for Computing, Snowbird, UT},
  1986.

\bibitem{Vershynin20}
R.~Vershynin.
\newblock Memory capacity of neural networks with threshold and {R}e{LU}
  activations.
\newblock 2019.
\newblock available online at
  {\small\bl{\url{http://arxiv.org/abs/2001.06938}}}.

\bibitem{Wendel62}
J.~G. Wendel.
\newblock A problem in geometric probability.
\newblock {\em Mathematica Scandinavica}, 1:109--111, 1962.

\bibitem{Winder61}
R.~O. Winder.
\newblock Single stage threshold logic.
\newblock {\em Switching circuit theory and logical design}, pages 321--332,
  Sep. 1961.
\newblock AIEE Special publications S-134.

\bibitem{Winder}
R.~O. Winder.
\newblock {\em Threshold logic}.
\newblock Ph. D. dissertation, Princetoin University, 1962.

\bibitem{XiongKwonOh97}
Y.~Xiong and J.~H.~Oh C.~Kwon.
\newblock The storage capacity of a fully-connected committee machine.
\newblock {\em NIPS}, 1997.

\bibitem{Yama93}
M.~Yamasaki.
\newblock The lower bound of the capacity for a neural network with multiple
  hidden layers.
\newblock {\em In International Conference on Artificial Neural Networks},
  pages 546--549, 1993.

\bibitem{YunSuJad19}
C.~Yun, S.~Sra, and A.~Jadbabaie.
\newblock Small relu networks are powerful memorizers: a tight analysis of
  memorization capacity.
\newblock {\em In Advances in Neural Information Processing Systems}, pages
  15532--15543, 2019.

\bibitem{ZavPeh21}
J.~A. Zavatone-Veth and C.~Pehlevan.
\newblock Activation function dependence of the storage capacity of treelike
  neural networks.
\newblock {\em Phys. Rev. E}, 103:L020301, February 2021.

\bibitem{ZBHRV17}
C.~Zhang, S.~Bengio, M.~Hardt, B.~Recht, and O.~Vinyals.
\newblock Understanding deep learning requires rethinking generalization.
\newblock {\em ICLR}, 2017.

\bibitem{ZCZG18}
D.~Zou, Y.~Cao, D.~Zhou, and Q.~Gu.
\newblock Stochastic gradient descent optimizes over\-parameterized deep relu
  networks.
\newblock 2018.
\newblock available online at
  {\small\bl{\url{http://arxiv.org/abs/1811.08888}}}.

\end{thebibliography}
\end{singlespace}

\end{document}